\newcommand{\possessivecite}[1]{\citeauthor{#1}\textcolor{MidnightBlue}{'s}\ (\citeyear{#1})}
\renewcommand{\paragraph}[1]{\textbf{#1}}
\title{Approximately Equivariant Neural Processes}
\author{%
  Matthew Ashman\thanks{Equal contribution.}  \\
  University of Cambridge\\
  \texttt{mca39@cam.ac.uk} \\
  \And
  Cristiana Diaconu\footnotemark[1] \\
  University of Cambridge \\
  \texttt{cdd43@cam.ac.uk} \\
  \And
  Adrian Weller \\
  University of Cambridge \\
  The Alan Turing Institute \\
  \texttt{aw665@cam.ac.uk} \\
  \AND
  Wessel Bruinsma \\
  Microsoft Research AI for Science \\
  \texttt{wessel.p.bruinsma@gmail.com} \\
  \And
  Richard E.\ Turner \\
  University of Cambridge \\
  Microsoft Research AI for Science \\
  The Alan Turing Institute \\
  \texttt{ret23@cam.ac.uk} \\
}
\begin{document}

\maketitle

\begin{abstract}
    Equivariant deep learning architectures exploit symmetries in learning problems to improve the sample efficiency of neural-network-based models and their ability to generalise. However, when modelling real-world data, learning problems are often not \textit{exactly} equivariant, but only approximately. For example, when estimating the global temperature field from weather station observations, local topographical features like mountains break translation equivariance. 
    In these scenarios, it is desirable to construct architectures that can flexibly depart from exact equivariance in a data-driven way. Current approaches to achieving this cannot usually be applied out-of-the-box to any architecture and symmetry group. In this paper, we develop a general approach to achieving this using existing equivariant architectures.
    Our approach is agnostic to both the choice of symmetry group and model architecture, making it widely applicable. We consider the use of approximately equivariant architectures in neural processes (NPs), a popular family of meta-learning models. We demonstrate the effectiveness of our approach on a number of synthetic and real-world regression experiments, showing that approximately equivariant NP models can outperform both their non-equivariant and strictly equivariant counterparts.

\end{abstract}

\section{Introduction}
\label{sec:intro}
The development of equivariant deep learning architectures has spearheaded many advancements in machine learning, including CNNs \citep{lecun1989handwritten}, group equivariant CNNs \citep{cohen2016group,finzi2020generalizing}, transformers \citep{vaswani2017attention}, DeepSets \citep{zaheer2017deep}, and GNNs \citep{scarselli2008graph}. When appropriate, equivariances provide useful inductive biases that can drastically improve sample complexity \citep{mei2021learning} and generalisation capabilities \citep{elesedy2021provably,bulusu2021generalization,petrache2023approximation} by exploiting symmetries present in the data. Yet, real-world data are seldom strictly equivariant. As an example, consider modelling daily precipitation across space and time. Such data may be close to translation equivariant---and therefore translation equivariance serves as a useful inductive bias---however, it is clear that it is not strictly translation equivariant due to geographical and seasonal variations. More generally, whilst there are aspects of modelling problems which are universal and exhibit symmetries such as equivariance (e.g.\ atmospheric physics) there are often unknown local factors (e.g.\ precise local topography) which break these symmetries and which the models do not have access to. It is thus desirable for the model to be able to depart from strict equivariance when necessary; that is, to develop \emph{approximately equivariant} models.

A family of models that have benefited greatly from the use of equivariant deep learning architectures are neural processes \citep[NPs;][]{garnelo2018conditional,garnelo2018neural}. However, many of the problem domains that NPs are applied to exhibit only approximate equivariance. It is therefore desirable to build approximately equivariant NPs through the use of approximately equivariant deep learning architectures. Whilst there exist several approaches to constructing approximately equivariant architectures, they are limited to CNN- and MLP-based models \citep{wang2022approximately,van2022relaxing,finzi2021residual,romero2022learning}---which restricts their applicability to certain symmetry groups---and often require modifications to the loss function used to train the models \citep{finzi2021residual,kim2023regularizing}. As there exist NPs that utilise a variety of architectures, such as transformers and GNNs \citep{carr2019graph,nguyen2022transformer,kim2019attentive,feng2022latent}, these approaches are not sufficient. We address this shortcoming through the development of an approach to constructing approximately equivariant models that is both architecture and symmetry-group agnostic. Importantly, our approach can be realised without modifying the core architecture of existing equivariant models, enabling use in a variety of equivariant NPs such as the convolutional conditional NP \citep[ConvCNP;][]{gordon2019convolutional}, the equivariant CNP \citep[EquivCNP;][]{kawano2021group}, the steerable CNP \citep[SteerCNP;][]{holderrieth2021equivariant}, the translation equivariant transformer NP \citep[TE-TNP;][]{ashman2024translation}, and the relational CNP \citep[RCNP;][]{huang2023practical}.

We outline our core contributions as follows:
\begin{enumerate}[leftmargin=*]
    \item We demonstrate that under certain technical conditions, such as H\"older regularity and compactness, any non-equivariant mapping between function spaces can be approximated by an equivariant mapping with fixed additional inputs (\Cref{thm:main-result,thm:main-result-linear}). This provides insight into how to construct approximately equivariant models, and generalises several existing approaches to relaxing equivariant constraints.
    \item We apply this result and the insights it provides to construct approximately equivariant versions of several popular equivariant NPs. The modifications required are \emph{very simple, yet effective}: we demonstrate improved performance relative to both strictly equivariant and non-equivariant models on a number of spatio-temporal regression tasks.
\end{enumerate}

\section{Background}
\label{sec:background}
We consider the supervised learning setting, where $\mcX$, $\mcY$ denote the input and output spaces, and $(\bfx, \bfy) \in \mcX \times \mcY$ denotes an input-output pair. Let $\mcS = \bigcup_{N=0}^\infty (\mcX \times \mcY)^N$ be a collection of all finite data sets, which includes the empty set $\varnothing$. Let $\mcS_M = \left(\mcX \times \mcY\right)^M$ be the collection of $M$ input-output pairs and $\mcS_{\leq M} = \bigcup_{m=1}^M \mcS_m$ be the collection of at most $M$ pairs. We denote a context and target set with $\mcD_c,\ \mcD_t \in \mcS$, where $|\mcD_c| = N_c$, $|\mcD_t| = N_t$. Let $\bfX_c \in (\mcX)^{N_c}$, $\bfY_c \in (\mcY)^{N_c}$ be the inputs and corresponding outputs of $\mcD_c$, and let $\bfX_t \in (\mcX)^{N_t}$, $\bfY_t \in (\mcY)^{N_t}$ be defined analogously. We denote a single task as $\xi = (\mcD_c, \mcD_t) = ((\bfX_c, \bfY_c), (\bfX_t, \bfY_t))$. Let $\mcP(\mcX)$ denote the collection of $\mcY$-valued stochastic processes on $\mcX$. Let $\Theta$ denote the parameter space for some family of probability densities, e.g.\ means and variances for the space of all Gaussian densities.

\subsection{Neural Processes}
NPs \citep{garnelo2018conditional, garnelo2018neural} can be viewed as neural-network-based parametrisations of \emph{prediction maps} $\pi\colon \mcS \to \mcP(\mcX)$ from data sets $\mcS$ to predictions $\mcP(\mcX)$, where predictions are represented by $\mcY$-valued stochastic processes on $\mcX$ \citep{foong2020meta,Bruinsma:2022:Convolutional_Conditional_Neural_Processes}. Throughout, we denote the density of the finite-dimensional distribution of $\pi(\mcD)$ at inputs $\bfX$ by $p(\,\cdot\mid \bfX, \mcD)$.
%
In this work, we restrict our attention to conditional NPs \citep[CNPs;][]{garnelo2018conditional}, which only target marginal predictive distributions by assuming that the predictive densities factorise: $p(\bfY | \bfX, \mcD) = \prod_n p(\bfy_n | \bfx_n, \mcD)$. We denote all parameters of a CNP by $\omega$.
CNPs are trained in a meta-learning fashion, in which the expected predictive log-probability is maximised:
\begin{equation} \textstyle
    \label{eq:cnp-objective}
    \omega_{\text{ML}} = \argmax_{\omega} \mcL_{\text{ML}}(\omega)\quad \text{where} \quad \mcL_{\text{ML}}(\omega) = \mathbb{E}_{p(\xi)}\big[\sum_{n=1}^{N_t} \log p_\omega(\bfy_{t, n} | 
    \bfx_{t,n}, \mcD_c
    )\big].
\end{equation}
Here, the expectation is taken with respect to the distribution over tasks $p(\xi)$. In practice, we only have access to a finite number of tasks for training, so the expectation is approximated with an average over tasks.
The global maximum is achieved if and only if the model recovers the ground-truth marginals \citep[Proposition 3.26 by][]{Bruinsma:2022:Convolutional_Conditional_Neural_Processes}.

\subsection{Group Equivariance}
We consider equivariance with respect to transformations in some group $G$. For example, $G$ can be the group of translations or the group of rotations. Mathematically, a group $G$ is a set endowed with a binary operation $G\times G\to G$ (denoted as multiplication) such that
\begin{enumerate*}[label=(\roman*)]
    \item $(fg)h=f(gh)$ for all $f,g,h \in G$;
    \item there exists an identity element $e \in G$ such that $eg = ge = g$ for all $g \in G$; and
    \item every element $g \in G$ has an inverse $g^{-1} \in G$.
\end{enumerate*}
A $G$-space is a space $X$ for which there exists a function $G \times X \to X$ called a group action (again denoted by multiplication) such that
\begin{enumerate*}[label=(\roman*)]
    \item $ex = x$ for all $x \in X$; and
    \item $f(gx) = (fg)x$ for all $f,g \in G$ and $x \in X$.
\end{enumerate*}
The notion of group equivariance is used to describe mappings for which, when the input to the mapping is transformed by some $g\in G$, the output of the mapping is transformed equivalently. This is formalised in the following definition.
\begin{defi}[$G$-equivariance]
    Let $X$ and $Y$ be $G$-spaces. Call a mapping $\rho\colon X \rightarrow Y$ $G$-equivariant if $\rho(g x) = g \rho(x)$ for all $g\in G$ and $x\in X$.
\end{defi}

\subsection{Group-Equivariant Conditional Neural Processes}
The property of $G$-equivariance is particularly useful in NPs when modelling ground-truth stochastic processes that exhibit $G$-stationarity:
\begin{defi}[$G$-stationary stochastic process]
    Let $\mcX$ be a $G$-space.
    We say that a stochastic process $P\in \mcP(\mcX)$ is $G$-stationary if, for $f \sim P$ and all $g \in G$, $f(\,\cdot\,)$ is equal in distribution to $f(g\,\cdot\,)$.
\end{defi}
Let $P \in \mcP(\mcX)$ be a ground-truth stochastic process. For a dataset $\mcD\in \mathcal{S}$, define $\pi_P(\mcD)$ by integrating $P$ against some density $\pi^{\prime}_P(\mcD)$, such that $\mathrm{d}\pi_P(\mcD) = \pi^{\prime}_P(\mcD)\,\mathrm{d}P$. $\pi^{\prime}_P(\mcD)$ is the Radon-Nikodym derivative of the posterior stochastic process with respect to the prior; intuitively, $\pi^{\prime}_P(\mcD)(f)$ is proportional to the likelihood, $\pi^{\prime}_P(\mcD)(f) = p(\mcD\mid f) / p(\mcD)$, so it determines how the data is observed (e.g.\ under which noise?). Assume that $\pi^{\prime}_P(\varnothing) \propto 1$ so that $\pi_P(\varnothing) = P$. We say that $\pi^{\prime}_P$ is $G$-invariant if $\pi^{\prime}_P(g\mcD) \circ g = \pi^{\prime}_P(\mcD)$.
\begin{prop}[$G$-stationarity and $G$-equivariance]
    \label{prop:g-equivalence}
    The ground-truth stochastic process $P$ is  $G$-stationary and $\pi^{\prime}_P$ is $G$-invariant if, and only if, $\pi_{P}$ is $G$-equivariant.
\end{prop}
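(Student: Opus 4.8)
The plan is to first pin down every group action involved and then reduce the biconditional to the statement that two explicit measures on the function space coincide, which holds exactly when their base measures agree and their densities agree. Concretely, I would take the action of $g\in G$ on a function $f\colon\mcX\to\mcY$ to be $(gf)(x)=f(g^{-1}x)$, and write $T_g$ for the map $f\mapsto gf$; this induces the pushforward action $Q\mapsto (T_g)_*Q$ on $Q\in\mcP(\mcX)$, and one checks routinely that both are genuine left actions of $G$ (using $T_f T_g = T_{fg}$). Under these conventions, $G$-stationarity of $P$ is the statement $(T_g)_*P=P$ for all $g$, and $G$-invariance of $\pi'_P$ is the statement $\pi'_P(g\mcD)\circ T_g=\pi'_P(\mcD)$, i.e.\ $\pi'_P(g\mcD)(gf)=\pi'_P(\mcD)(f)$.

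The crux is a single change-of-variables identity. For any dataset $\mcD$, any $g$, and any measurable $A$,
\begin{equation}
    (T_g)_*\pi_P(\mcD)(A) = \int_{T_g^{-1}A}\pi'_P(\mcD)(f)\,\mathrm{d}P(f) = \int_A \pi'_P(\mcD)(g^{-1}h)\,\mathrm{d}\big((T_g)_*P\big)(h),
\end{equation}
so $(T_g)_*\pi_P(\mcD)$ is the measure with density $h\mapsto\pi'_P(\mcD)(g^{-1}h)$ relative to $(T_g)_*P$, whereas by definition $\pi_P(g\mcD)$ is the measure with density $\pi'_P(g\mcD)$ relative to $P$. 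Hence the equality $\pi_P(g\mcD)=(T_g)_*\pi_P(\mcD)$ is equivalent to the conjunction of ``$(T_g)_*P=P$'' and ``$\pi'_P(g\mcD)(h)=\pi'_P(\mcD)(g^{-1}h)$ for $P$-a.e.\ $h$''.

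For ($\Rightarrow$) I would assume $P$ is $G$-stationary and $\pi'_P$ is $G$-invariant: the base measures $(T_g)_*P$ and $P$ then coincide, and applying $G$-invariance with $g^{-1}$ and $g\mcD$ in place of $g$ and $\mcD$ gives $\pi'_P(\mcD)(g^{-1}h)=\pi'_P(g\mcD)(h)$, so the densities coincide too; by the identity above, $(T_g)_*\pi_P(\mcD)=\pi_P(g\mcD)$ for every $g$ and $\mcD$, i.e.\ $\pi_P$ is $G$-equivariant. For ($\Leftarrow$) I would first specialise equivariance to the empty dataset: since $g\varnothing=\varnothing$ and $\pi_P(\varnothing)=P$ by assumption, equivariance yields $P=(T_g)_*P$, so $P$ is $G$-stationary; the base measures being equal, the identity above shows equivariance forces the $P$-densities to agree, $\pi'_P(g\mcD)(h)=\pi'_P(\mcD)(g^{-1}h)$ for $P$-a.e.\ $h$, which rearranges to $\pi'_P(g\mcD)\circ T_g=\pi'_P(\mcD)$, the $G$-invariance of $\pi'_P$.

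I expect the only delicate step to be measure-theoretic hygiene rather than anything conceptual: Radon–Nikodym derivatives are defined only up to $P$-null sets, so the $\pi'_P$-invariance conclusion is an almost-everywhere statement for each fixed $g$, and the change-of-variables move in the displayed identity must be justified on the (possibly infinite-dimensional) path space supporting $P$. Granting these routine points, the argument is just a dictionary between ``two measures are equal'' and ``their base measures and their densities are equal''.
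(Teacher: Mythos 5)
Your proof is correct, and it is worth noting that it is genuinely more than what the paper provides: the paper's ``proof'' is a one-line citation to Theorem 2.1 of Ashman et al.\ (2024) with the remark that translations may be replaced by general group elements, whereas you give a self-contained argument. Your route---fixing the actions $(gf)(x)=f(g^{-1}x)$ and $Q\mapsto (T_g)_*Q$, then reducing equivariance of $\pi_P$ to simultaneous equality of base measures and of Radon--Nikodym densities via the pushforward change-of-variables identity---is exactly the right dictionary, and the two directions are handled soundly: in particular you correctly extract $G$-stationarity in the ($\Leftarrow$) direction by specialising to $\mcD=\varnothing$ and using $\pi_P(\varnothing)=P$, which is the one place the normalisation assumption on $\pi'_P(\varnothing)$ is genuinely needed. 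One small imprecision: the sentence asserting that, for a \emph{fixed} $\mcD$, the equality $\pi_P(g\mcD)=(T_g)_*\pi_P(\mcD)$ is \emph{equivalent} to the conjunction of $(T_g)_*P=P$ and agreement of the densities is too strong as stated---two measures with densities against different base measures can coincide without the base measures coinciding---but you never use the offending direction: in ($\Rightarrow$) you only need ``conjunction implies equality,'' and in ($\Leftarrow$) you first establish $(T_g)_*P=P$ from the empty dataset before invoking uniqueness of densities with respect to the now-common base measure $P$. With that phrasing tightened, and the measurability of $T_g$ on the path space granted as you note, the argument is complete.
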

\begin{proof}
    This is Thm 2.1 by \cite{ashman2024translation} with translations replaced by applications of $g \in G$.
\end{proof}
Thus, for data generated from a stochastic process that is approximately $G$-stationary, incorporating $G$-equivariance into NP approximations of the corresponding prediction map can serve as a useful inductive bias that can help generalisation and improve parameter efficiency. Intuitively, requiring a NP to be equivariant effectively ties parameters together, which significantly reduces the search space during optimisation, enabling better solutions to be found with fewer data. The improved generalisation capabilities of $G$-equivariant models is formalised by \cite{elesedy2021provably,bulusu2021generalization,petrache2023approximation}.


For a CNP, assume that every marginal $p(\,\cdot\,| \bfx, \mcD)$ is in some fixed parametric family with parameters $\theta(\bfx; \mcD)\in \Theta$. For example, $\theta$ could consist of the mean and variance for a Gaussian distribution. Let $C(\mcX, \Theta)$ denote the set of continuous functions $\mcX \rightarrow \Theta$. For a CNP, define the associated \emph{parameter map} $\Phi\colon \mcS \to C(\mcX, \Theta)$ by $\Phi(\mcD)(\bfx) = \theta(\bfx; \mcD)$. Intuitively, the parameter map $\Phi$ maps a dataset $\mcD$ to a function $\Phi(\mcD)$ giving the parameters $\Phi(\mcD)(\bfx)$ for every input $\bfx$. Assume that $\mcX$ is a $G$-space. We turn $\mcS$ and $C(\mcX, \Theta)$ into $G$-spaces by applying $g$ to the inputs: $g \mcD \in \mcS$ consists of the input--output pairs $(g \bfx_n, \bfy_n)$, and $g\theta(\bfx; \mcD) \in C(\mcX, \Theta)$ is defined by $g\theta(\bfx; \mcD) = \theta(g^{-1} \bfx; \mcD)$.

\begin{defi}[$G$-equivariant CNP]
    A CNP is $G$-equivariant if the associated parameter map $\Phi$ is $G$-equivariant:
    $\Phi(g \mcD) = g \Phi(\mcD)$ for all datasets $\mcD$.
\end{defi}

To parametrise a $G$-equivariant CNP, we must parametrise the associated parameter map $\Phi$.
Here we present a general construction by \citet{kawano2021group}.
\begin{thm}[Representation of $G$-equivariant CNPs, Theorem 2 by \citet{kawano2021group}]
\label{thm:equivariant_np}
    Let $\mcY \subseteq \R^{D}$ be compact.  Consider an appropriate collection $\mcS^{\prime}_{\leq M} \subseteq \mcS_{\leq M}$. 
    Then a function $\Phi\colon \mcS^{\prime}_{\leq M} \rightarrow C(\mcX, \Theta)$ is continuous, permutation-invariant, and $G$-equivariant if and only if it is of the form
    \begin{equation} \textstyle
    \label{eq:equivariant_np_construction}
        \Phi(\mcD) = \rho(f_{\text{enc}}(\mcD))\quad \text{where}\quad f_{\text{enc}}((\bfx_1, \bfy_1), \ldots, (\bfx_m, \bfy_m)) = \sum_{i=1}^m \phi(\bfy_i) \psi(\,\cdot\,, \bfx_i)
    \end{equation}
    for some continuous $\phi\colon \mcY \rightarrow \R^{2D}$,
    an appropriate $G$-invariant positive-definite kernel $\psi\colon \mcX^2 \rightarrow \R$ (i.e.\ $\psi(\bfx_n, \bfx_m) = \psi(g\bfx_n, g\bfx_m)$ for all $g \in G$),
    and some continuous and $G$-equivariant $\rho\colon \mbbH \rightarrow C(\mcX, \Theta)$ with $\mbbH$ an appropriate $G$-invariant space of functions (i.e.\ closed under $g\in G$).
\end{thm}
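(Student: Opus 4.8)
The ``if'' direction is a direct verification. Permutation invariance of $\Phi$ holds because $f_{\text{enc}}$ is a sum over the data points. Continuity follows by composing the continuous maps $\phi$, $\psi$ (finite sum) and $\rho$, using the natural topology on bounded multisets on $\mcS^{\prime}_{\leq M}$ and compactness of $\mcY$. For $G$-equivariance one computes $f_{\text{enc}}(g\mcD)(\bfx) = \sum_i \phi(\bfy_i)\,\psi(\bfx, g\bfx_i)$, which by $G$-invariance of $\psi$ equals $\sum_i \phi(\bfy_i)\,\psi(g^{-1}\bfx, \bfx_i) = (g\, f_{\text{enc}}(\mcD))(\bfx)$; hence $\Phi(g\mcD) = \rho(f_{\text{enc}}(g\mcD)) = \rho(g\, f_{\text{enc}}(\mcD)) = g\,\rho(f_{\text{enc}}(\mcD)) = g\,\Phi(\mcD)$, using $G$-equivariance of $\rho$.

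The substance is the ``only if'' direction, which I would prove following the template of DeepSets-style representation theorems \citep{zaheer2017deep} in their functional/convolutional form \citep{gordon2019convolutional,Bruinsma:2022:Convolutional_Conditional_Neural_Processes}. The plan has three steps. \emph{(i) Build a faithful functional embedding.} Choose $\phi\colon \mcY \to \R^{2D}$ so that, together with a strictly positive-definite $G$-invariant kernel $\psi$, the map $E(\mcD) = \sum_i \phi(\bfy_i)\,\psi(\,\cdot\,, \bfx_i)$ is injective on $\mcS^{\prime}_{\leq M}$ modulo permutations. Concretely, ``density channels'' (constant components of $\phi$) recover the number and locations of the $\bfx_i$ via strict positive-definiteness, while the remaining ``data channels'' recover the $\bfy_i$; the factor $2D$ accommodates per-coordinate observation structure, and $\mcS^{\prime}_{\leq M}$ is restricted precisely so that this inversion is possible (e.g.\ ruling out coincident inputs). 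A $G$-invariant $\psi$ can be obtained by averaging a base kernel over $G$ when $G$ is compact, or by working in a $G$-invariant RKHS more generally. Compactness of $\mcY$ (and of $\mcS^{\prime}_{\leq M}$, which I would include among the ``appropriate'' hypotheses) makes $E$ continuous and, being a continuous injection from a compact space into a Hausdorff space, a homeomorphism onto its image $\mbbH_0 := E(\mcS^{\prime}_{\leq M})$. One also checks $E$ is $G$-equivariant exactly as in the ``if'' direction.

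\emph{(ii) Factor $\Phi$.} Since $E$ is injective, set $\rho_0 := \Phi \circ E^{-1}\colon \mbbH_0 \to C(\mcX,\Theta)$; it is well defined precisely because $\Phi$ is permutation invariant, continuous because $E^{-1}$ is, and $G$-equivariant since $\rho_0(g h) = \Phi(E^{-1}(g h)) = \Phi(g E^{-1}(h)) = g\,\Phi(E^{-1}(h)) = g\,\rho_0(h)$ using $G$-equivariance of $E$ and $\Phi$. \emph{(iii) Extend $\rho_0$ to $\mbbH$.} Take the $G$-invariant closure $\mbbH$ of the span of $\mbbH_0$ in the ambient RKHS; extend $\rho_0$ to a continuous map on $\mbbH$ by a Tietze/Dugundji argument (available because $\mbbH_0$ is closed and $\Theta$, hence $C(\mcX,\Theta)$ pointwise, is convex in a normed space); and symmetrise, replacing the extension $\tilde\rho$ by $h \mapsto \int_G g^{-1}\tilde\rho(g h)\,\mathrm{d}\mu(g)$ for the Haar measure when $G$ is compact. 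The symmetrised map is $G$-equivariant and agrees with $\rho_0$ on $\mbbH_0$ because $\rho_0$ was already equivariant there, yielding the claimed $\rho$.

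The main obstacle is step (iii) for \emph{non-compact} $G$ (e.g.\ translations), where Haar-averaging is unavailable: one must instead argue that the equivariant structure on $\mbbH_0$ together with the choice of $\mbbH$ as a $G$-invariant function space forces a canonical continuous $G$-equivariant extension, or restrict the construction of $\mbbH$ so that every element arises from $\mbbH_0$ via a group action and a limit, on which $\rho$ is then determined by continuity and equivariance. A secondary obstacle is the injectivity/homeomorphism claim in step (i): pinning down the ``appropriate'' conditions on $\mcS^{\prime}_{\leq M}$, $\phi$, and $\psi$ so that $E$ is simultaneously injective, continuous, proper, and $G$-equivariant — this is where compactness of $\mcY$, strict positive-definiteness of $\psi$, and the restriction to bounded multisets with well-separated inputs are all used.
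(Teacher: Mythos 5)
The paper does not actually prove this statement: it is imported verbatim as Theorem 2 of \citet{kawano2021group}, so there is no in-paper proof to compare against. Judged on its own terms, your sketch follows the same ConvDeepSets-style strategy used in the cited source and in \citet{gordon2019convolutional} -- verify the ``if'' direction directly, build an injective $G$-equivariant functional embedding $E$, factor $\Phi = \rho_0 \circ E^{-1}$ on the image, and extend -- so the approach is the right one. Two remarks on where your acknowledged obstacles actually land. First, the extension problem in your step (iii) is largely dissolved by the way the theorem is phrased: $\mbbH$ is only required to be ``an appropriate $G$-invariant space,'' so one may take $\mbbH$ to be the closure of the $G$-orbit of $E(\mcS^{\prime}_{\leq M})$, on which $\rho$ is already determined by equivariance and continuity of $\rho_0$; no Tietze/Dugundji or Haar-averaging step is needed, which is also why the non-compact (translation) case goes through. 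Second, the injectivity of $E$ is where the real content sits, and your remark that coincident inputs must be excluded understates it slightly: the standard construction handles repeated inputs by taking $\phi$ to contain enough ``moment'' channels to recover the multiset of outputs at each location (this is what fixes the output dimension of $\phi$, here $2D$, and is exactly what the ``appropriate collection $\mcS^{\prime}_{\leq M}$'' hypothesis is calibrated to). With those two points absorbed into the ``appropriate'' qualifiers, your outline is a faithful reconstruction of the cited proof.
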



\Cref{thm:equivariant_np} naturally gives rise to architectures which can be deconstructed into two components: an \emph{encoder} and a \emph{decoder}. The encoder, $f_{\text{enc}}\colon \mcS \rightarrow \mcZ$, maps datasets to some embedding space $\mcZ$. The decoder, $\rho\colon \mcZ \rightarrow C(\mcX,\Theta)$, takes this representation and maps to a function that gives the parameters of the CNP's predictive distributions: $p(\bfy | \bfx, \mcD) = p(\bfy | \rho(f_{\text{enc}}(\mcD))(\bfx))$ where $\rho(f_{\text{enc}}(\mcD))( \bfx) \in \Theta$.
For simplicity, we often view the decoder as a function $\mcZ \times \mcX \rightarrow \Theta$ and more simply write $\rho(f_{\text{enc}}(\mcD), \bfx)$.
In \cref{thm:equivariant_np}, both the encoder $f_{\text{enc}}$ and decoder $\rho$ are $G$-equivariant.
Many neural process architectures are of this form,
including the ConvCNP \citep{gordon2019convolutional}, the EquivCNP \citep{kawano2021group}, the RCNP \citep{huang2023practical}, and the TE-TNP \citep{ashman2024translation}.\footnote{We note that the form of the embedded dataset $f_{\text{enc}}(\mcD)$ differs slightly in the RCP and TE-TNP. However, in both cases the form in \Cref{eq:equivariant_np_construction} can be recovered as special cases. We discuss this more in \Cref{app:equivariant_nps}.} We discuss the specifics of each of these architectures in \Cref{app:equivariant_nps}.

\section{Equivariant Decomposition of Non-Equivariant Functions}
\label{sec:equivariant_decomposition}
In this section, we demonstrate that, subject to regularity conditions, any \emph{non-equivariant} operator between function spaces can be constructed as, or approximated by, an \emph{equivariant} mapping with additional, fixed functions as input. These results motivate a simple construction for approximately equivariant models, which we use to construct approximately equivariant NPs in \Cref{sec:aenp}.
We first illustrate the proof technique by proving the result for linear operators (\cref{thm:main-result-linear}) and then extend the result to nonlinear operators (\cref{thm:main-result}).

\paragraph{Setup.}
Let $(\mathbb{H}, \langle\,\cdot\,,\,\cdot\,\rangle)$ be a Hilbert space of functions $\mathcal{X} \to \mathbb{R}$.
Let $G$ be a group acting linearly on $\mathbb{H}$ from the left.
For every $g \in G$ and $f \in \mathbb{H}$, applying $g$ to $f$ gives another function: $gf \in \mathbb{H}$.
By linearity, for $f_1, f_2 \in \mathbb{H}$, $g(f_1 + f_2) = g f_1 + g f_2$.
Assume that $\mathbb{H}$ is separable and that $\mathbb{H}$ is $G$-invariant, meaning that, for all $f_1, f_2 \in \mathbb{H}$ and $g \in G$, $\langle g f_1, g f_2 \rangle = \langle f_1, f_2 \rangle$. If $\mathbb{H} = L^2(\mathcal{X})$ and $\mathcal{X}$ is a separable metric space, then $\mathbb{H}$ is also separable \citep[Theorem 4.13]{brezis2011functional}.
Let $B$ be the collection of bounded linear operators on $\mathbb{H}$.
We say that an operator $T \in B$ is \emph{of finite rank} if the dimensionality of the range of $T$ is finite. Moreover, we say that an operator $T \in B$ is \emph{compact} if the image of every bounded set is relatively compact.
Let $(e_i)_{i\ge1}$ be an orthonormal basis for $\mathbb{H}$.
Define $P_n = \sum_{i=1}^n e_i \langle e_i, \,\cdot\, \rangle$.
The following proposition is well known:
\begin{prop}[Finite-rank approximation of compact operators; e.g., Corollary 6.2 by \citet{brezis2011functional}.] \label{prop:finite-rank-approx}
    Let $T \in B$.
    Then $T$ is compact if and only if there exists a sequence of operators of finite rank $(T_n)_{n \ge 1} \subseteq B$ such that $\| T - T_n \| \to 0$.
    In particular, one may take $T_n = P_n T$, so
    $T$ is a compact operator if and only if $\|T - P_n T\| \to 0$.
\end{prop}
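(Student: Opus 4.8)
The plan is to reduce the two equivalences to two facts: \emph{(a)} every finite-rank operator is compact and the set of compact operators is closed in $B$ under the operator norm; and \emph{(b)} if $T \in B$ is compact, then $\|T - P_n T\| \to 0$. Fact (a) yields the ``if'' direction of the first equivalence, since a norm-limit of finite-rank (hence compact) operators is compact; and, because each $P_n T$ has range inside $\mathrm{span}(e_1,\dots,e_n)$ and is therefore of finite rank, (a) also gives the ``if'' direction of the ``in particular'' statement. Fact (b), applied with the explicit choice $T_n = P_n T$, supplies the ``only if'' direction of both equivalences simultaneously. So it suffices to establish (a) and (b).

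For (a): a finite-rank operator maps the closed unit ball into a bounded subset of a finite-dimensional subspace, which is relatively compact by Heine--Borel, so it is compact. For closedness, suppose compact operators $(T_n)$ satisfy $\|T - T_n\| \to 0$ and let $(x_k)$ be bounded with $\|x_k\| \le R$. For each fixed $n$, $(T_n x_k)_k$ lies in a relatively compact set, so it has a convergent subsequence; a diagonal extraction produces a single subsequence, still written $(x_k)$, along which $T_n x_k$ converges for every $n$. Then $\|T x_k - T x_\ell\| \le 2R\,\|T - T_n\| + \|T_n x_k - T_n x_\ell\|$, and choosing $n$ large, then $k,\ell$ large, shows $(T x_k)$ is Cauchy; hence $T$ sends bounded sets to relatively compact sets.

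For (b): I would first record two elementary facts. Since $P_n$ is an orthogonal projection, so is $I - P_n$, whence $\|I - P_n\| \le 1$; and for each fixed $y \in \mathbb{H}$, $P_n y = \sum_{i=1}^{n} \langle e_i, y\rangle e_i \to y$ because $(e_i)_{i \ge 1}$ is an orthonormal basis, so $\|(I - P_n) y\| \to 0$. Now argue by contradiction: if $\|(I - P_n) T\| \not\to 0$, there are $\varepsilon > 0$, indices $n_1 < n_2 < \cdots$, and unit vectors $x_k$ with $\|(I - P_{n_k}) T x_k\| \ge \varepsilon$. By compactness of $T$, after passing to a further subsequence we may assume $T x_k \to y$ for some $y \in \mathbb{H}$. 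Then, using $\|I - P_{n_k}\| \le 1$,
\[
\varepsilon \le \|(I - P_{n_k}) T x_k\| \le \|(I - P_{n_k}) y\| + \|T x_k - y\|,
\]
and letting $k \to \infty$ the right-hand side tends to $0$ (the first term since $n_k \to \infty$, the second by the subsequence choice), a contradiction.

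The main obstacle is direction (b): one must combine the compactness of $T$ (to extract a norm-convergent subsequence of $(T x_k)$), the uniform bound $\|I - P_n\| \le 1$, and the pointwise convergence $P_n y \to y$ in exactly the right order, with careful bookkeeping of subsequences. No deep machinery is needed beyond these; separability of $\mathbb{H}$ enters only to guarantee the countable orthonormal basis $(e_i)$ that defines $P_n$.
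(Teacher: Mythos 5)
The paper does not actually prove this proposition; it is quoted as a standard fact with a pointer to Corollary 6.2 of \citet{brezis2011functional}, so there is no in-paper argument to compare against. Your proof is correct and self-contained: part (a) is the standard ``compact operators are norm-closed and contain the finite-rank operators'' argument, and part (b) is the standard fact that the projections $P_n$ converge to the identity uniformly on relatively compact sets, deployed via the usual contradiction/subsequence argument --- this is essentially the textbook proof behind the cited corollary. One cosmetic point: since the operator norm $\|(I-P_{n_k})T\|$ is a supremum that need not be attained, the unit vectors $x_k$ should be chosen with $\|(I-P_{n_k})Tx_k\| \ge \varepsilon/2$ (say), which changes nothing in the contradiction.
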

Intuitively, compactness of an operator implies that its inputs and outputs can be well approximated with a finite-dimensional basis. The following new result shows that every compact $T \in B$ can be approximated with a $G$-equivariant function with additional fixed inputs:
$T \approx E_n(\,\cdot,\,t_1, \ldots, t_{2n})$ 
where $E_n$ is $G$-equivariant and $t_1,\ldots,t_{2n}$ are the additional fixed inputs.
\begin{thm}[Approximation of non-equivariant linear operators.]
\label{thm:main-result-linear}
    Let $T \in B$.
    Assume that $T$ is compact.
    Then there exists a sequence of continuous nonlinear operators $E_n \colon \mathbb{H}^{1 + 2n} \to \mathbb{H}$, $n \ge 1$, and a sequence of functions $(t_n)_{n \ge 1} \subseteq \mathbb{H}$ such that every $E_n$ is $G$-equivariant,
    \begin{equation}
        E_n(g f_1, g f_2, \ldots, g f_{2n + 1}) = g E_n(f_1, f_2, \ldots, f_{2n + 1}) \quad \text{for all $g \in G$ and $f_1, \ldots, f_{2n + 1} \in \mathbb{H}$},
    \end{equation}
    and $\| T - E_n(\,\cdot\,, t_1, \ldots, t_{2n}) \| \to 0$.
\end{thm}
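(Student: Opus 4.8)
The idea is to combine the finite-rank approximation of \Cref{prop:finite-rank-approx} with a simple algebraic rewriting that exposes the $G$-equivariant structure. By \Cref{prop:finite-rank-approx}, $\|T - P_n T\| \to 0$, where $P_n = \sum_{i=1}^n e_i \langle e_i, \,\cdot\,\rangle$. Since $T \in B$ is bounded it has a bounded adjoint $T^{\ast}$, and for every $f \in \mathbb{H}$,
\[
    P_n T f \;=\; \sum_{i=1}^n e_i \langle e_i, T f\rangle \;=\; \sum_{i=1}^n e_i \langle T^{\ast} e_i, f\rangle .
\]
Thus $P_n T$ is completely determined by the $2n$ fixed functions $T^{\ast} e_1, e_1, \ldots, T^{\ast} e_n, e_n \in \mathbb{H}$, arranged as \emph{query--value} pairs.

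Next I would take as the equivariant family the operators $E_n \colon \mathbb{H}^{1+2n} \to \mathbb{H}$ defined by $E_n(f_0, s_1, s_1', \ldots, s_n, s_n') = \sum_{i=1}^n s_i' \langle s_i, f_0\rangle$. This map is jointly continuous on all of $\mathbb{H}^{1+2n}$ (from continuity of the inner product together with the elementary bound $\|s'\langle s,f\rangle - \tilde s'\langle \tilde s,\tilde f\rangle\| \le \|s'\|\,|\langle s,f\rangle-\langle\tilde s,\tilde f\rangle| + \|s'-\tilde s'\|\,|\langle\tilde s,\tilde f\rangle|$ applied along a convergent sequence), it is linear in its first slot but not jointly linear (hence ``nonlinear'' in the sense of the statement), and it is $G$-equivariant: since $G$ acts linearly and $\langle\,\cdot\,,\,\cdot\,\rangle$ is $G$-invariant,
\[
    E_n(g f_0, g s_1, g s_1', \ldots) = \sum_{i=1}^n (g s_i')\langle g s_i, g f_0\rangle = \sum_{i=1}^n (g s_i')\langle s_i, f_0\rangle = g\sum_{i=1}^n s_i'\langle s_i, f_0\rangle = g E_n(f_0, s_1, s_1', \ldots).
\]

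Finally I would fix a single sequence $(t_n)_{n\ge1}\subseteq\mathbb{H}$ by $t_{2i-1} = T^{\ast}e_i$ and $t_{2i} = e_i$; this is consistent across all levels because $E_n$ consumes only $t_1,\ldots,t_{2n}$. Then $E_n(\,\cdot\,, t_1,\ldots,t_{2n}) = \sum_{i=1}^n e_i\langle T^{\ast}e_i,\,\cdot\,\rangle = P_n T$, so $\|T - E_n(\,\cdot\,,t_1,\ldots,t_{2n})\| = \|T - P_n T\| \to 0$ by \Cref{prop:finite-rank-approx}. I do not expect a genuine obstacle: the only points requiring care are that one fixed sequence works uniformly in $n$ (rather than a fresh set of inputs per level) and that $E_n$ is continuous off bounded sets, both routine. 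The new ingredient — the reason this linear case is isolated before \Cref{thm:main-result} — is the observation that projecting onto a fixed orthonormal basis and passing to the adjoint converts a non-equivariant finite-rank operator into a $G$-equivariant attention-like sum fed a fixed collection of functions; \Cref{thm:main-result} will then replace this linear skeleton by a nonlinear one using the same equivariance mechanism.
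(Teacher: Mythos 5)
Your proposal is correct and is essentially identical to the paper's own proof: both rewrite $P_nTf=\sum_{i=1}^n e_i\langle T^*e_i,f\rangle$, define the same attention-like equivariant operator $E_n$ fed the fixed inputs $T^*e_i$ and $e_i$, and conclude via \Cref{prop:finite-rank-approx}. The extra remarks on joint continuity and on using one consistent sequence $(t_n)$ across all $n$ are sound but routine, as you note.
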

\begin{proof}
    Observe that $P_n T f
        = \sum_{i=1}^n e_i \langle
            T^* e_i,
            f
            \rangle
        = \sum_{i=1}^n e_i \langle
            \tau_i,
            f
            \rangle$,
    with $\tau_i = T^* e_i \in \mathbb{H}$. Define the continuous nonlinear operators $E_n\colon \mathbb{H}^{1 + 2n} \to \mathbb{H}$ as $
        E_n(f, \tau_1, e_1, \ldots, \tau_n, e_n)
        = \sum_{i=1}^n e_i \langle
            \tau_i,
            f
            \rangle$.
    By $G$-invariance of $\mathbb{H}$, these operators are $G$-equivariant:
    \begin{equation} \textstyle
    E_n(g f, g \tau_1, g e_1, \ldots)
        = \sum_{i=1}^n g e_i \langle
            g \tau_i,
            g f
            \rangle
        = g \sum_{i=1}^n e_i \langle
            \tau_i,
            f
            \rangle = g E_n(f, \tau_1,e_1, \ldots \,).
    \end{equation}
    Since $T$ is compact, $\|T - P_n T\| \to 0$ (\cref{prop:finite-rank-approx}), so $\| T - E_n(\,\cdot\,, \tau_1, \ldots, \tau_n, e_1, \ldots, e_n) \| \to 0$, which proves the result.
\end{proof}

As an example, consider $G$ to be the group of translations such that $E_n$ is translation equivariant. Translation-equivariant mappings between function spaces can be approximated with a CNN \citep{kumagai2020universal}. Therefore, \cref{thm:main-result-linear} gives that
$
    T \approx \operatorname{CNN}(\,\cdot\,, t_1, \ldots, t_k)
$
where $t_1,\ldots,t_k$ are additional fixed inputs that are given as additional channels to the CNN and can be treated as model parameters to be optimised. These additional inputs in the CNN break translation equivariance, because they are not translated whenever the original input is translated. The number $k$ of such inputs roughly determines to what extent equivariance is broken: the larger $k$, the more non-equivariant the approximation becomes. This example holds true for any CNN architecture provided it can approximate any translation-equivariant operator.

\renewcommand{\ss}[1]{_{\text{#1}}}
\Cref{thm:main-result-linear} is not exactly what we set out to achieve: we wish to construct approximately equivariant mappings, not use equivariant mappings to approximate non-equivariant models. However, instead of applying \cref{thm:main-result-linear} directly to $T$, consider the decomposition $T = T\ss{equiv} + T\ss{non-equiv}$ where $T\ss{equiv}$ is in some sense the best ``equivariant approximation'' of $T$ and $T\ss{non-equiv}$ is the residual. Then, if we approximate $T\ss{non-equiv}$ with $E\ss{non-equiv}$ using \cref{thm:main-result-linear}, and approximate $T\ss{equiv}$ with some $G$-equivariant mapping $E\ss{equiv}$ directly, we have
$
    T \approx
        E\ss{equiv}
        + E\ss{non-equiv}(\,\cdot\,, t_1, \ldots, t_k)
$.
If $k = 0$, this approximation roughly recovers $E\ss{equiv}$, the best equivariant approximation of $T$; and, if $k$ is increased, the equivariance starts to break and the approximation starts to wholly approximate $T$. Specifically, in \cref{thm:main-result-linear}, $k=2n$ determines the dimensionality of the range of the approximation $E_n$. Therefore, a small $k$ means that that one would deviate from $T\ss{equiv}$ in only a few degrees of freedom. The idea that $k$ can be used to control the degree to which our approximation is $G$-equivariant inspires our training procedure in \cref{sec:aenp}, where, with some probability, we set the additional inputs to zero. This forces the model to learn the `best equivariant approximation', so that the number of additional fixed inputs has the desired influence of controlling only the flexibility of the non-equivariant component.

We now extend the result to any continuous, possibly nonlinear operator $T\colon \mbbH \rightarrow \mbbH$. For $T\in B$, it is true that $T$ is compact if and only if $\|T - P_n T P_n\| \to 0$ (\cref{prop:equivalent-condition} in \cref{app:proof}).
In generalising \cref{thm:main-result-linear} to nonlinear operators, we shall use this equivalent condition. Roughly speaking, $\|T - P_n T P_n\| \to 0$ says that both the domain and range of $T$ admit a finite-dimensional approximation, and the proof then proceeds by  discretising these finite-dimensional approximations.

\begin{restatable}[Approximation of non-equivariant operators.]{thm}{main}
    \label{thm:main-result}
    Let $T\colon\mathbb{H} \to \mathbb{H}$ be a continuous, possibly nonlinear operator.
    Assume that $\|T - P_n T P_n\| \to 0$, and that $T$ is $(c,\alpha)$-H\"older for $c,\alpha >0$, in the following sense:
    \begin{equation}
        \|T(u) - T(v)\| \le c \|u - v\|^\alpha\quad \text{for all $u,v \in \mathbb{H}$.}
    \end{equation}
    Moreover, assume that the orthonormal basis $(e_i)_{i \ge 1}$ is chosen such that, for every $n \in \mathbb{N}$ and $g \in G$, $\operatorname{span}\,\{e_1, \ldots, e_n\} = \operatorname{span}\,\{g e_1, \ldots, g e_n\}$, meaning that subspaces spanned by finitely many basis elements are invariant under the group action ($\ast$).
    Let $M > 0$.
    Then there exists a sequence $(k_n)_{n \ge 1} \subseteq \mathbb{N}$, a sequence of continuous nonlinear operators $E_n \colon \mathbb{H}^{1 + k_n} \to \mathbb{H}$, $n \ge 1$, and a sequence of functions $(t_n)_{n \ge 1} \subseteq \mathbb{H}$ such that every $E_n$ is $G$-equivariant,
    \begin{equation}
        E_n(g f_1, \ldots, g f_{1 + k_n}) = g E_n(f_1, \ldots, f_{1 + k_n}) \quad \text{for all $g \in G$ and $f_1, \ldots, f_{1 + k_n} \in \mathbb{H}$},
    \end{equation}
    and
    \begin{equation}
        \sup_{u \in \mathbb{H}: \|u\| \le M} \| T(u) - E_n(u, t_1, \ldots, t_{k_n}) \| \to 0.
    \end{equation}
    If assumption ($\ast$) does not hold, then the conclusion holds with $E_n(\,\cdot\,, t_1, \ldots, t_{k_n})$ replaced by $E_n(P_n \,\cdot\,, t_1, \ldots, t_{k_n})$. We provide a proof in \Cref{app:proof}.
\end{restatable}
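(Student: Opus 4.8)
The plan is to mirror the argument used for the linear case (\cref{thm:main-result-linear}), but now allowing the ``coefficients'' of the approximating operator to be nonlinear functions of the input. The hypothesis $\|T - P_n T P_n\| \to 0$ reduces the problem to realising the doubly-truncated operator $P_n T P_n$, up to an error that vanishes uniformly on $\{u : \|u\| \le M\}$, by a $G$-equivariant operator with finitely many fixed additional inputs. Using that $(e_i)_{i\ge1}$ is orthonormal, one writes
\begin{equation}
P_n T P_n u = \sum_{i=1}^{n} e_i \, \langle e_i,\, T(P_n u)\rangle ,
\end{equation}
and the key observation is that the scalar $\langle e_i, T(P_n u)\rangle$ depends on $u$ only through the coordinates $a_j := \langle e_j, u\rangle$, $j=1,\dots,n$: it equals $h_i(a_1,\dots,a_n)$ for the continuous map $h_i\colon \R^n \to \R$, $h_i(a) = \langle e_i,\, T(\textstyle\sum_{j=1}^n a_j e_j)\rangle$, whose continuity follows from continuity of $T$ and whose modulus of continuity is controlled by the $(c,\alpha)$-H\"older bound.

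Next I would set $k_n := n$, take the fixed inputs $t_i := e_i$, and define $E_n\colon \mathbb{H}^{1+n}\to\mathbb{H}$ by
\begin{equation}
E_n(f_0, f_1, \dots, f_n) \;=\; \sum_{i=1}^{n} f_i \; h_i\big(\langle f_1, f_0\rangle, \dots, \langle f_n, f_0\rangle\big).
\end{equation}
Three checks then finish the proof. First, \emph{continuity} of $E_n$: it is a composition of $(f_0,\dots,f_n)\mapsto(\langle f_j,f_0\rangle)_{j}$, the continuous $h_i$, and $(c,f_1,\dots,f_n)\mapsto\sum_i c_i f_i$. Second, \emph{$G$-equivariance}, which uses only linearity of the action and $G$-invariance of the inner product: since $\langle g f_j, g f_0\rangle = \langle f_j, f_0\rangle$, the scalar arguments of the $h_i$ are unchanged under the simultaneous action, so $E_n(g f_0,\dots,g f_n) = \sum_i (g f_i)\, h_i(\langle f_1,f_0\rangle,\dots) = g\sum_i f_i\, h_i(\langle f_1,f_0\rangle,\dots) = g\,E_n(f_0,\dots,f_n)$. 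Third, \emph{approximation}: substituting the fixed inputs recovers $E_n(u, e_1,\dots,e_n) = \sum_i e_i\,\langle e_i, T(P_n u)\rangle = P_n T P_n u$ exactly, so $\sup_{\|u\|\le M}\|T(u) - E_n(u,t_1,\dots,t_n)\| = \sup_{\|u\|\le M}\|T(u) - P_n T P_n u\| \to 0$ by hypothesis.

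I expect the proof in \cref{app:proof} to follow this skeleton but, to keep the construction close to concrete equivariant architectures, to replace each black-box coefficient map $h_i$ by a surrogate assembled from finitely many elementary pieces: one covers the ball $\{a\in\R^n:\|a\|\le M\}$ by an $\varepsilon$-net and interpolates with a partition of unity, at which point the H\"older hypothesis is exactly what converts net fineness into a uniform error bound (now with $k_n\to\infty$), and assumption $(\ast)$ --- invariance of $\operatorname{span}\{e_1,\dots,e_n\}$ under $G$, equivalently $P_n g = g P_n$ for all $g$ --- is what allows the basis-dependent surrogate to be inserted without destroying $G$-equivariance. When $(\ast)$ fails the projection cannot be absorbed into the equivariant map, so it is moved outside, replacing $E_n(\,\cdot\,,t_1,\dots,t_{k_n})$ by $E_n(P_n\,\cdot\,,t_1,\dots,t_{k_n})$. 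The main obstacle is precisely this reconciliation: the coefficient functions $\langle e_i, T(P_n\,\cdot\,)\rangle$ are intrinsically non-equivariant, and the mechanism that fixes this is feeding the $e_i$ as inputs so that they become functions of the $G$-invariant scalars $\langle e_i,u\rangle$, after which any continuous post-processing is automatically equivariant; carrying this through a finitely parametrised surrogate while keeping the error uniform over $\|u\|\le M$ (where $\|u - P_n u\|$ does not vanish uniformly) is where the regularity hypotheses earn their place.
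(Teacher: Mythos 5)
Your proof is correct, and it takes a genuinely different route from the paper's. You realise $P_n T P_n$ \emph{exactly} as $E_n(u,e_1,\dots,e_n)$ by baking the coordinate maps $h_i(a)=\langle e_i, T(\sum_j a_j e_j)\rangle$ into the definition of $E_n$; joint equivariance follows from linearity of the action and $G$-invariance of the inner product, and the error reduces to $\sup_{\|u\|\le M}\|T(u)-(P_nTP_n)(u)\|\to 0$. This gives $k_n=n$, and---as you half-observe---it needs neither the H\"older hypothesis nor assumption $(\ast)$, since the projection is absorbed into the inner products against the fixed inputs $e_j$. The paper instead does essentially what you predict in your final paragraph: it covers $\{P_n u:\|u\|\le M\}$ with a finite grid $A$ of mesh $h$, defines a kernel-interpolation operator $E(u,a_1,t_1,\dots)=\sum_i k(u,a_i)t_i\big/\sum_i k(u,a_i)$ with $k(u,v)=\tilde k(\|u-v\|)$ supported on $\|u-v\|<h$, and sets $t_i=(P_nTP_n)(a_i)$; the H\"older condition converts mesh size into the uniform bound $ch^\alpha$, assumption $(\ast)$ is used only to make $P_n$ equivariant so the pre-projection can be absorbed, and $k_n=2|A|$ grows faster than exponentially in $n$. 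The trade-off is this: your $E_n$ is bespoke to $T$ (the maps $h_i$ encode $T$ entirely), whereas the paper's $E$ is a universal interpolator with all information about $T$ concentrated in the fixed inputs $t_i$---the property that actually underwrites the paper's recipe of a generic equivariant architecture plus learnable symmetry-breaking inputs in \cref{sec:aenp}. Since the theorem statement does not require $E_n$ to be independent of $T$, your argument proves it as stated, and in fact shows the H\"older assumption and condition $(\ast)$ are dispensable for the literal claim, with $k_n$ linear in $n$; but it would not serve the paper's downstream purpose as directly.
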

Condition $(\ast)$ says that the orthonormal basis $(e_i)_{i \ge 1}$ must be chosen in a way such that applying the group action does not ``introduce higher basis elements''.
Note that only one such basis needs to exist for the result to hold.
An important example where this condition holds is $\mathbb{H} = L^2(\mathbb{S}^1)$ where $\mathbb{S}^1=\mathbb{R}/\mathbb{Z}$ is the one-dimensional torus ($[0, 1]$ with endpoints identified); $G$ the one-dimensional translation group; and $(e_i)_{i \ge 1}$ the Fourier basis: $e_k(x) = e^{i 2 \pi k x}$.
Then $e_k(x - \tau) = e^{i 2 \pi k \tau} e_k(x) \propto e_k(x)$, so $\operatorname{span}\, \{e_k(\,\cdot\, - \tau) \} = \operatorname{span}\, \{ e_k \}$.
This example shows that the result holds for translation equivariance, which is the symmetry that we will primarily consider in the experiments.
Importantly, \Cref{thm:main-result} requires $\|T - P_n T P_n\| \to 0$ to hold.
For this example, this condition roughly means that the dependence of $T$ on high-frequency basis elements goes to zero as the frequency increases.

In \cref{thm:main-result}, the sequence $(k_n)_{n \ge 1}$ determines how many additional fixed inputs are required to obtain a good approximation.
For linear $T$ (\cref{thm:main-result-linear}), $k_n$ grows linearly in $n$.
In the nonlinear case, $k_n$ may grow faster than linear in $n$, so more basis functions may be required.
We leave it to future work to more accurately identify how the growth of $k_n$ in $n$ depends on $T$.
A promising idea is to consider $\|Tgf - gTf\|$ as a measure of how equivariant a mapping is \citep{wang2022approximately}.


\subsection{Approximately Equivariant Neural Processes}
\label{sec:aenp}
\Cref{thm:main-result} provides a general construction of non-equivariant functions from equivariant functions, lending insight into how approximately equivariant neural processes can be constructed.
Consider a $G$-equivariant CNP with $G$-equivariant encoder $f_{\text{enc}}$ and decoder $\rho$.
The construction that we consider in this paper is to insert additional fixed inputs into the decoder $\rho$:
\begin{equation} \label{eq:approx_equivariant_np_construction} \textstyle
    p(\bfY|\bfX, \mcD)
    = \prod_{n} p(\bfy_n | \bfx_n,\mcD)
    = \prod_{n} p(\bfy_n | \bfx_n, \rho(f_{\text{enc}}(\mcD), t_1, \ldots, t_B))
\end{equation}
where the decoder $\rho\colon \mbbH^{1 + B} \rightarrow C(\mcX, \Theta)$ now takes in the dataset embedding $f_{\text{enc}}(\mcD)$ as well as $B$ additional fixed inputs $(t_b)_{b=1}^B \subseteq \mbbH$.
These become additional model parameters and break $G$-equivariance of the decoder.
The more are included, the more non-equivariant the decoder becomes.
Crucially, \cref{thm:main-result} shows that including sufficiently many additional inputs eventually recovers any non-equivariant decoder.
Conversely, by only including a few of them, the decoder  deviates from $G$-equivariance in only a few degrees of freedom.
Hence, the number $B$ of additional fixed inputs determines to which extent the decoder can become non-equivariant.

There are a myriad of ways in which $(t_b)_{b=1}^B$ can be incorporated into existing architectures for $\rho$.
If $\rho$ is a CNN or a $G$-equivariant CNN,
the additional inputs become additional channels, which can be either concatenated or added to the dataset embedding. We use the latter approach in our implementation.
For more general $\rho$, such as that used in the TE-TNP, we can employ effective alternative choices, as discussed in \Cref{app:equivariant_nps}. \Cref{thm:main-result} also intimately connects to positional embeddings in transformer-based LLMs \citep{vaswani2017attention} and vision transformers \citep[ViT;][]{dosovitskiy2021an}. These architectures consist of stacked multi-head self-attention (MHSA) layers, which are permutation equivariant with respect to the input tokens. However, after tokenising individual words or pixel values into tokens, the underlying transformer which processes these tokens should not be equivariant with respect to the permutations of words or pixels, as their position is important to the overall structure. Following \Cref{thm:main-result}, we can add word- or pixel-position-specific inputs to break this equivariance. This corresponds exactly to the usual positional embeddings that are regularly used. This connection between breaking equivariance and positional encodings was also noted by \cite{lim2023positional}, where they interpret several popular positional encodings as group representations that can help incorporate approximate equivariance.

\paragraph{Recovering equivariance out-of-distribution.}
We stress that equivariance is crucial for models to generalise beyond the training distribution---this is the `shared' component that is inherent to the system we are modelling. Whilst the non-equivariant component is able to learn local symmetry-breaking features that are revealed with sufficient data, these features do not reveal themselves outside the training domain. 
To obtain optimal generalisation performance, we desire that the model ignores the non-equivariant component outside of the training domain and instead reverts to equivariant predictions.
We achieve this in the following way:
(i) During training, set $t_b = 0$ entirely with some fixed probability.
This allows the model to learn and produce predictions for the equivariant component of the underlying system wherever the fixed additional inputs are zero.
(ii) Forcefully set the additional fixed inputs to zero outside the training domain: $t_b(\bfx) = 0$ for $\bfx$ not close to any training data.\footnote{For example, the rectangle covering the maximum longitude and latitude of geographical locations seen during training of an environmental model.}
This forces the model to revert to equivariant predictions outside the training domain, which should substantially improve the model's ability to generalise.

Our approach to recovering equivariance out-of-distribution also connects to \possessivecite{wang2022data} notion of $\epsilon$-approximate equivariance.
Let $\pi\ss{AE-NP}$ be the learned neural process, and let $\pi\ss{E-NP}$ be the neural process obtained by setting $t_b = 0$ in $\pi\ss{AE-NP}$.
If the predictions of $\pi\ss{AE-NP}$ are not too different from those of $\pi\ss{E-NP}$, because the fixed additional inputs only affect the predictions in a limited way, then $\pi\ss{AE-NP}$ is $\epsilon$-approximately equivariant in the sense of \cite{wang2022data}.
See \cref{app:approximate-equivariance}.

\section{Related Work}
\label{sec:related}
\paragraph{Group equivariance and equivariant neural processes.}
Group equivariant deep learning architectures are ubiquitous in machine learning,
including CNNs \citep{lecun1989handwritten}, group equivariant CNNs \citep{cohen2016group}, transformers \citep{vaswani2017attention,lee2019set} and GNNs \citep{scarselli2008graph}. 
This is testament to the usefulness of incorporating inductive biases into models, with a significant body of research demonstrating improved sample complexity and generalisation capabilities \citep{mei2021learning,elesedy2021provably,bulusu2021generalization,zhu2021understanding}. There exist a number of NP models which realise these benefits. Notably, the ConvCNP \citep{gordon2019convolutional}, RCNP \citep{huang2023practical}, and TE-TNP \citep{ashman2024translation} all build translation equivariance into NPs through different architecture choices. More general equivariances have been considered by both the EquivCNP \citep{kawano2021group} and SteerCNPs \citep{holderrieth2021equivariant}, both of which consider architectures similar to the ConvCNP.

\paragraph{Approximate group equivariance.}
Two methods similar to ours are those of \citet{wang2022approximately} and \citet{van2022relaxing}, who develop approximately equivariant architectures for group equivariant and steerable CNNs \citep{cohen2016group,cohen2016steerable}. We demonstrate in \Cref{app:equivalence_approximate_equivariance} that both approaches are specific examples of our more general approach. \citet{finzi2021residual} and \citet{kim2023regularizing} obtain approximate equivariance through a modification to the loss function such that the equivariant subspace of linear layers is favoured. These methods are less flexible than our approach, which can be applied to \emph{any equivariant architecture}. \citet{romero2022learning} only enforce strict equivariance for specific elements of the symmetry group considered. Their approach hinges on the ability to construct and sample from probability distributions over elements of the groups, which both restricts their applicability and complicates their implementation. 
An orthogonal approach to imbuing models with approximate equivariance is through data augmentation. Data augmentation is trivial to implement; however, previous work \citep{wang2022data} has demonstrated that both equivariant and approximately equivariant models achieve better generalisation bounds than data augmentation. \cite{kim2023learning} propose a similar approach to data augmentation, replacing a uniform average over group transformations with an expectation over an input-dependent probability distribution. Implementation of this distribution is involved, and must be hand-crafted for each symmetry group.

\section{Experiments}
\label{sec:experiments}
In this section, we evaluate the performance of a number of approximately equivariant NPs derived from existing strictly equivariant NPs in modelling both synthetic and real-world data. We provide detailed descriptions of the architectures and datasets used in \Cref{app:experiment_details}.\footnote{An implementation of our models can be found at \href{https://github.com/cambridge-mlg/tetnp/tree/aenp}{cambridge-mlg/aenp}.} Throughout, we postfix to the name of each model the group it is equivariant with respect to, with the postfix $\smash{\widetilde{G}}$ denoting approximate equivariance with respect to group $G$. We shall also omit reference to the dimension when denoting a symmetry group (e.g.\ $T(n)$ becomes $T$). In all experiments, we compare the performance of three TNP-based models \citep{ashman2024translation}: non-equivariant, $T$, and $\smash{\widetilde{T}}$; three ConvCNP-based models \citep{gordon2019convolutional}: $T$, $\smash{\widetilde{T}}$ using the approach described in \Cref{sec:aenp}, and $\smash{\widetilde{T}}$ using the relaxed CNN approach of \citet{wang2022approximately}; and two EquivCNP-based models \citep{kawano2021group}: $E$ and $\smash{\widetilde{E}}$. For experiments involving a large number of datapoints, we replace the TNP-based models with pseudo-token TNP-based models [PT-TNP; \citealt{ashman2024translation}].

\subsection{Synthetic 1-D Regression With the Gibbs Kernel}
We begin with a synthetic 1-D regression task with datasets drawn from a Gaussian process (GP) with the Gibbs kernel \citep{gibbs1998bayesian}. The Gibbs kernel similar to the squared exponential kernel, except the lengthscale $\ell(x)$ is a function of position $x$. The non-stationarity of the kernel implies that the predictive map is not translation equivariant, hence we expect an improvement of approximately equivariant NPs with respect to their equivariant counterparts.
We construct each dataset by first sampling a change point, either side of which the GP lengthscale is either small ($\ell(x) = 0.1$) or large ($\ell(x) = 4.0$). The range from which the context and target points are sampled is itself randomly sampled, so that the change point is not always present in the data. We sample $N_c \sim \mcU\{1, 64\}$ and the number of target points is set as $N_t = 128$. See \Cref{app:experiment_details} for a complete description.

We evaluate the log-likelihood on both the in-distribution (ID) training domain and on an out-of-distribution (OOD) setting in which the test domain is far away from the change point. \Cref{tab:gp-results} presents the results. We observe that the approximate equivariant models are able to: 1) recover the performance of the non-equivariant TNP within the non-equivariant ID regime; and 2) generalise as well as the equivariant models when tested OOD.
We provide an illustrative comparison of the predictive distributions for each model in \Cref{fig:gp_regression_plot}. More examples can be found in \Cref{app:experiment_details}. When transitioning from the low-lengthscale to the high-lengthscale region, the equivariant predictive distributions behave as though they are in the low-lengthscale region. This is due to the ambiguity as to whether the high-lengthscale region has been entered. In contrast, the approximately equivariant models are able to learn that a change point always exists at $x=0$, resolving this ambiguity.
\begin{table}
    \centering
     \caption{Average test log-likelihoods (\textcolor{OliveGreen}{$\*\uparrow$}) for the synthetic 1-D GP and 2-D smoke experiments. For the 1-D dataset we used the regular TNP, while for the 2-D experiment we used the PT-TNP. 
     Results are grouped together by model class. Best in-class result is bolded.}
     \small
    \label{tab:gp-results}
    \begin{tabular}{rccc}
        \toprule
        & \multicolumn{2}{c}{1-D GP} & \multicolumn{1}{c}{2-D Smoke} \\
         \cmidrule{2-4} 
        Model & ID Log-lik. (\textcolor{OliveGreen}{$\*\uparrow$}) & OOD Log-lik. (\textcolor{OliveGreen}{$\*\uparrow$}) & Log-lik. (\textcolor{OliveGreen}{$\*\uparrow$})\\
        \midrule
         TNP & $\mathbf{-0.406 \pm 0.004}$ & $-1.3734 \pm 0.002$ & $4.299 \pm 0.008$\\
         TNP ($T$) & $-0.500 \pm 0.004$ & $\mathbf{-0.430 \pm 0.007}$ & $4.181 \pm 0.011$  \\ 
         TNP ($\smash{\widetilde{T}}$) & $\mathbf{-0.406 \pm 0.004}$ & $\mathbf{-0.424 \pm 0.007}$ & $\mathbf{4.715 \pm 0.010}$\\ 
        \midrule
        ConvCNP ($T$) & $-0.499 \pm 0.004$ & $-0.442 \pm 0.006$ & $3.637 \pm 0.041$\\
        ConvCNP ($\smash{\widetilde{T}}$) & $-0.430 \pm 0.004$ & $\mathbf{-0.412 \pm 0.006}$ &  $3.827 \pm 0.011$\\
        RelaxedConvCNP ($\smash{\widetilde{T}}$) & $\mathbf{-0.419 \pm 0.004}$ & $\mathbf{-0.405 \pm 0.007}$ & $\mathbf{4.006 \pm 0.010}$\\
        \midrule
        EquivCNP ($E$) & $-0.504 \pm 0.004$ & $-0.443 \pm 0.007$ & $4.194 \pm 0.015$\\
        EquivCNP ($\smash{\widetilde{E}}$) & $\mathbf{-0.435 \pm 0.004}$ &  $\mathbf{-0.413 \pm 0.007}$ & $\mathbf{4.233 \pm 0.012}$\\
        \bottomrule
    \end{tabular}
\end{table}
\begin{figure}
    \centering
    \begin{subfigure}[t]
    {1\textwidth}
        \centering\includegraphics[width=0.6\textwidth,trim={5 430 10 10},clip]{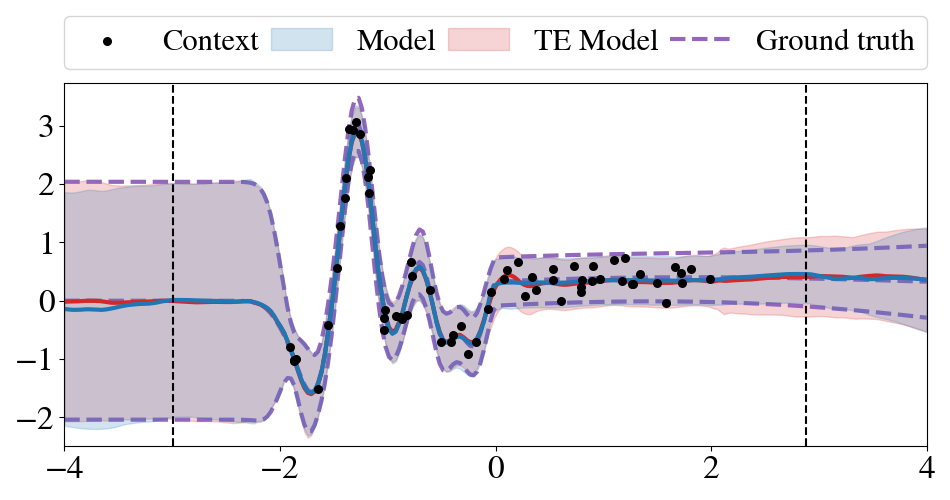}
    \end{subfigure}
    \hfill
    \begin{subfigure}[t]
    {0.245\textwidth}
        \includegraphics[width=\textwidth,trim={20 20 15 12},clip]{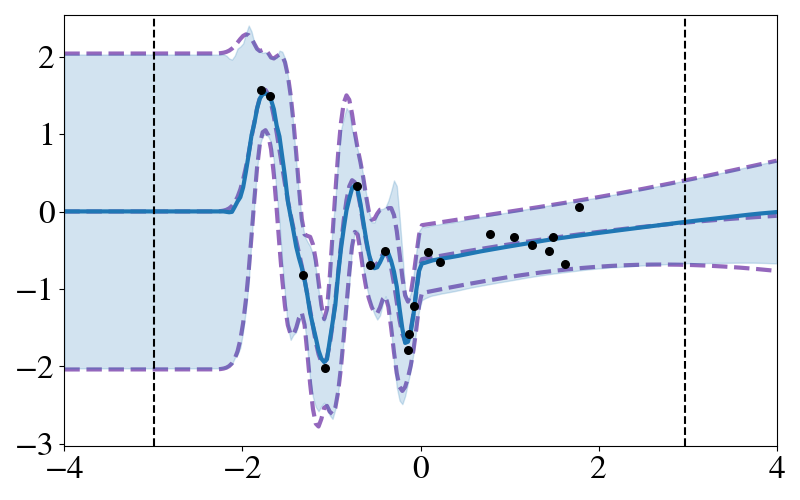}
        \label{fig:gp_lbanp}
        \vspace{-13pt}
        \caption{TNP.}
    \end{subfigure}
    \hfill
    \begin{subfigure}[t]{0.245\textwidth}
        \includegraphics[width=\textwidth,trim={20 20 15 12},clip]{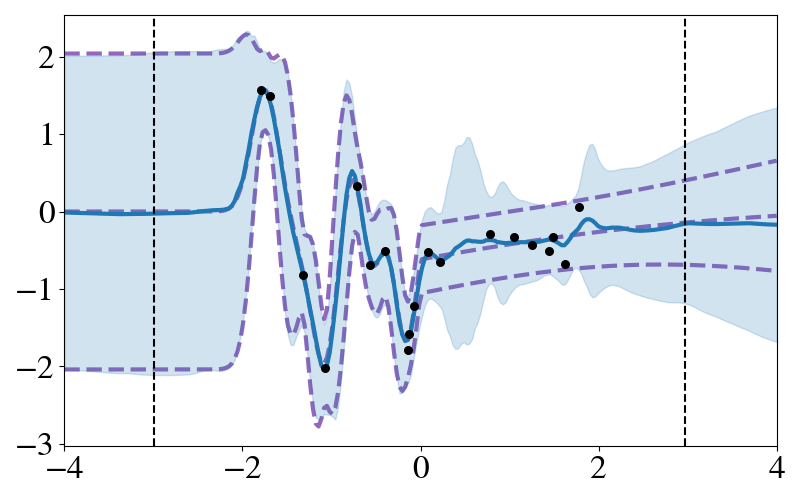}
        \label{fig:gp_convcnp}
        \vspace{-13pt}
        \caption{ConvCNP ($T$).}
    \end{subfigure}
    \hfill
    \begin{subfigure}[t]{0.245\textwidth}
        \includegraphics[width=\textwidth,trim={20 20 15 12},clip]{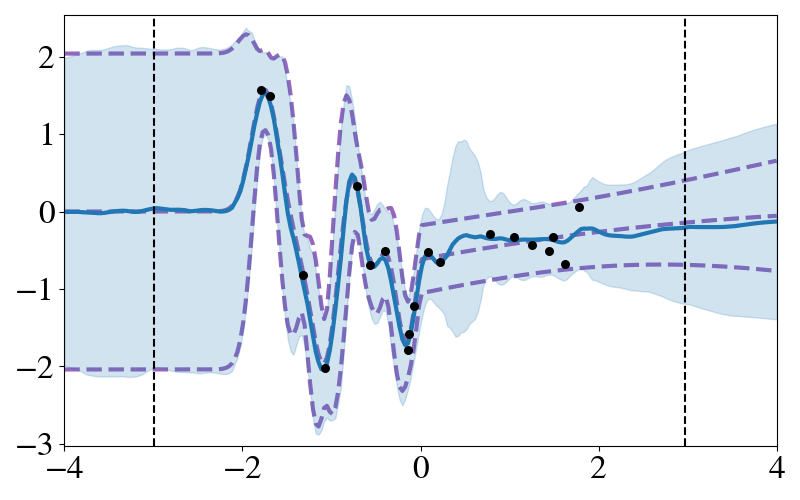}
        \label{fig:gp_enconvcnp}
        \vspace{-13pt}
        \caption{EquivCNP ($E$).}
    \end{subfigure}
    \hfill
    \begin{subfigure}[t]{0.245\textwidth}
        \includegraphics[width=\textwidth,trim={20 20 15 12},clip]{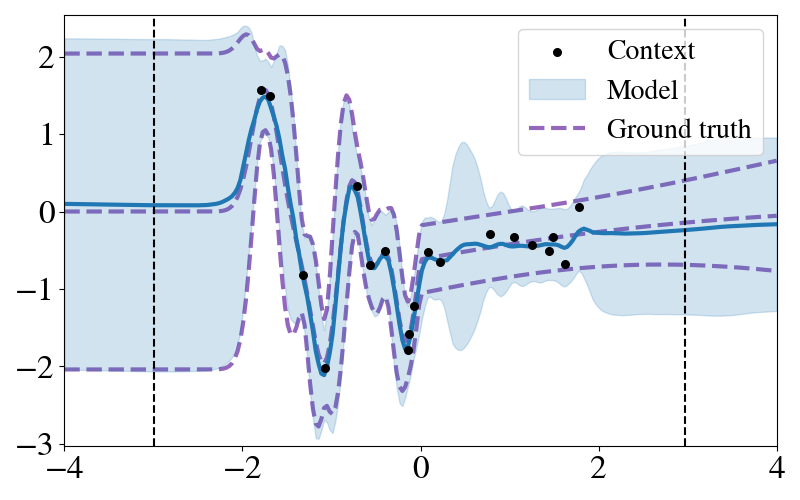}
        \label{fig:gp_tetnp}
        \vspace{-13pt}
        \caption{TNP ($T$).}
    \end{subfigure}
    \hfill
    \begin{subfigure}[t]{0.245\textwidth}
        \includegraphics[width=\textwidth,trim={20 20 15 12},clip]{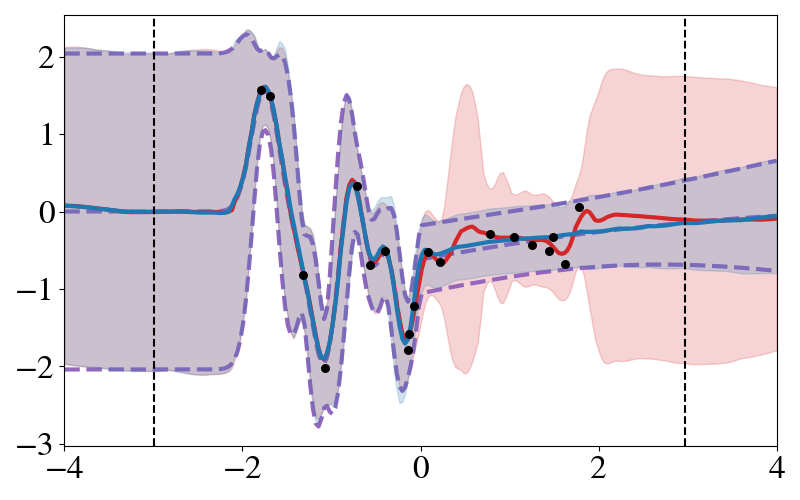}
        \label{fig:gp_ateconvcnp}
        \vspace{-13pt}
        \caption{ConvCNP ($\smash{\widetilde{T}}$).}
    \end{subfigure}
    \hfill
    \begin{subfigure}[t]{0.245\textwidth}
        \includegraphics[width=\textwidth,trim={20 20 15 12},clip]{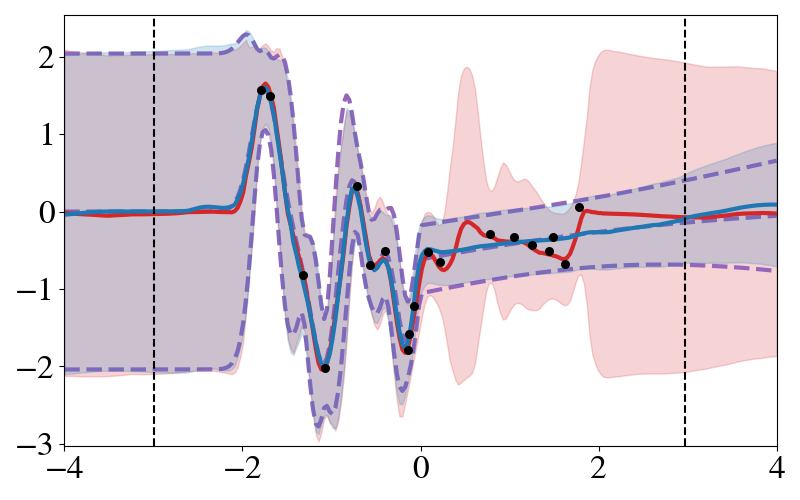}
        \label{fig:gp_relaxedconvcnp}
        \vspace{-13pt}
        \caption{RelaxedConvCNP ($\smash{\widetilde{T}}$).}
    \end{subfigure}
    \hfill
    \begin{subfigure}[t]{0.245\textwidth}
        \includegraphics[width=\textwidth,trim={20 20 15 12},clip]{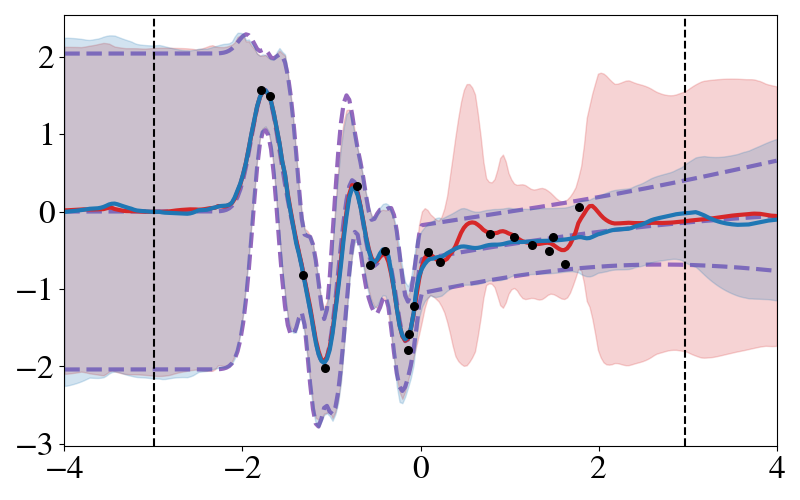}
        \label{fig:gp_equivcnp}
        \vspace{-13pt}
        \caption{EquivCNP ($\smash{\widetilde{E}}$).}
    \end{subfigure}
    \hfill
    \begin{subfigure}[t]{0.245\textwidth}
        \includegraphics[width=\textwidth,trim={20 20 15 12},clip]{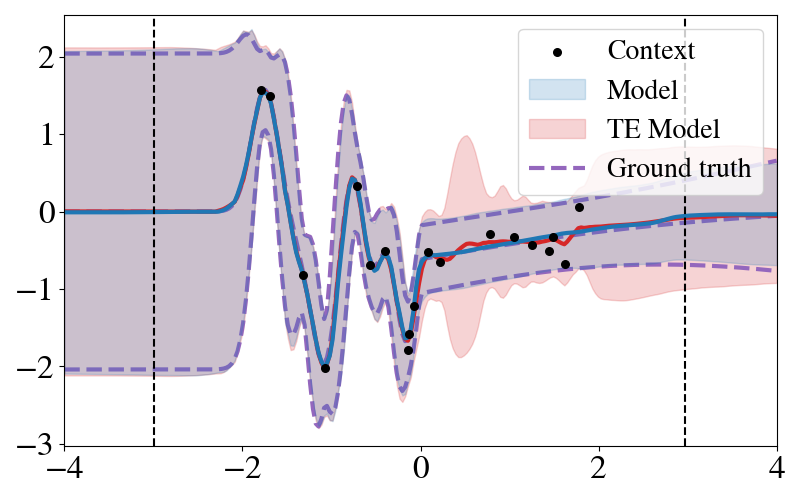}
        \label{fig:gp_atelbanp}
        \vspace{-13pt}
        \caption{TNP ($\smash{\widetilde{T}}$).}
    \end{subfigure}
    \caption{A comparison between the predictive distributions on a single synthetic 1-D regression dataset of the TNP-, ConvCNP-, and EquivCNP-based models. For the approximately equivariant models, we plot both the model's predictive distribution (blue), as well as the predictive distributions obtained without using the fixed inputs (red). The dotted black lines indicate the target range.
    }
    \label{fig:gp_regression_plot}
    \vspace{-1em}
\end{figure}
We show in \Cref{subapp:1d-regression} that the approximately equivariant models deviate only slightly from strict equivariance, thus being equivariant in the approximate sense as we describe in \Cref{sec:aenp}. We also analyse the performance of the models with increasing number of basis functions in \Cref{subapp:1d-regression}. The biggest improvement is obtained when going from $0$ to $1$ basis function, highlighting the importance of relaxing strict equivariance. We generally find that a few basis functions suffice to capture the symmetry-breaking features, but using more fixed inputs than necessary does not hurt performance, thus justifying in practice the use of a `sufficiently high number' of fixed inputs.

\subsection{Smoke Plumes}

%
There are inherent connections between symmetries and dynamical systems, yet it is also true that real world dynamics rarely exhibit perfect symmetry. Motivated by this, we investigate the utility of approximate equivariance in the context of modelling symmetrically-imperfect simulations generated from partial differential equations.
We consider a setup similar to \cite{wang2022approximately}---we construct a dataset of $128 \times 128$ 2-D smoke simulations, computing the air flow in a closed box with a smoke source. Besides the closed boundary, we also introduce a fixed spherical obstacle through which smoke cannot pass, and we sample the position of the spherical smoke inflow out of three possible locations. These three components break the symmetry of the system. We consider 25,000 different initial conditions generated through PhiFlow~\citep{Holl2020Learning}. We use 22,500 for training and the remaining for testing. We randomly sample the smoke sphere radius $r\sim \mcU\{5, 30\}$ and the buoyancy coefficient $B\sim \mcU_{[0.1, 0.5]}$. For each initial condition, we run the simulation for a fixed number of steps and only keep the last state.
%
%
We sub-sample a $32 \times 32$ patch from each state to construct a dataset. We sample the number of context points according to $N_c \sim \mcU\{10, 250\}$ and set the remaining datapoints as target points. 
\Cref{tab:gp-results} compares the average test log-likelihood of the models. As in the 1-D regression experiment, the approximately equivariant versions of each model outperform both the non-equivariant and equivariant versions, demonstrating the effectiveness of our approach in modelling complex symmetry-breaking features. We provide illustrations of the predictive means for each model in Appendix \Cref{fig:smoke_plot_additional}.

\subsection{Environmental Data}
As remarked in \Cref{sec:intro}, aspects of modelling climate systems adhere to symmetries such as equivariance. However, there are also unknown local effects which may be revealed by sufficient data. We explore this empirically by considering a real-world dataset derived from ERA5 \citep{cccs2020}, consisting of surface air temperatures for the years 2018 and 2019. Measurements are collected at a latitudinal and longitudinal resolution of $0.5^{\circ}$, and temporal resolution of an hour. We also have access to the surface elevation at each coordinate, resulting in a 4-D input ($\bfx_n \in \R^4$).
We consider measurements collected from Europe and from central US.\footnote{A longitude / latitude range of $[35^{\circ},\ 60^{\circ}]$ / $[10^{\circ},\ 45^{\circ}]$ and $[-120^{\circ},\ -80^{\circ}]$ / $[30^{\circ},\ 50^{\circ}]$, respectively.}.
We train each model on Europe's 2018 data, and test on both Europe's and central US' 2019 data. Because the CNN-based architectures have insufficient support for 4-D convolutions, we first consider a 2-D experiment in which the inputs consist of longitudes and latitudes, followed by a 4-D experiment consisting of all four inputs upon which the transformer-based architectures are evaluated.

\paragraph{2-D Spatial Regression.}
We sample datasets spanning $16^{\circ}$ across each axis. Each dataset consists of a maximum of $N = 1024$ datapoints, from which the number of context points are sampled according to \smash{$N_c \sim \mcU\{\lfloor N/100 \rfloor, \lfloor N/3 \rfloor\}$}. The remaining datapoints are set as target points. \Cref{tab:2d-results} presents the average test log-likelihood on the two regions for each model. We observe that approximately equivariant models outperform their equivariant counterparts when tested on the same geographical region as the training data. As the central US data falls outside the geographical region of the training data, we zero-out the fixed inputs, so that the predictions for the approximately equivariant models depend solely on their equivariant component. Surprisingly, they also outperform their equivariant counterparts. This suggests that incorporating approximate equivariance acts to regularise the equivariant component of the model, improving generalisation in finite-data settings such as this. 
\begin{table}
    \centering
    \caption{Average test log-likelihoods (\textcolor{OliveGreen}{$\*\uparrow$}) for the 2-D and 4-D environmental regression experiment.
    Results are grouped together by model class. Best in-class result is bolded.}
    \small
    \begin{tabular}{rcccc}
        \toprule
         & \multicolumn{2}{c}{2-D Regression} & \multicolumn{2}{c}{4-D Regression} \\
         \cmidrule{2-5} 
         Model & Europe (\textcolor{OliveGreen}{$\*\uparrow$}) & US (\textcolor{OliveGreen}{$\*\uparrow$}) & Europe (\textcolor{OliveGreen}{$\*\uparrow$}) & US (\textcolor{OliveGreen}{$\*\uparrow$})  \\
         \midrule
         PT-TNP & $1.14 \pm 0.01$ & $<-10^6$ & $0.94 \pm 0.01$ & $<-10^{26}$ \\
         PT-TNP ($T$) & $1.06 \pm 0.01$ & $\mathbf{0.55 \pm 0.01}$ & $1.14 \pm 0.01$ & $0.73 \pm 0.01$ \\
         PT-TNP ($\smash{\widetilde{T}}$) & $\mathbf{1.22 \pm 0.01}$ & $\mathbf{0.55 \pm 0.01}$ & $\mathbf{1.21 \pm 0.01}$ & $\mathbf{0.76 \pm 0.01}$  \\
         \midrule
         ConvCNP ($T$) & $1.11 \pm 0.01$ & $0.12 \pm 0.02$ & - & - \\
         ConvCNP ($\smash{\widetilde{T}}$) & $\mathbf{1.18 \pm 0.01}$ & $0.15 \pm 0.02$ & - & -  \\
         RelaxedConvCNP ($\smash{\widetilde{T}}$) & $\mathbf{1.20 \pm 0.01}$ & $\mathbf{0.22 \pm 0.02}$ & - & -  \\
         \midrule
         EquivCNP ($E$) & $1.27 \pm 0.01$ & $0.64 \pm 0.02$ & - & - \\
         EquivCNP ($\smash{\widetilde{E}}$) & $\mathbf{1.36 \pm 0.01}$ & $\mathbf{0.69 \pm 0.01}$ & - & -  \\
         \bottomrule
    \end{tabular}
    \label{tab:2d-results}
\end{table}
In \Cref{fig:2d_predictions}, we compare the predictions of the PT-TNP and EquivCNP-based models for a single test dataset with the ground-truth data and the equivariant predictions made by the same models with the fixed inputs zeroed out. The predictive means of both models are almost indistinguishable from ground-truth. We can gain valuable insight into the effectiveness of our approach by comparing a plot of the difference between the approximately equivariant and the equivariant predictions (\Cref{fig:2d_era5_teist_diff,fig:2d_era5_aen_convcnp_diff}) to that of the elevation map for this region (\Cref{fig:2d_era5_elev}). As elevation is not provided as an input, yet is crucial in predicting surface air temperature, the approximately equivariant models can infer the effect of local topographical features on air temperature through their non-equivariant component.
\begin{figure}[t]
    \centering
    \begin{subfigure}[t]{0.32\textwidth}
        \includegraphics[width=\textwidth,trim={5 195 9 10},clip]{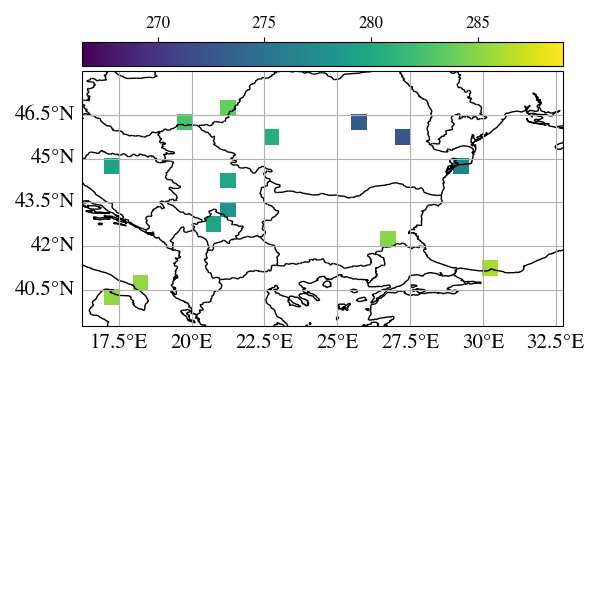}
        \vspace{-13pt}
        \caption{Context.}
        \label{fig:2d_era5_context}
    \end{subfigure}
    \hfill
    \begin{subfigure}[t]{0.32\textwidth}
        \includegraphics[width=\textwidth,trim={5 195 9 10},clip]{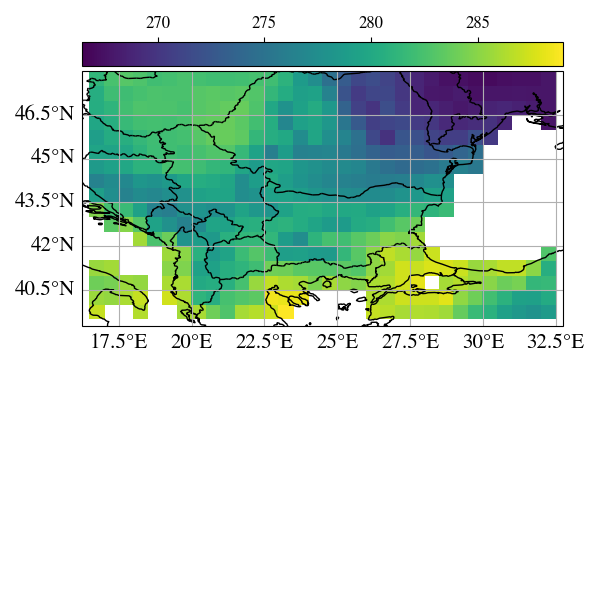}
        \vspace{-13pt}
        \caption{Ground-truth.}
        \label{fig:2d_era5_gt}
    \end{subfigure}
    \hfill
    \begin{subfigure}[t]{0.32\textwidth}
        \includegraphics[width=\textwidth,trim={5 195 9 10},clip]{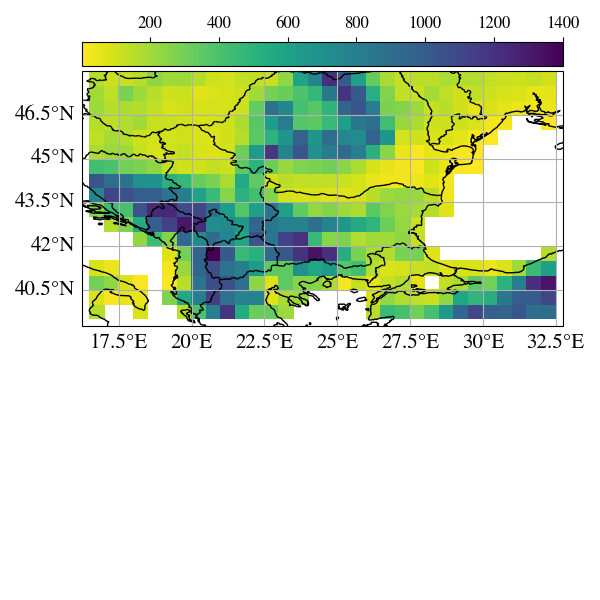}
        \vspace{-13pt}
        \caption{Elevation.}
        \label{fig:2d_era5_elev}
    \end{subfigure}
    \begin{subfigure}[t]{0.32\textwidth}
        \includegraphics[width=\textwidth,trim={5 195 9 10},clip]{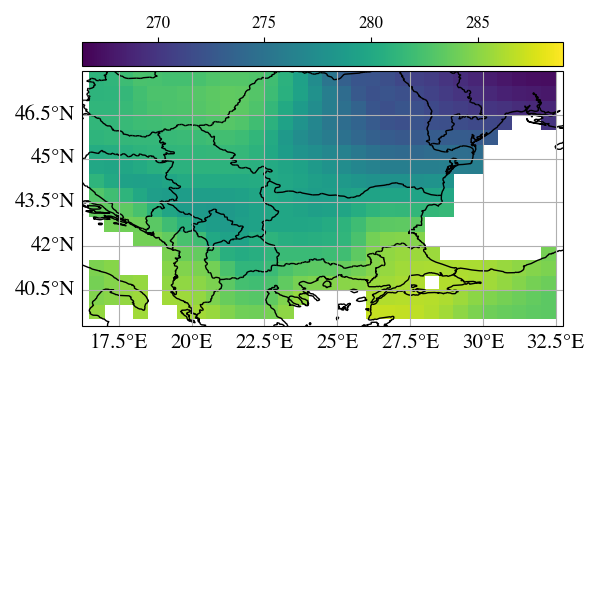}
        \vspace{-13pt}
        \caption{PT-TNP: $T$.}
        \label{fig:2d_era5_teist_mean_te}
    \end{subfigure}
    \hfill
    \begin{subfigure}[t]{0.32\textwidth}
        \includegraphics[width=\textwidth,trim={5 195 9 10},clip]{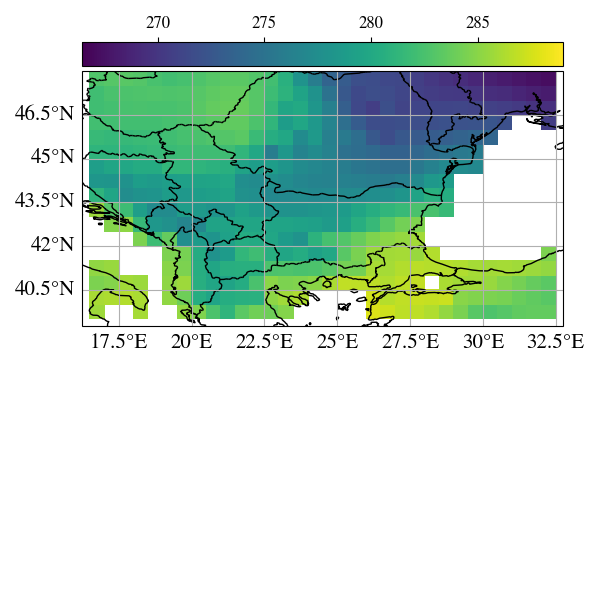}
        \vspace{-13pt}
        \caption{PT-TNP: $\smash{\widetilde{T}}$.}
        \label{fig:2d_era5_teist_mean}
    \end{subfigure}
    \hfill
    \begin{subfigure}[t]{0.32\textwidth}
        \includegraphics[width=\textwidth,trim={5 195 9 10},clip]{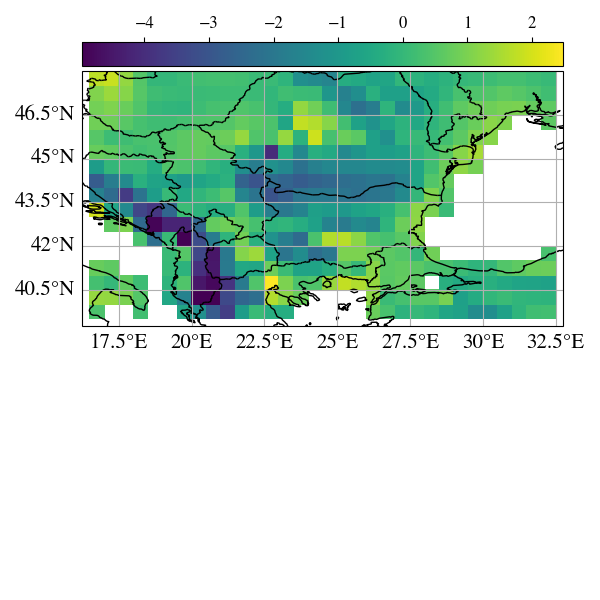}
        \vspace{-13pt}
        \caption{PT-TNP: $\widetilde{T} - T$.}
        \label{fig:2d_era5_teist_diff}
    \end{subfigure}
    \begin{subfigure}[t]{0.32\textwidth}
        \includegraphics[width=\textwidth,trim={5 175 9 10},clip]{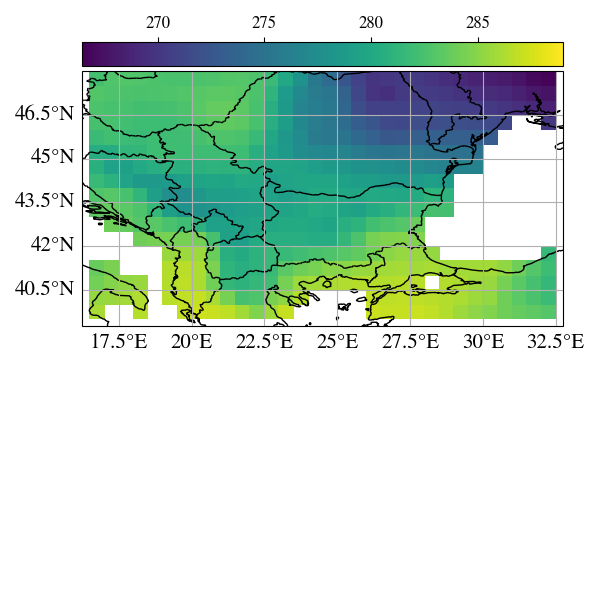}
        \vspace{-13pt}
        \caption{EquivCNP: $E$.}
        \label{fig:2d_era5_aen_convcnp_mean_te}
    \end{subfigure}
    \hfill
    \begin{subfigure}[t]{0.32\textwidth}
        \includegraphics[width=\textwidth,trim={5 175 9 10},clip]{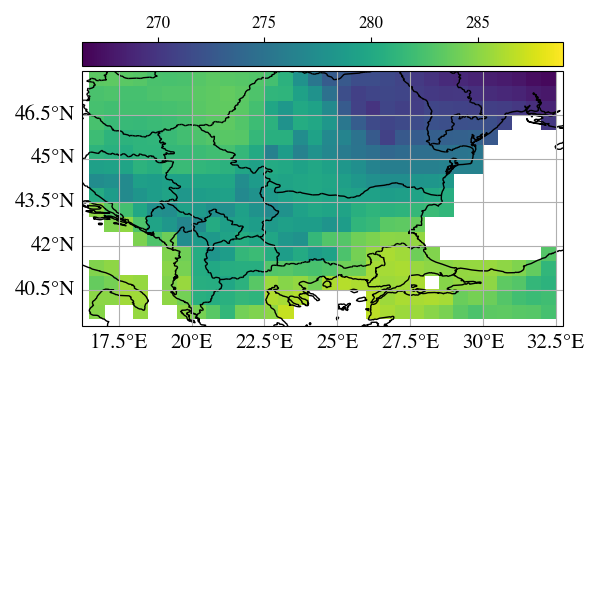}
        \vspace{-13pt}
        \caption{EquivCNP: $\smash{\widetilde{E}}$.}
        \label{fig:2d_era5_aen_convcnp_mean}
    \end{subfigure}
    \hfill
    \begin{subfigure}[t]{0.32\textwidth}
        \includegraphics[width=\textwidth,trim={5 175 9 10},clip]{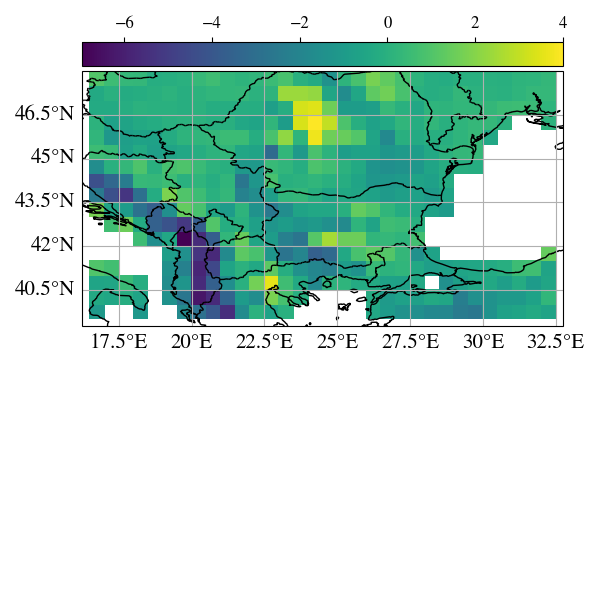}
        \vspace{-13pt}
        \caption{EquivCNP: $\widetilde{E} - E$.}
        \label{fig:2d_era5_aen_convcnp_diff}
    \end{subfigure}
    \caption{A comparison between the predictive distributions of the equivariant (left column) and approximately equivariant (middle column) components of the PT-TNP ($\smash{\widetilde{T}}$) and EquivCNP ($\smash{\widetilde{E}}$) models on a single (cropped) test dataset from the 2-D environmental data experiment.}
    \label{fig:2d_predictions}
    \vspace{-1em}
\end{figure}
As with the 1-D GP experiment, we analyse the degree of approximate equivariance and the performance with an increasing number of fixed inputs in \Cref{subapp:env_data}, drawing similar conclusions.

\paragraph{4-D Spatio-Temporal Regression.}
In this experiment, we sample datasets across 4 days with measurements every day and spanning $8^{\circ}$ across each axis. Each dataset consists of a maximum of $N=1024$ datapoints, from which the number of context points are sampled according to $N_c \sim \mcU\{\lfloor N/100 \rfloor, \lfloor N/3 \rfloor\}$. The remaining datapoints are set as target points. We provide results in \Cref{tab:2d-results}. Similar to the 2-D experiment, we observe that the approximately equivariant PT-TNP outperforms both the equivariant PT-TNP and non-equivariant PT-TNP on both testing regions.  


\section{Conclusion}
\label{sec:conclusion}
The contributions of this paper are two-fold. First, we develop novel theoretical results that provide insights into the general construction of approximately equivariant operators which is agnostic to the choice of symmetry group and choice of model architecture. Second, we use these insights to construct approximately equivariant NPs, demonstrating their improved performance relative to non-equivariant and strictly equivariant counterparts on a number of synthetic and real-world regression problems. We consider this work to be an important step towards understanding and developing approximately equivariant models. However, more must be done to rigorously quantify and control the degree to which these models depart from strict equivariance. Further, we only considered simple approaches to incorporating approximate equivariance into equivariant architectures, and provided empirical results for relatively small-scale experiments. We look forward to addressing each of these limitations in future work.

\section*{Acknowledgements}
CD is supported by the Cambridge Trust Scholarship. AW acknowledges support from a Turing AI fellowship under grant EP/V025279/1 and the Leverhulme Trust via CFI. RET is supported by The Alan Turing Institute, Google, Amazon, ARM, Improbable, EPSRC grant EP/T005386/1, and the EPSRC Probabilistic AI Hub (ProbAI, EP/Y028783/1).

\clearpage

\bibliography{bibliography}
\bibliographystyle{plainnat}


\clearpage
\appendix

\newcommand{\note}[1]{\textcolor{red}{#1}}

\section{Proof of Theorem 3}
\label{app:proof}

\begin{prop} \label{prop:equivalent-condition}
    Let $T \in B$.
    Then $T$ is compact if and only if $\|T - P_n T P_n\| \to 0$.
\end{prop}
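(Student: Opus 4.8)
The plan is to reduce both implications to \Cref{prop:finite-rank-approx} together with two elementary facts about the orthogonal projections $P_n$: since $(e_i)_{i\ge1}$ is an orthonormal basis, $P_n \to I$ in the strong operator topology, and $\|P_n\| \le 1$ for every $n$.

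First I would dispatch the ``if'' direction, which is immediate: if $\|T - P_n T P_n\| \to 0$, then since each $P_n T P_n$ has range contained in $\operatorname{span}\{e_1,\ldots,e_n\}$ it is of finite rank, so $T$ is an operator-norm limit of finite-rank (hence compact) operators; as the compact operators form a norm-closed subspace of $B$, $T$ is compact.

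For the ``only if'' direction, assume $T$ is compact. Applying \Cref{prop:finite-rank-approx} to $T$ gives $\|T - P_n T\| \to 0$. I would then establish the companion estimate $\|T - T P_n\| \to 0$ by passing to adjoints: $T^*$ is compact, so \Cref{prop:finite-rank-approx} applied to $T^*$ yields $\|T^* - P_n T^*\| \to 0$, and since $P_n$ is self-adjoint we have $(T - T P_n)^* = T^* - P_n T^*$ and hence $\|T - T P_n\| = \|T^* - P_n T^*\| \to 0$. Combining these via the splitting $T - P_n T P_n = (T - T P_n) + (I - P_n) T P_n$ and $\|P_n\| \le 1$,
\[
\|T - P_n T P_n\| \le \|T - T P_n\| + \|(I - P_n) T\|\,\|P_n\| \le \|T - T P_n\| + \|T - P_n T\| \to 0.
\]

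The only step that takes any thought is obtaining $\|T - T P_n\| \to 0$, since \Cref{prop:finite-rank-approx} is stated only for the left multiplication $P_n T$; the adjoint trick circumvents this with no genuine difficulty, and everything else is bookkeeping.
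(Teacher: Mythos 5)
Your proof is correct and follows essentially the same route as the paper: the ``if'' direction via finite-rank approximation and norm-closedness of the compact operators, and the ``only if'' direction via \Cref{prop:finite-rank-approx} applied to both $T$ and $T^*$ (the adjoint trick to get $\|T - TP_n\|\to 0$), combined through a triangle-inequality splitting of $T - P_nTP_n$ that is a mirror image of the paper's. No gaps.
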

\begin{proof}
    Assume that $\|T - P_n T P_n\| \to 0$.
    Then $T$ is the limit of a sequence of finite-rank operators, so $T$ is compact (\cref{prop:finite-rank-approx}).

    On the other hand, assume that $T$ is compact.
    Then $\|T - P_n T\| \to 0$ (\cref{prop:finite-rank-approx}).
    Use the triangle inequality to bound
    \begin{equation}
        \|T - P_n T P_n\|
        = \|T - P_n T + P_n T - P_n T P_n\|
        \le \|T - P_n T\| + \|P_n\| \|T -  T P_n\|.
    \end{equation}
    We already know that $\|T - P_n T\| \to 0$,
    and it is true that $\|P_n\| = 1$.
    Finally, since $T$ is compact, the adjoint $T^*$ is also compact.
    Hence, again by \cref{prop:finite-rank-approx},
    \begin{equation}
        \|T^* - P_n T^*\|
        = \|(T - T P_n)^*\|
        = \|T - T P_n\| \to 0,
    \end{equation}
    which proves that $\|T - P_n T P_n\| \to 0$.
\end{proof}

\main*
\begin{proof}
    Let $\varepsilon > 0$.
    Choose $n \in \mathbb{N}$ such that $\|T - P_n T P_n\| < \tfrac12 \varepsilon$,
    and choose $h >0 $ such that $c h ^\alpha = \tfrac12 \varepsilon$.
    Set $h_n = h / \sqrt{2n}$.
    Consider the following collection of vectors:
    \begin{equation}
        A  = \{ j h_n e_1 : j = 0, \pm 1, \ldots, \pm \lceil M / h_n \rceil  \}
        \times
        \cdots
        \times
        \{ j h_n  e_n  : j = 0, \pm1 , \ldots, \pm \lceil M / h_n \rceil  \}.
    \end{equation}
    By construction of $A$, for every $u \in \mathbb{H}$ such that $\|u\| \le M$, there exists an $a \in A$ such that $\|P_n u - a\|^2 \le n h_n^2 = \tfrac 12 h^2 < h^2$.

    Let $\tilde k\colon \mathbb{R} \to [0, \infty)$ be a continuous function with support equal to $(-h, h)$.
    Set $k\colon \mathbb{H} \times \mathbb{H} \to [0, \infty)$, $k(u, v) = \tilde k(\|u - v\|)$.
    Consider the following map:
    \begin{equation}
        E \colon \mathbb{H} \times \mathbb{H}^{2 |A|} \to \mathbb{H},\quad
        E(u, a_1, t_1, \ldots, a_{|A|},t_{|A|}) = 
        \frac{\sum_{i=1}^{|A|} k(u, a_i) t_i}{\sum_{i=1}^{|A|} k(u, a_i) }
    \end{equation}
    where $0/0$ is defined as $0$.
    This map $E$ is continuous: the numerator is a continuous $\mathbb{H}$-valued function, and the denominator is continuous $\mathbb{R}$-valued function which is non-zero wherever the numerator is non-zero.
    Moreover, by $G$-invariance of $\mathbb{H}$, $k$ is $G$-invariant, so $E$ is $G$-equivariant:
    \begin{equation}
        E(gu, ga_1, g t_1, \ldots \,) = \frac{\sum_{i=1}^{|A|} k(gu, ga_i) g t_i }{\sum_{i=1}^{|A|} k(gu, ga_i) }\\
        = g \frac{\sum_{i=1}^{|A|} k(u, a_i) t_i}{\sum_{i=1}^{|A|} k(u, a_i) } \\
        = gE(u, a_1, t_1, \ldots\,).
    \end{equation}

    Now set $t_i = (P_n T P_n)(a_i)$.
    Consider some $u \in \mathbb{H}$ such that $\|u\| \le M$.
    By construction of $A$, $\sum_{i=1}^{|A|} k(P_n u, a_i) > 0$.
    Therefore, by $(c, \alpha)$-H\"older continuity of $T$,
    \begin{align}
        &\|(P_n T P_n)(u)
        - E(P_n u, a_1, t_1, \ldots \,)\| \nonumber \\
        &\qquad=
            \left\|
            \frac
                {\sum_{i=1}^{|A|} k(P_n u, a_i)(P_n T P_n)(u)}
                {\sum_{i=1}^{|A|} k(P_nu, a_i) }
            - \frac
                {\sum_{i=1}^{|A|} k(P_n u, a_i)(P_n T P_n)(a_i)}
                {\sum_{i=1}^{|A|} k(P_n u, a_i) }
            \right\| \\
        &\qquad=
            \left\|\frac{\sum_{i=1}^{|A|} k(P_n u, a_i) ((P_n T P_n)(u) - (P_n T P_n)(a_i)) }{\sum_{i=1}^{|A|} k(P_nu, a_i) } \right\| \\
        &\qquad\overset{\text{(i)}}{\le}
            \frac{\sum_{i=1}^{|A|} k(P_n u, a_i) \|(P_n T P_n)(u) - (P_n T P_n)(a_i)\| }{\sum_{i=1}^{|A|} k(P_nu, a_i) } \\
        &\qquad\overset{\text{(ii)}}{\le}
            \frac{\sum_{i=1}^{|A|} k(P_n u, a_i) c \|P_n u - a_i\|^\alpha }{\sum_{i=1}^{|A|} k(P_n u, a_i) }
        \overset{\text{(iii)}}{\le} c h^\alpha
        = \tfrac12 \varepsilon
    \end{align}
    where (i) uses the triangle inequality; (ii) $(P_n T P_n)(u) = (P_n T P_n)(P_n u)$ and $(c, \alpha)$-H\"older continuity of $P_n T P_n$; and (iii) that $\|P_n u - a_i\| \ge h$ implies that $k(P_n u, a_i) = 0$.
    
    Therefore, for all $u \in \mathbb{H}$ such that $\|u\| \le M$,
    \begin{equation}
        \|T(u) - E(P_n u, \ldots\,)\|
        \le
            \|T(u) - (P_n T P_n)(u)\|
            +
            \|(P_n T P_n)(u) - E(P_n u, \ldots\,)\|
        \le \varepsilon.
    \end{equation}

    Finally, if condition $(\ast)$ holds, then $P_n$ is $G$-equivariant, which we show now.
    Let $u \in \mathbb{H}$ and consider $g \in G$.
    Since $\operatorname{span}\,\{e_1,\ldots, e_n\}=\operatorname{span}\,\{g e_1, \ldots, g e_n\}$ and $g e_1, \ldots, g e_n$ forms an orthonormal basis for $\operatorname{span},\{e_1,\ldots, e_n\}$, we can also write $P_n = \sum_{i=1} g e_i \langle g e_i,\,\cdot\, \rangle$, so
    \begin{equation}
        P_n (g u)
            = \sum_{i=1}^n g e_i \langle g e_i, g u\rangle
            = g \sum_{i=1}^n e_i \langle e_i, u\rangle
            = g (P_n u).
    \end{equation}
    Therefore, in the case that condition $(\ast)$ holds, $P_n$ can be absorbed into the definition of $E_n$.
\end{proof}

From the proof, we find that
$k_n = 2 |A| = 2( 1 + 2\lceil \sqrt{2 n} M / h \rceil)^n \le 2(2 + 2\sqrt{2 n} M / h)^n$
and
$c h^\alpha = \tfrac12 \varepsilon$,
which implies that $h = (\varepsilon/2c)^{1/\alpha}$,
so
$
    k_n
    \le 2(2 + 2\sqrt{2 n} M (\varepsilon/2c)^{-1/\alpha})^n
$.
This growth estimate is faster than exponential.
Like the linear case shows (\cref{thm:main-result-linear}), better estimates may be obtained with constructions of $E$ that better exploit structure of $T$.

\section{Equivariant Neural Processes}

\label{app:equivariant_nps}
In this section, we outline the equivariant NP models we use throughout our experiments, and how their approximately equivariant counterparts can easily be constructed using the results from \Cref{sec:equivariant_decomposition}. In all cases, the asymptotic space and time complexity remains unchanged. 

\subsection{Convolutional Conditional Neural Process}
\label{subapp:convcnp}
The ConvCNP \citep{gordon2019convolutional} is a translation equivariant NP and is constructed in exactly the form shown in \Cref{eq:equivariant_np_construction}, with $\psi: \mcX^2 \rightarrow \R$ an RBF kernel with learnable lengthscale and $\phi(\bfy_i) = [\bfy_i^T; \*1]^T$. The functional embedding is discretised and passed pointwise through an MLP to some final functional representation $\tilde{e}(\mcD): \tilde{\mcX} \rightarrow \R^{D_z}$, where $\tilde{\mcX}$ denotes the discretised input domain. $\rho$ implemented as a CNN through which $\tilde{e}(\mcD)$ is passed through together with another RBF kernel $\psi_p$ which maps back to a continuous function space. We provide pseudo-code for a forward pass through the ConvCNP in \Cref{alg:convcnp}.

\def\HiLi{\leavevmode\rlap{\hbox to \hsize{\color{yellow!50}\leaders\hrule height .8\baselineskip depth .5ex\hfill}}}

\begin{minipage}[t]{0.49\textwidth}
    \begin{algorithm}[H]
    \SetAlgoLined
    \caption{Forward pass through the ConvCNP ($T$) for off-the-grid data.}
    \SetKwInput{Input}{Input}
    \SetKwInOut{Output}{Output}
    \small
    \Input{~$\rho = (\operatorname{CNN},\ \psi_p)$, $\psi$, and density $\gamma$. Context $\mcD_c = (\bfX_c, \bfY_c) = \{(\bfx_{c, n}, \bfy_{c, n})\}_{n=1}^{N_c}$, and target $\bfX_t = \{\bfx_{t, m}\}_{m=1}^{N_t}$}
    \Begin{
        $\text{lower}, \text{upper} \gets \operatorname{range}\left(\bfX_t \cup \bfX_c\right)$\;
        $\{\tilde{\bfx}_i\}_{i=1}^T \gets \text{grid}(\text{lower}, \text{upper}, \gamma)$\;
        $\bfh_i \gets \sum_{n=1}^{N_c} [1, \bfy_{c, n}^T]^T \psi(\tilde{\bfx}_i - \bfx_{c, n})$\;
        $\bfh^{(1)}_i \gets \bfh^{(1)}_i / \bfh^{(0)}_i$\;
        \HiLi $\bfh_i \gets \operatorname{MLP}(\bfh_i)$\;
        $\{f(\tilde{\bfx}_i)\}_{i=1}^T \gets \operatorname{CNN}\left(\{\tilde{\bfx}_i, \bfh_i\}_{i=1}^T\right)$\;
        $\theta_m \gets \sum_{i=1}^T f(\tilde{\bfx}_i)\psi_p(\bfx_{t, m} - \tilde{\bfx}_i)$\;
        \Return{$p(\cdot | \theta_m)$}\;
        
    }
    \label{alg:convcnp}
\end{algorithm}
\end{minipage}
\hfill
\begin{minipage}[t]{0.49\textwidth}
    \begin{algorithm}[H]
    \SetAlgoLined
    \caption{Forward pass through the ConvCNP ($\widetilde{T}$) for off-the-grid data.}
    \SetKwInput{Input}{Input}
    \SetKwInOut{Output}{Output}
    \small
    \Input{~$\rho = (\operatorname{CNN},\ \psi_p)$, $\psi$, density $\gamma$, and fixed input $t$. Context $\mcD_c = (\bfX_c, \bfY_c) = \{(\bfx_{c, n}, \bfy_{c, n})\}_{n=1}^{N_c}$, and target $\bfX_t = \{\bfx_{t, m}\}_{m=1}^{N_t}$}
    \Begin{
        $\text{lower}, \text{upper} \gets \operatorname{range}\left(\bfX_t \cup \bfX_c\right)$\;
        $\{\tilde{\bfx}_i\}_{i=1}^T \gets \text{grid}(\text{lower}, \text{upper}, \gamma)$\;
        $\bfh_i \gets \sum_{n=1}^{N_c} [1, \bfy_{c, n}^T]^T \psi(\tilde{\bfx}_i - \bfx_{c, n})$\;
        $\bfh^{(1)}_i \gets \bfh^{(1)}_i / \bfh^{(0)}_i$\;
        \HiLi $\bfh_i \gets \operatorname{MLP}(\bfh_i) + t(\tilde{\bfx_i})$\;
        $\{f(\tilde{\bfx}_i)\}_{i=1}^T \gets \operatorname{CNN}\left(\{\tilde{\bfx}_i, \bfh_i\}_{i=1}^T\right)$\;
        $\theta_m \gets \sum_{i=1}^T f(\tilde{\bfx}_i)\psi_p(\bfx_{t, m} - \tilde{\bfx}_i)$\;
        \Return{$p(\cdot | \theta_m)$}\;
        
    }
    \label{alg:ate_convcnp}
\end{algorithm}
\end{minipage}

To achieve approximate translation equivariance, we construct a learnable function $t: \tilde{\mcX} \rightarrow \R^{D_z}$ using an MLP which represents the fixed inputs. We sum these together with the resized functional embedding, resulting in the overall implementation $\rho(e(\mcD), t_1, \ldots, t_B) = \operatorname{CNN}(\tilde{e}(\mcD) + t)$. Note that the number of fixed inputs is $B = D_z$. We provide pseudo-code for a forward pass through the approximately translation equivariant ConvCNP in \Cref{alg:ate_convcnp}.

\subsection{Equivariant Conditional Neural Process}
\label{subapp:equivcnp}
The EquivCNP \citep{kawano2021group} is a generalisation of the ConvCNP to more general group equivariances. As with the ConvCNP, it is constructed in the form shown in \Cref{eq:equivariant_np_construction}. We achieve $E$-equivariance with $\psi: \mcX^2 \rightarrow \R$ an RBF kernel with learnable lengthscale and $\phi(\bfy_i) = [\bfy_i^T; 1]^T$. The functional embedding is discretised and passed pointwise through an MLP to some final functional representation $\widetilde{e}(\mcD): \widetilde{\mcX} \rightarrow \R^{D_z}$, where $\widetilde{\mcX}$ denotes the discretised input domain. $\rho$ is implemented as a group equivariant CNN \citep{cohen2016group} together with another RBF kernel $\psi_p$ which maps back to a continuous function space. The approximately $E$-equivariant EquivCNP is constructed in an analogous manner to the approximately translation equivariant ConvCNP.

\subsection{Translation Equivariant Transformer Neural Process}
\label{subapp:tetnp}
The TE-TNP \citep{ashman2024translation} is a translation equivariant NP. However, unlike the ConvCNP, the function embedding representation differs slightly to that in \Cref{eq:equivariant_np_construction}, and is given by
\begin{equation}
    e(\mcD)(\cdot): \mcX \rightarrow \R^{D_z} = \operatorname{cat}\left(\{\sum_{i=1}^{N} \alpha_h(\bfz_0, \phi(\bfy_i), \cdot, \bfx_i)\phi(\bfy_i)^T \bfW_{V, h} \}_{h=1}^H\right) \bfW_0
\end{equation}
where
\begin{equation}
    \label{eq:te_attention_mechanism}
    \alpha_h(\bfz_0, \phi(\bfy_i), \cdot, \bfx_i) = \frac{e^{\mu(\bfz_0^T \bfW_{Q, h} \left[\bfW_{K, h}\right]^T \phi(\bfy_i), \cdot - \bfx_i)}}{\sum_{j=1}^N e^{\mu(\bfz_0^T \bfW_{Q, h} \left[\bfW_{K, h}\right]^T \phi(\bfy_j), \cdot - \bfx_j)}}.
\end{equation}
Here, $\mu$ is implemented using an MLP. This is a partially evaluated translation equivariant multi-head cross-attention (TE-MHCA) operation, with attention weights computed according to \Cref{eq:te_attention_mechanism}. We refer to $\phi(\bfy_i)$ as the initial token embedding for $\bfy_i$. $\rho$ is implemented using translation equivariant multi-head self-attention (TE-MHSA) operations which update the token representations. These are combined with TE-MHCA operations which map the token representations to a continuous function space. We provide pseudo-code for a forward pass through the TE-TNP in \Cref{alg:te-tnp}.

\begin{minipage}[t]{0.49\textwidth}
    \begin{algorithm}[H]
    \SetAlgoLined
    \caption{Forward pass through the TE-TNP ($T$).}
    \SetKwInput{Input}{Input}
    \SetKwInOut{Output}{Output}
    \small
    \Input{~$\rho = \{\operatorname{TE-MHSA}^{(\ell)}, \operatorname{TE-MHSA}^{(\ell)}\}_{\ell=1}^L$, $\phi$. Context $\mcD_c = (\bfX_c, \bfY_c) = \{(\bfx_{c, n}, \bfy_{c, n})\}_{n=1}^{N_c}$, and target $\bfX_t = \{\bfx_{t, m}\}_{m=1}^{N_t}$}
    \Begin{
        \HiLi $\bfz_{c, n} \gets \phi(\bfy_{c, n})$\;
        $\bfz_{t, m} \gets \bfz_0$\;
        \For{$\ell = 1,\ldots, L$}{
            $\bfz_{t, m} \gets \operatorname{TE-MHCA}^{(\ell)}\left(\bfz_{t, m}, \bfZ_c, \bfx_{t, m}, \bfX_c\right)$\;
            $\{\bfz_{c, n}\}_{n=1}^{N_c} \gets \operatorname{TE-MHSA}^{(\ell)}\left(\bfZ_c, \bfX_c\right)$\;
        }
        $\theta_m \gets \operatorname{MLP}(\bfz_{t, m})$\;
        \Return{$p(\cdot | \theta_m)$}\;
    }
    \label{alg:te-tnp}
\end{algorithm}
\end{minipage}
\hfill
\begin{minipage}[t]{0.49\textwidth}
    \begin{algorithm}[H]
    \SetAlgoLined
    \caption{Forward pass through the TE-TNP ($\widetilde{T}$).}
    \SetKwInput{Input}{Input}
    \SetKwInOut{Output}{Output}
    \small
    \Input{~$\rho = \{\operatorname{TE-MHSA}^{(\ell)}, \operatorname{TE-MHSA}^{(\ell)}\}_{\ell=1}^L$, $\phi$, fixed input $t$. Context $\mcD_c = (\bfX_c, \bfY_c) = \{(\bfx_{c, n}, \bfy_{c, n})\}_{n=1}^{N_c}$, and target $\bfX_t = \{\bfx_{t, m}\}_{m=1}^{N_t}$}
    \Begin{
        \HiLi $\bfz_{c, n} \gets \phi(\bfy_{c, n}) + t(\bfx_{c, n})$\;
        $\bfz_{t, m} \gets \bfz_0$\;
        \For{$\ell = 1,\ldots, L$}{
            $\bfz_{t, m} \gets \operatorname{TE-MHCA}^{(\ell)}\left(\bfz_{t, m}, \bfZ_c, \bfx_{t, m}, \bfX_c\right)$\;
            $\{\bfz_{c, n}\}_{n=1}^{N_c} \gets \operatorname{TE-MHSA}^{(\ell)}\left(\bfZ_c, \bfX_c\right)$\;
        }
        $\theta_m \gets \operatorname{MLP}(\bfz_{t, m})$\;
        \Return{$p(\cdot | \theta_m)$}\;
    }
    \label{alg:ate-tnp}
\end{algorithm}
\end{minipage}

Unlike the ConvCNP and EquivCNP, we do not discretise the input domain of the functional representation. It is not clear how one would sum together a fixed input and the functional embedding without requiring infinite summations over the entire input domain. Thus, we take an alternative approach in which the initial token representations for context set are modified by summing together with the fixed input value at the corresponding location. This modification still conforms to the form given in \Cref{eq:approx_equivariant_np_construction}, and is simple to implement. We provide pseudo-code for a forward pass through the approximately translation equivariant TE-TNP in \Cref{alg:ate-tnp}.

\subsection{Pseudo-Token Translation Equivariant Transformer Neural Process}
\label{subapp:pttetnp}
The pseudo-token TE-TNP (PT-TE-TNP) is an alternative to the TE-TNP which avoids the $\mcO(N^2)$ computational cost of regular transformers through the use of pseudo-tokens \citep{feng2022latent,ashman2024translation,lee2019set}. The functional embedding representation for the PT-TE-TNP is given by
\begin{equation}
    e(\mcD)(\cdot): \mcX \rightarrow \R^{D_z} = \operatorname{cat}\left(\{\sum_{m=1}^{M} \alpha_h(\bfz_0, \bfu_m, \cdot, \bfx_i)\bfu_m^T \bfW_{V, h} \}_{h=1}^H\right) \bfW_0
\end{equation}
where
\begin{equation}
    \bfu_m = \operatorname{cat}\left(\{\sum_{i=1}^{N} \alpha_h(\bfu_{m, 0}, \phi(\bfy_i), \bfv_m, \bfx_i)\phi(\bfy_i)^T \bfW_{V, h} \}_{h=1}^H\right) \bfW_0
\end{equation}
and
\begin{equation}
    \bfv_m = \bfv_{m, 0} + \frac{1}{N} \sum_{i=1}^N \bfx_i.
\end{equation}
The attention mechanism used in the PT-TE-TNP is the same as in the TE-TNP, and is given in \Cref{eq:te_attention_mechanism}. Intuitively, the initial token embeddings for the dataset $\mcD$ are `summarised' by the pseudo-tokens $\{\bfu_m\}_{m=1}^M$ and pseudo-token input locations $\{\bfv_m\}_{m=1}^M$. $\rho$ is implemented using a series of TE-MHSA and TE-MHCA operations which update the pseudo-tokens and form a mapping from pseudo-tokens to a continuous function space. We provide pseudo-code for a forward pass through the PT-TE-TNP in \Cref{alg:pt-te-tnp}. Note that this assumes that the perceiver-style approach is taken \citep{jaegle2021perceiver}, as in the latent bottlenecked attentive NP of \citet{feng2022latent}. The induced set transformer approach of \citet{lee2019set} can also be used.

\begin{minipage}[t]{0.49\textwidth}
    \begin{algorithm}[H]
    \SetAlgoLined
    \caption{Forward pass through the PT-TE-TNP ($T$).}
    \SetKwInput{Input}{Input}
    \SetKwInOut{Output}{Output}
    \small
    \Input{~$\rho = \{\operatorname{TE-MHSA}^{(\ell)}, \operatorname{TE-MHCA}_1^{(\ell)}, \operatorname{TE-MHCA}_2^{(\ell)}\}_{\ell=1}^L$, $\phi$, $\bfV = \{\bfv_{m, 0}\}_{j=1}^M$, $\bfU = \{\bfu_{j, 0}\}_{m=1}^M$, $\bfz_0$. Context $\mcD_c = (\bfX_c, \bfY_c) = \{(\bfx_{c, n}, \bfy_{c, n})\}_{n=1}^{N_c}$, and target $\bfX_t = \{\bfx_{t, m}\}_{m=1}^{N_t}$}
    \Begin{
        $\bfz_{c, n} \gets \phi(\bfy_{c, n})$\;
        $\bfv_m \gets \bfv_{m, 0} + 1/N \sum_{i=1}^{N_c} \bfx_{c, i}$\;
        \HiLi $\bfu_j \gets \bfu_{j, 0}$\;
        $\bfz_{t, m} \gets \bfz_0$\;
        \For{$\ell = 1,\ldots, L$}{
            $\bfu_{j} \gets \operatorname{TE-MHCA}_1^{(\ell)}\left(\bfu_j, \bfZ_c, \bfv_j, \bfX_c\right)$\;
            $\{\bfu_{j}\}_{j=1}^{M} \gets \operatorname{TE-MHSA}^{(\ell)}\left(\bfU, \bfV\right)$\;
            $\bfz_{t, m} \gets \operatorname{TE-MHCA}_2^{(\ell)}\left(\bfz_{t, m}, \bfU, \bfz_{t, m}, \bfV\right)$\;
        }
        $\theta_m \gets \operatorname{MLP}(\bfz_{t, m})$\;
        \Return{$p(\cdot | \theta_m)$}\;
    }
    \label{alg:pt-te-tnp}
\end{algorithm}
\end{minipage}
\hfill
\begin{minipage}[t]{0.49\textwidth}
    \begin{algorithm}[H]
    \SetAlgoLined
    \caption{Forward pass through the PT-TE-TNP ($\widetilde{T}$).}
    \SetKwInput{Input}{Input}
    \SetKwInOut{Output}{Output}
    \small
    \Input{~$\rho = \{\operatorname{TE-MHSA}^{(\ell)}, \operatorname{TE-MHCA}_1^{(\ell)}, \operatorname{TE-MHCA}_2^{(\ell)}\}_{\ell=1}^L$, $\phi$, $\bfV = \{\bfv_{m, 0}\}_{j=1}^M$, $\bfU = \{\bfu_{j, 0}\}_{m=1}^M$, $\bfz_0$. Context $\mcD_c = (\bfX_c, \bfY_c) = \{(\bfx_{c, n}, \bfy_{c, n})\}_{n=1}^{N_c}$, and target $\bfX_t = \{\bfx_{t, m}\}_{m=1}^{N_t}$}
    \Begin{
        $\bfz_{c, n} \gets \phi(\bfy_{c, n})$\;
        $\bfv_m \gets \bfv_{m, 0} + 1/N \sum_{i=1}^{N_c} \bfx_{c, i}$\;
        \HiLi $\bfu_j \gets \bfu_{j, 0} + t(\bfv_m)$\;
        $\bfz_{t, m} \gets \bfz_0$\;
        \For{$\ell = 1,\ldots, L$}{
            $\bfu_{j} \gets \operatorname{TE-MHCA}_1^{(\ell)}\left(\bfu_j, \bfZ_c, \bfv_j, \bfX_c\right)$\;
            $\{\bfu_{j}\}_{j=1}^{M} \gets \operatorname{TE-MHSA}^{(\ell)}\left(\bfU, \bfV\right)$\;
            $\bfz_{t, m} \gets \operatorname{TE-MHCA}_2^{(\ell)}\left(\bfz_{t, m}, \bfU, \bfz_{t, m}, \bfV\right)$\;
        }
        $\theta_m \gets \operatorname{MLP}(\bfz_{t, m})$\;
        \Return{$p(\cdot | \theta_m)$}\;
    }
    \label{alg:pt-ate-tnp}
\end{algorithm}
\end{minipage}

We incorporate the fixed inputs by modifying summing together the initial pseudo-token values with the fixed inputs evaluated at the corresponding pseudo-token input location. We provide pseudo-code for a forward pass through the approximately translation equivariant PT-TE-TNP in \Cref{alg:pt-ate-tnp}.

\subsection{Relaxed Convolutional Conditional Neural Process}
\label{subapp:relaxedconvcnp}
The RelaxedConvCNP is equivalent to the ConvCNP with the RelaxedCNN of \citet{wang2022approximately} used in place of a standard CNN. The RelaxedCNN replaces standard convolutions with relaxed convolutions, which we discuss in more detail in \Cref{app:equivalence_approximate_equivariance}. In short, the relaxed convolution is defined as
\begin{equation}
    (k\ \tilde{\ast}\ f)(u) = \int_G \sum_{l=1}^L f(v) w_l(v) k_l(u^{-1}v) d\mu(v).
\end{equation}
This modifies the kernel weights $k_l(u^{-1}v)$ with the input-dependent function $w_l(v)$, which is our fixed input. To enable the fixed inputs to be zeroed out, we modify this slightly as
\begin{equation}
    (k\ \tilde{\ast}\ f)(u) = \int_G \sum_{l=1}^L f(v) (1 + t_l(v)) k_l(u^{-1}v) d\mu(v)
\end{equation}
so that when $t_l(v) = 0$ we recover the standard $G$-equivariant convolution. We make $t_l(v)$ a learnable function parameterised by an MLP, as with the other approaches.

\section{Unification of Existing Approximately Equivariant Architectures}
\label{app:equivalence_approximate_equivariance}
In this section, we demonstrate that the input-dependent kernel approaches of \citet{wang2022approximately} and \citet{van2022relaxing} are special cases of our approach. Throughout this section, we consider scalar-valued functions. We define the action of $g \in G$ on $f: G\rightarrow \R$ as
\begin{equation}
    (g\cdot f)(u) = f(g^{-1}u).
\end{equation}

We begin with the approach of \citet{wang2022approximately}, which defines the relaxed group convolution as
\begin{equation}
    (k\ \tilde{\ast}\ f)(u) = \int_G \sum_{l=1}^L f(v) w_l(v) k_l(u^{-1}v) d\mu(v)
\end{equation}
where $\mu(v)$ is the left Haar measure on $G$. Observe that this replaces a single kernel with a set of kernels $\{k_l\}_{l=1}^L$, whose contributions are linearly combined with coefficients that vary with $v \in G$. This can be expressed as
\begin{equation}
    (k\ \tilde{\ast}\ f)(u) = \sum_{l=1}^L \underbrace{\int_G f(v) w_l(v) k_l(u^{-1}v) d\mu(v)}_{E_l(f, w_l)(u)} = \sum_{l=1}^L E_l(f, w_l) = E(f, w_1, \ldots, w_L)(u)
\end{equation}
where $w_l$ correspond to the fixed inputs and $E$ is used to denote equivariant operators w.r.t.\ the group $G$. Clearly each $E_l$ is $G$-equivariant with respect to its inputs, as applying a transformation $g\in G$ to the inputs gives
\begin{equation}
\begin{aligned}
    E_l(g\cdot f, g\cdot w_l)(u) &= \int_G f(g^{-1}v) w_l(g^{-1}v) k_l(u^{-1}v) d\mu(v) \\
    &= \int_G f(v) w_l(v) k_l(u^{-1}gv) d\mu(v) \\
    &=\int_G f(v) w_l(v) k_l((g^{-1}u)^{-1}v) d\mu(v) \\
    &= g\cdot E_l(f, w_l)(u).
\end{aligned}
\end{equation}
Since the sum of $G$-equivariant functions is itself $G$-equivariant, $E(f, w_1, \ldots, w_L)(u)$ is $G$-equivariant with respect to its inputs.

The approach of \citet{van2022relaxing} is more general than that of \citet{wang2022approximately}, and relax strict equivariance through convolutions with input-dependent kernels:
\begin{equation}
    (k\ \tilde{\ast}\ f)(u) = \int_G k(v^{-1}u, v) f(v) d\mu(v).
\end{equation}
Define a fixed input $t(u) = u$. We can express the above convolution as a $G$-equivariant operation on $t$ and $f$:
\begin{equation}
    E(f, t)(u) = \int_G k(v^{-1}u, t(v)) f(v) d\mu(v).
\end{equation}
To demonstrate $G$-equivariance, consider applying a transformation $g \in G$ to the inputs:
\begin{equation}
\begin{aligned}
    E(g\cdot f, g\cdot t)(u) &= \int_G k(v^{-1}u, t(g^{-1}v)) f(g^{-1}v) d\mu(v) \\
    &= \int_G k(v^{-1}g^{-1}u, t(v)) f(v) d\mu(v) \\
    &= \int_G k(v^{-1}(g^{-1}u), t(v)) f(v) d\mu(v) \\
    &= g\cdot E(f, t)(u).
\end{aligned}
\end{equation}
Thus, this method is also a special case of ours.

\section{Achieving \texorpdfstring{$\epsilon$-}{}Approximate \texorpdfstring{$G$-}{}Equivariance}
\label{app:approximate-equivariance}
\cite{wang2022data} introduce a useful metric for assessing the degree to which mappings are approximately equivariant. Specifically, let $\pi\colon X \to Y$ denote some mapping between $G$-spaces $X$ and $Y$. Let $d\colon Y\times Y\to \R$ be a metric on $Y$. We say $\pi$ is $\epsilon$-approximately $G$-equivariant if for all $g\in G$ and $x\in X$
\begin{equation}
    d\big(\pi(gx),\ g\pi(x)\big) \leq \epsilon.
\end{equation}

Let $\pi\ss{AE-NP}\colon \mcS\to \mcC(\mcX, \Theta)$ be a learned neural process that breaks equivariance with fixed additional inputs $(t_b)_{b=1}^B$, and let $\pi\ss{E-NP}\colon \mcS\to \mcC(\mcX, \Theta)$ be the strictly equivariant neural process obtained by setting $t_b = 0$ in $\pi\ss{AE-NP}$.
If the predictions of $\pi\ss{AE-NP}$ are not too different from those of $\pi\ss{E-NP}$, because the fixed additional inputs only affect the predictions in a limited way, then in this appendix we argue that $\pi\ss{AE-NP}$ is $\epsilon$-approximately equivariant.

To assume that predictions of $\pi\ss{AE-NP}$ are not too different from those of $\pi\ss{E-NP}$, assume that there exists some metric $d$ and $\epsilon >0$ such that, for all $\mcD$, $d(\pi\ss{AE-NP}(\mcD), \pi\ss{E-NP}(\mcD)) \le \epsilon$.
For example, the metric can be the root-mean-squared-error (RMSE) between the mean functions $x \mapsto \mathbb{E}[f(x)]$ of two stochastic processes by integrating over the input domain $\mcX$.
In practice, we will find that $d(\pi\ss{AE-NP}(\mcD), \pi\ss{E-NP}(\mcD)) \le \epsilon$ holds for $\mcD$ over the training domain.
If we assume that $\pi\ss{AE-NP}$ has a finite receptive field---in the sense that the output stochastic process has only local dependencies---then, provided we zero out the additional fixed inputs as described in \Cref{sec:aenp}, $d(\pi\ss{AE-NP}(\mcD), \pi\ss{E-NP}(\mcD)) \le \epsilon$ will hold over the entire input domain.

Finally, to show that $\pi\ss{AE-NP}$ is $\epsilon$-approximately equivariant, we use the triangle inequality:
\begin{align}
    d\big(g\pi\ss{AE-NP}(\mcD), \pi\ss{AE-NP}(gD)\big)
    &\leq
        \underbracket[1pt]{d\big(g\pi\ss{AE-NP}(\mcD), g\pi\ss{E-NP}(\mcD)\big)}_{{}\le \epsilon}
            + \underbracket[1pt]{d\big(g\pi\ss{E-NP}(\mcD), \pi\ss{E-NP}(gD)\big)}_{= 0} \nonumber \\
    &\phantom{\leq} + \underbracket[1pt]{d\big(\pi\ss{E-NP}(gD), \pi\ss{AE-NP}(gD)\big)}_{\le \epsilon} \\
    &\leq 2 \epsilon,
\end{align}
where $d\big(g\pi\ss{E-NP}(\mcD), \pi\ss{E-NP}(gD)\big) = 0$ because $\pi\ss{E-NP}$ is strictly equivariant.


\section{Experiment Details and Additional Results}
\label{app:experiment_details}

\subsection{Synthetic 1-D Regression}
\label{subapp:1d-regression}
We consider a synthetic 1-D regression task using samples drawn from Gaussian processes (GPs) with the Gibbs kernel with an observation noise of 0.2. This kernel is a non-stationary generalisation of the squared exponential kernel, where the lengthscale parameter becomes a function of position $l(x)$: 

$$k(x, x'; l) = \sqrt{\Bigg(\frac{2l(x)l(x')}{l(x)^2 + l(x')^2}\Bigg)} \exp{\Bigg(-\frac{(x-x')^2}{l(x)^2 + l(x')^2}\Bigg)}$$
We consider a 1-D space with two regions with constant, but different lengthscale - one with $l=0.1$, and one with $l=4.0$. The lengthscale changepoint is situated at $x=0$. We randomly sample with a 0.5 probability the orientation (left/right) of the low/high lengthscale region. Formally, $l(x) = (0.1\beta + 4(1-\beta))\delta[x < 0] + (0.1(1-\beta) + 4\beta)\delta[x \geq 0]$, where $\beta \sim \text{Bern}(0.5)$. For each task, we sample the number of context points $N_c \sim \mcU\{1, 64\}$ and set the number of target points to $N_t = 128$. The context range $[\bfx_{c, \text{min}}, \bfx_{c, \text{max}}]$ (from which the context points are uniformly sampled) is an interval of length 4, with its centre randomly sampled according to $\mcU_{[-7, 7]}$ for the ID task, and according to $\mcU_{[13, 27]}$ for the OOD task. The target range is $[\bfx_{t, \text{min}}, \bfx_{t, \text{max}}] = [\bfx_{c, \text{min}} - 1, \bfx_{c, \text{max}} + 1]$. This is also applicable during testing, with the test dataset consisting of 80,000 datasets.

We use an embedding / token size of $D_z = 128$ for the TNP-based models and $D_z = 64$ for the ConvCNP-based ones, and a decoder consisting of an MLP with two hidden layers of dimension $D_z$. The decoder parameterises the mean and pre-softplus variance of a Gaussian likelihood with heterogeneous noise. Model specific architectures are as follows:

\paragraph{TNP}
The initial context tokens are obtained by passing the concatenation $[x, y, 1]$ through an MLP with two hidden layers of dimension $D_z$. The initial target tokens are obtained by passing the concatenation $[x, 0, 0]$ through the same MLP. The final dimension of the input acts as a `density' channel to indicate whether or not an observation is present. The TNP encoder consists of nine layers of self-attention and cross-attention blocks, each with $H=8$ attention heads with dimensions $D_V = D_{QK} = 16$. In each of the attention blocks, we apply a residual connection consisting of layer-normalisation to the input tokens followed by the attention mechanism. Following this, there is another residual connection consisting of a layer-normalisation followed by a pointwise MLP with two hidden layers of dimension $D_z$.

\paragraph{TNP ($T$)}
For the TNP ($T$) model we follow \citet{ashman2024translation}. The architecture is similar to the TNP model, with the attention blocks replaced with their translation equivariant counterparts. For the translation equivariant attention mechanisms, we implement $\rho^{\ell}: \R^H \times \R^{D_x} \rightarrow \R^H$ as an MLP with two hidden layers of dimension $D_z$. 
The initial context token embeddings are obtained by passing the context observations through an MLP with two hidden layers of dimension $D_z$. The initial target token embeddings are sampled from a standard normal. Pseudo-code for a forward pass through the TNP ($T$) can be found in \Cref{subapp:tetnp}.

\paragraph{TNP ($\widetilde{T}$)}
The architecture of the TNP ($\widetilde{T}$) is similar to that of the TNP ($T$), with the exception of the extra fixed inputs that are added to the context token representation. These are obtained by first performing a Fourier expansion to the context token locations, and then passing the result through an MLP with two hidden layers of dimension $D_z$. We use four Fourier coefficients, zero out the fixed inputs outside of [-7, 7], and during training we drop them out with a probability of 0.5. Pseudo-code for a forward pass through the TNP ($\widetilde{T}$) can be found in \Cref{subapp:tetnp}.

\paragraph{ConvCNP ($T$)}
For the ConvCNP model, we use a CNN with 9 layers. We use $C=64$ channels, a kernel size $k=21$ with a stride of one. The input domain is discretised with 46 points per unit. The decoder uses five different learned lengthscales to map the output of the CNN back to a continuous function. Pseudo-code for a forward pass through the ConvCNP ($T$) can be found in \Cref{subapp:convcnp}.

\paragraph{ConvCNP ($\widetilde{T}$)}
The ConvCNP ($\widetilde{T}$) closely follows the ConvCNP ($T$) model, with the main difference being in the input to the model. For the approximately equivariant model, we sum up the output of the resized functional embedding with the representation of the fixed inputs. The latter is obtained by passing the input locations (that lie on the grid) through and MLP with two hidden layers of dimension $C$. We consider $C$ fixed inputs, we zero them out outside of [-7, 7] and during training we drop them out with a probability of 0.1. Pseudo-code for a forward pass through the ConvCNP ($\widetilde{T}$) can be found in \Cref{subapp:convcnp}.

\paragraph{RelaxedConvCNP ($\widetilde{T}$)}
We use an identical architecture to the ConvCNP($T$), with regular convolutional operations replaced by relaxed convolutions.
We use $L=1$ kernels such that the total parameter count remains similar (see \Cref{subapp:relaxedconvcnp})

\paragraph{EquivCNP ($E$)}
We use an identical architecture to the ConvCNP ($T$), with symmetric convolutions in place of regular convolutions. Pseudo-code for a forward pass through the EquivCNP ($E$) can be found in \Cref{subapp:equivcnp}.

\paragraph{EquivCNP ($\widetilde{E}$)}
We use an identical architecture to the ConvCNP ($\widetilde{T}$), with symmetric convolutions in place of regular convolutions. Pseudo-code for a forward pass through the EquivCNP ($\widetilde{E}$) can be found in \Cref{subapp:equivcnp}.

\paragraph{Training Details and Compute}
For all models, we optimise the model parameters using AdamW \citep{loshchilov2017decoupled} with a learning rate of $5\times 10^{-4}$ and batch size of 16. Gradient value magnitudes are clipped at 0.5. We train for a maximum of 500 epochs, with each epoch consisting of 16,000 datasets (10,000 iterations per epoch). We evaluate the performance of each model on 80,000 test datasets. We train and evaluate all models on a single 11 GB NVIDIA GeForce RTX 2080 Ti GPU.

\paragraph{Additional Results}
In \Cref{fig:gp_regression_plot} we compared the predictive distributions of the eight considered models for a test dataset where the context range spanned both the low and high-lengthscale regions. In \Cref{fig:gp_regression_plot_additional_low_l} and \Cref{fig:gp_regression_plot_additional_high_l} we provide additional examples where the context range only spans one region (the low-lengthscale one in \Cref{fig:gp_regression_plot_additional_low_l} and high-lengthscale one in \Cref{fig:gp_regression_plot_additional_high_l}). 

\Cref{fig:gp_regression_plot_additional_low_l} shows that the non-equivariant model (TNP) does not produce well-calibrated uncertainties far away from the context region. The equivariant models underestimate the uncertainty near the change point location $x=0$, giving rise to overly-confident predictions at the transition between the low and high-lengthscale regions. 
For the TNP ($T$) the uncertainties remain low throughout the entire high-lengthscale region. In contrast, the approximately equivariant models manage to more accurately capture the uncertainties beyond the transition point, into the high-lengthscale region. This indicates that the approximately equivariant models are better suited to cope with non-stationarities in the data.

When the context region only spans the high-lengthscale region, both the strictly and approximately equivariant models output predictions that closely follow the ground truth. However, the non-equivariant model completely fails to generalise, outputting almost symmetric predictions about the origin ($x=0$). We hypothesise this is because of its inability to generalise, resulting from the lack of suitable inductive biases. 

\begin{figure}
    \centering
    \begin{subfigure}[t]
    {1\textwidth}
        \centering\includegraphics[width=0.6\textwidth,trim={5 430 10 10},clip]{figs/GP/Legend2.png}
    \end{subfigure}
    \hfill
    \begin{subfigure}[t]
    {0.245\textwidth}
        \includegraphics[width=\textwidth,trim={20 20 15 12},clip]{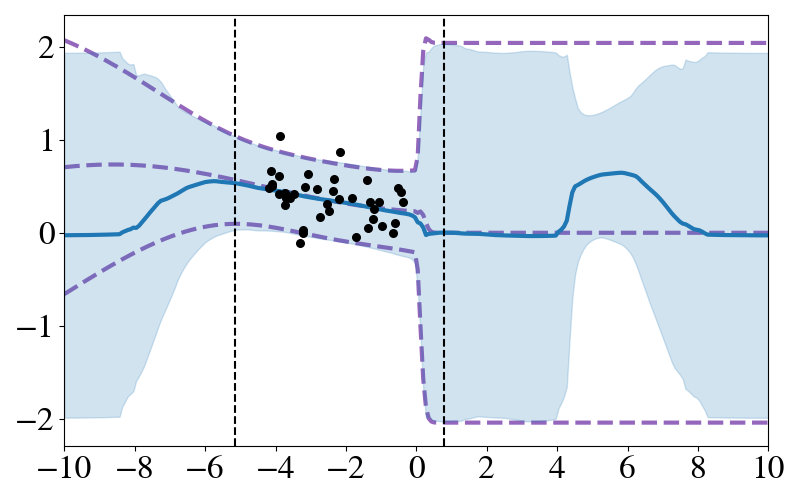}
        \vspace{-13pt}
        \caption{TNP.}
    \end{subfigure}
    \hfill
    \begin{subfigure}[t]{0.245\textwidth}
        \includegraphics[width=\textwidth,trim={20 20 15 12},clip]{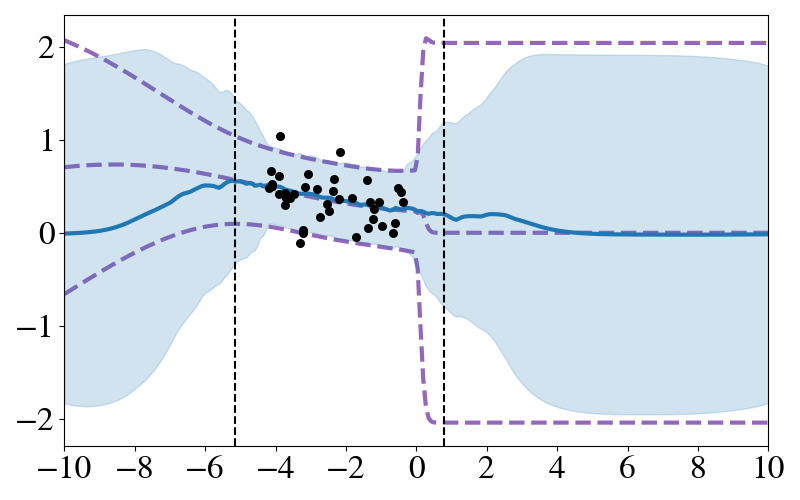}
        \vspace{-13pt}
        \caption{ConvCNP ($T$).}
    \end{subfigure}
    \hfill
    \begin{subfigure}[t]{0.245\textwidth}
        \includegraphics[width=\textwidth,trim={20 20 15 12},clip]{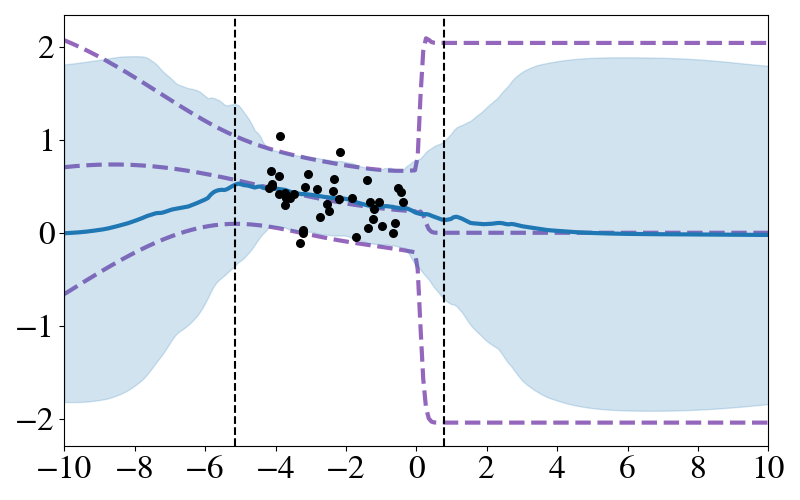}
        \vspace{-13pt}
        \caption{EquivCNP ($E$).}
    \end{subfigure}
    \hfill
    \begin{subfigure}[t]{0.245\textwidth}
        \includegraphics[width=\textwidth,trim={20 20 15 12},clip]{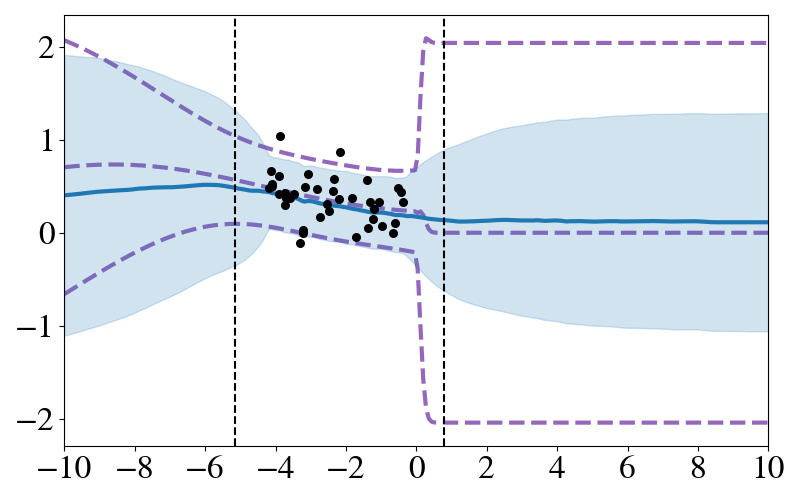}
        \vspace{-13pt}
        \caption{TNP ($T$).}
    \end{subfigure}
    \hfill
    \begin{subfigure}[t]{0.245\textwidth}
        \includegraphics[width=\textwidth,trim={20 20 15 12},clip]{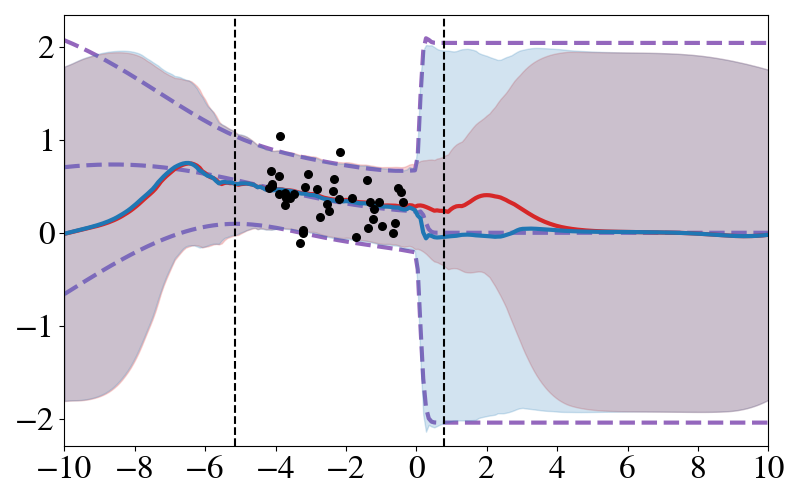}
        \vspace{-13pt}
        \caption{ConvCNP ($\widetilde{T}$).}
    \end{subfigure}
    \hfill
    \begin{subfigure}[t]{0.245\textwidth}
        \includegraphics[width=\textwidth,trim={20 20 15 12},clip]{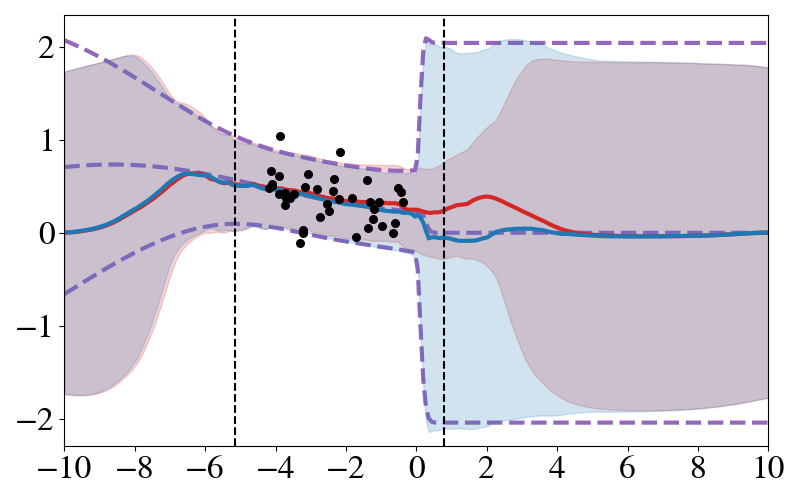}
        \vspace{-13pt}
        \caption{RelaxedConvCNP ($\widetilde{T}$).}
    \end{subfigure}
    \hfill
    \begin{subfigure}[t]{0.245\textwidth}
        \includegraphics[width=\textwidth,trim={20 20 15 12},clip]{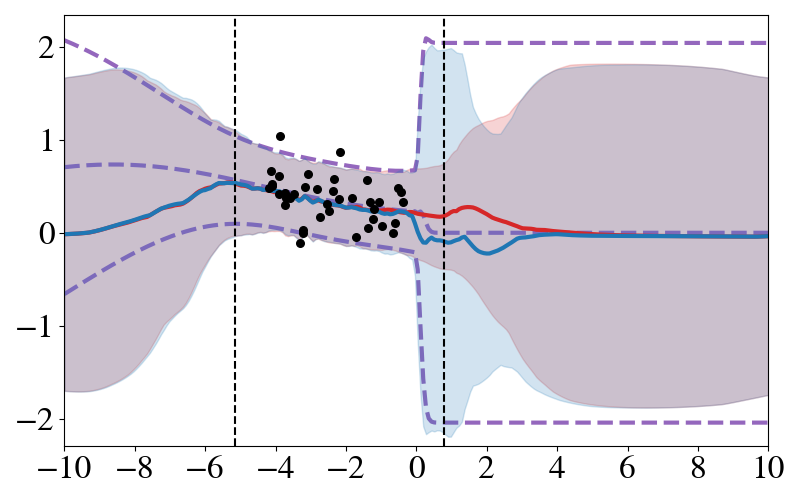}
        \vspace{-13pt}
        \caption{EquivCNP ($\widetilde{E}$).}
    \end{subfigure}
    \hfill
    \begin{subfigure}[t]{0.245\textwidth}
        \includegraphics[width=\textwidth,trim={20 20 15 12},clip]{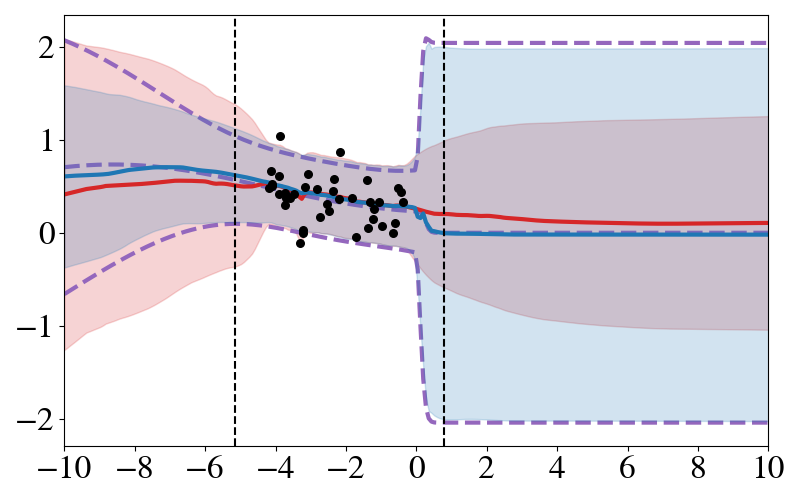}
        \vspace{-13pt}
        \caption{TNP ($\widetilde{T}$).}
    \end{subfigure}
    \caption{A comparison between the predictive distributions on a single synthetic 1D regression dataset of the TNP-, ConvCNP-, and EquivCNP-based models with different inductive biases (non-equivariant, equivariant, or approximately equivariant). Unlike in \Cref{fig:gp_regression_plot}, the context range only spans the low-lengthscale region. For the approximately equivariant models, we plot both the model prediction (blue), as well as the predictions obtained without using the fixed inputs, which results in a strictly equivariant model (red). The approximately equivariant models are the only ones able to correctly capture the uncertainties around the lengthscale change point ($x=0$).}
    \label{fig:gp_regression_plot_additional_low_l}
\end{figure}

\begin{figure}
    \centering
    \begin{subfigure}[t]
    {1\textwidth}
        \centering\includegraphics[width=0.6\textwidth,trim={5 430 10 10},clip]{figs/GP/Legend2.png}
    \end{subfigure}
    \hfill
    \begin{subfigure}[t]
    {0.245\textwidth}
        \includegraphics[width=\textwidth,trim={20 20 15 12},clip]{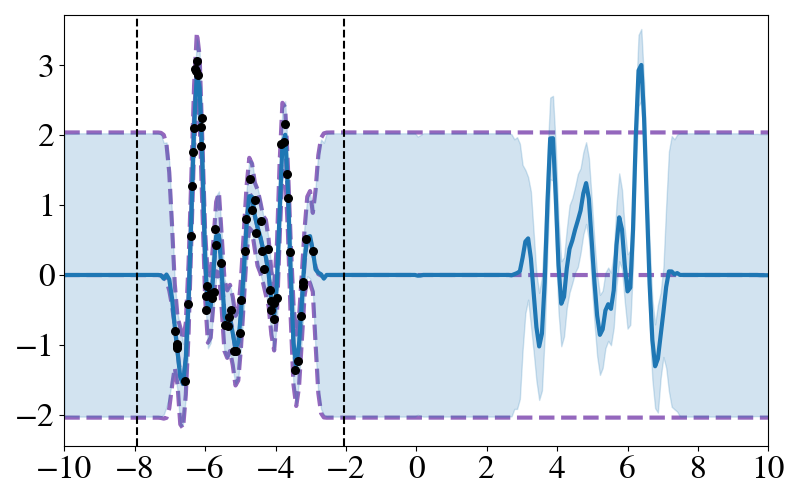}
        \vspace{-13pt}
        \caption{TNP.}
    \end{subfigure}
    \hfill
    \begin{subfigure}[t]{0.245\textwidth}
        \includegraphics[width=\textwidth,trim={20 20 15 12},clip]{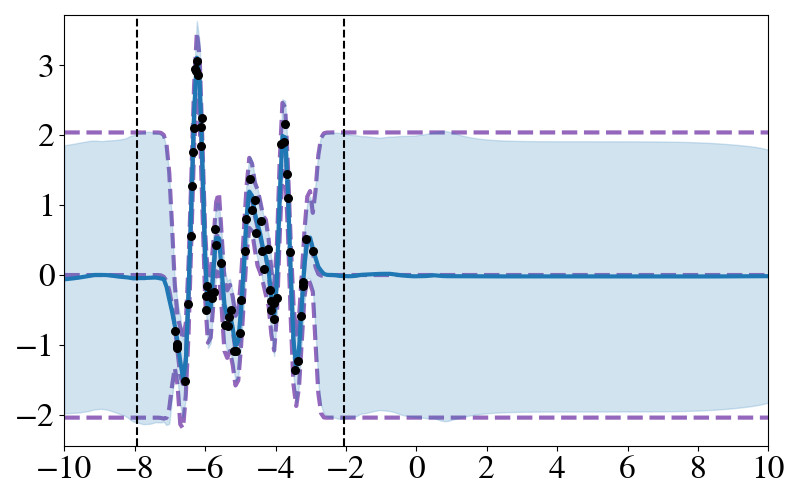}
        \vspace{-13pt}
        \caption{ConvCNP ($T$).}
    \end{subfigure}
    \hfill
    \begin{subfigure}[t]{0.245\textwidth}
        \includegraphics[width=\textwidth,trim={20 20 15 12},clip]{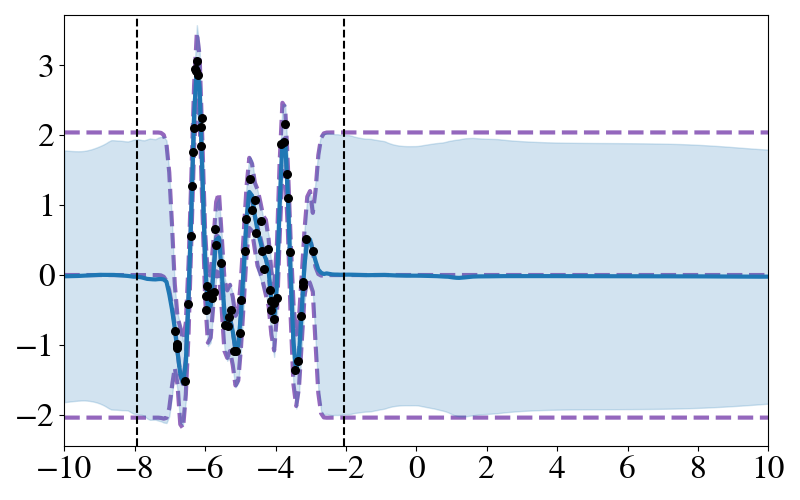}
        \vspace{-13pt}
        \caption{EquivCNP ($E$).}
    \end{subfigure}
    \hfill
    \begin{subfigure}[t]{0.245\textwidth}
        \includegraphics[width=\textwidth,trim={20 20 15 12},clip]{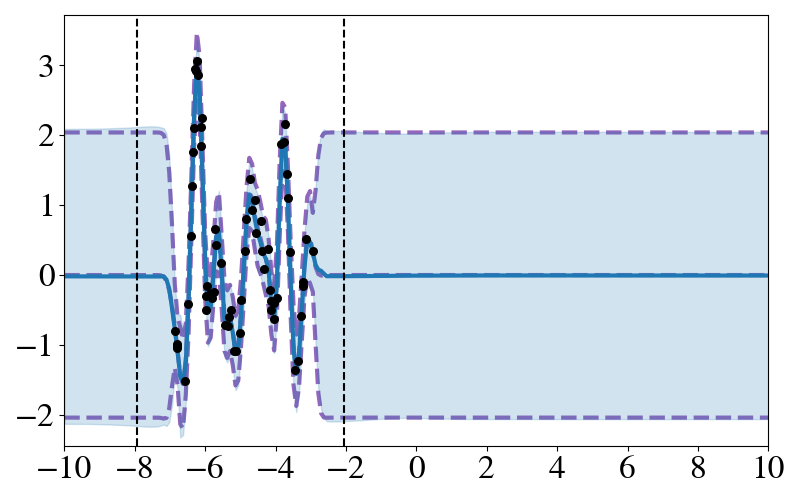}
        \vspace{-13pt}
        \caption{TNP ($T$).}
    \end{subfigure}
    \hfill
    \begin{subfigure}[t]{0.245\textwidth}
        \includegraphics[width=\textwidth,trim={20 20 15 12},clip]{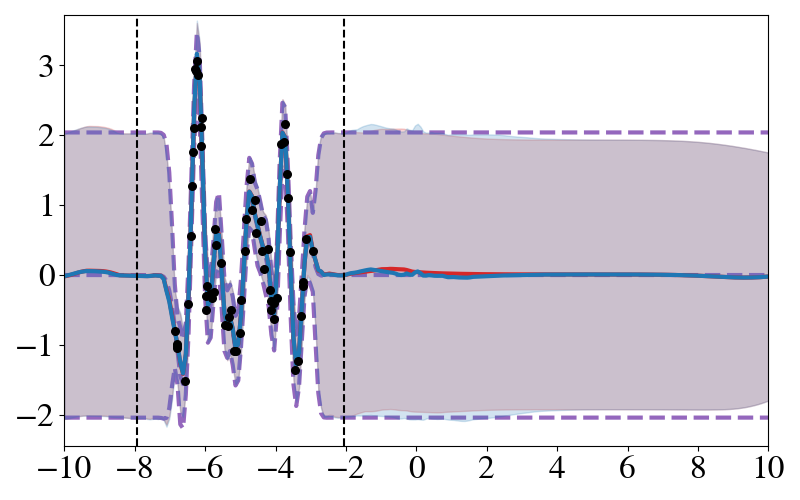}
        \vspace{-13pt}
        \caption{ConvCNP ($\widetilde{T}$).}
    \end{subfigure}
    \hfill
    \begin{subfigure}[t]{0.245\textwidth}
        \includegraphics[width=\textwidth,trim={20 20 15 12},clip]{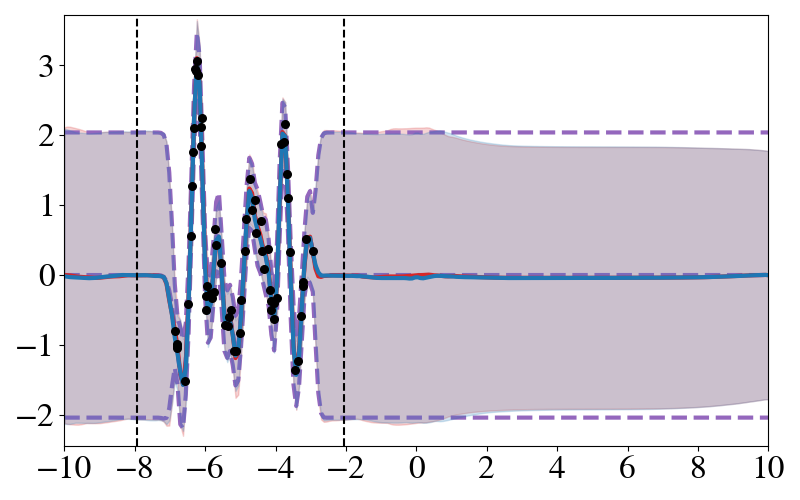}
        \vspace{-13pt}
        \caption{RelaxedConvCNP ($\widetilde{T}$).}
    \end{subfigure}
    \hfill
    \begin{subfigure}[t]{0.245\textwidth}
        \includegraphics[width=\textwidth,trim={20 20 15 12},clip]{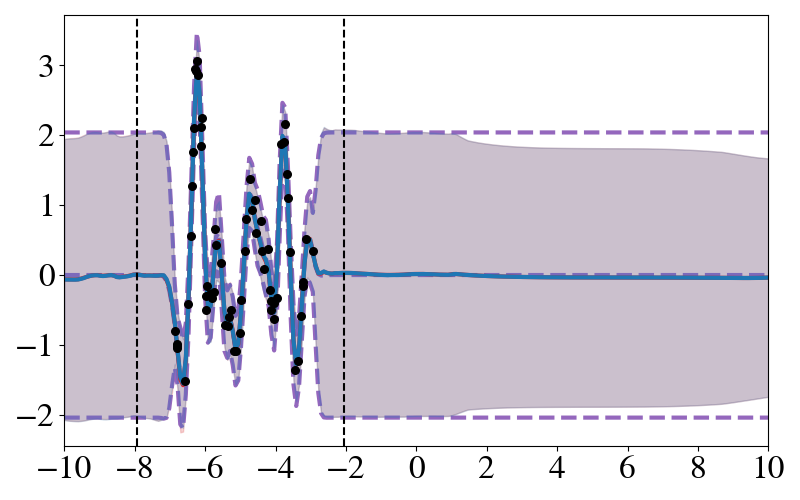}
        \vspace{-13pt}
        \caption{EquivCNP ($\widetilde{E}$).}
    \end{subfigure}
    \hfill
    \begin{subfigure}[t]{0.245\textwidth}
        \includegraphics[width=\textwidth,trim={20 20 15 12},clip]{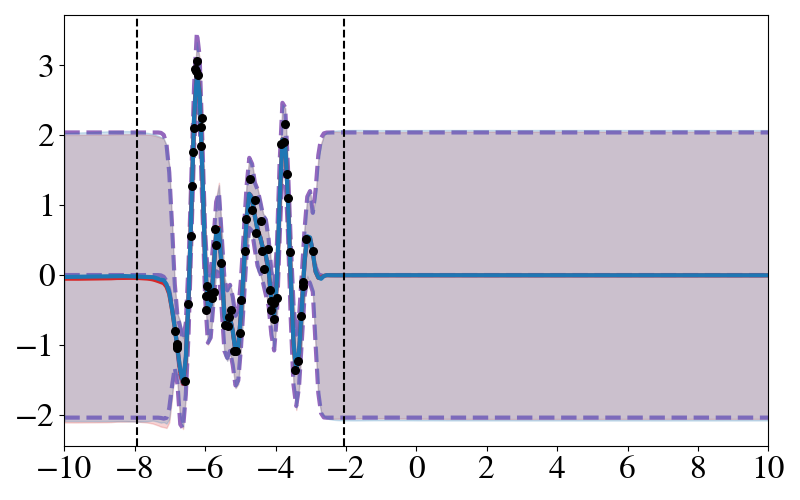}
        \vspace{-13pt}
        \caption{TNP ($\widetilde{T}$).}
    \end{subfigure}
    \caption{A comparison between the predictive distributions on a single synthetic 1D regression dataset of the TNP-, ConvCNP-, and EquivCNP-based models with different inductive biases (non-equivariant, equivariant, or approximately equivariant). The context range only spans the high-lengthscale region. For the approximately equivariant models, we plot both the model prediction (blue), as well as the predictions obtained without using the fixed inputs, which results in a strictly equivariant model (red). Both the strictly  and approximately equivariant models output predictions that closely resemble the ground truth, but the non-equivariant TNP model completely fails to generalise.}
    \label{fig:gp_regression_plot_additional_high_l}
\end{figure}

\paragraph{Log-likelihood of context data} We assess the model's ability to reconstruct the context set by setting the target set equal to the context set and computing the log-likelihood. Note that, unlike the log-likelihood values from \cref{tab:gp-results}, where the target range expands beyond the context range, now the model is just tested within the context range, leading to higher overall values.

\begin{table}
    \centering
     \caption{Average log-likelihoods (\textcolor{OliveGreen}{$\*\uparrow$}) for the synthetic 1-D GP experiment when tested on context sets. Ground truth log-likelihood is $0.2806 \pm 0.0005$.}
     \small
    \label{tab:gp-results-context-loglik}
    \begin{tabular}{rcc}
        \toprule
        Model & Context log-likelihood (\textcolor{OliveGreen}{$\*\uparrow$})\\
        \midrule
         TNP & $0.2296 \pm 0.0007$\\
         TNP ($T$) & $0.2396 \pm 0.0013$  \\ 
         TNP ($\widetilde{T}$) & $0.2344 \pm 0.0020$\\ 
        \midrule
        ConvCNP ($T$) & $0.2362 \pm 0.0007$\\
        ConvCNP ($\widetilde{T}$) & $0.2218 \pm 0.0009$\\
        RelaxedConvCNP ($\widetilde{T}$) & $0.2381 \pm 0.0008$\\
        \midrule
        EquivCNP ($E$) & $0.2213 \pm 0.0007$\\
        EquivCNP ($\widetilde{E}$) & $0.1992 \pm 0.0010$\\
        \bottomrule
    \end{tabular}
\end{table}

The results in \cref{tab:gp-results-context-loglik} show that the model is able to accurately model the context sets, with log-likelihood values close to the ground truth log-likelihood of the data of $0.2806 \pm 0.0005$.

\paragraph{Analysis of equivariance deviation} We next analyse the approximately equivariant behaviour of our models. More specifically, we show that while breaking equivariance to a certain degree, the models still share similarities with the fully-equivariant models---thus being characterised as approximately equivariant. This is already visually depicted in \cref{fig:gp_regression_plot}, where the predictive means of the approximately equivariant models are similar to the predictive means of the corresponding equivariant models up to some `small perturbation'. 

We also show this quantitatively, by introducing the equivariance deviation 

$$\Delta_{\text{equiv}} = \mathbb{E}\Bigg[\Big\|\frac{\mu_\text{equiv} - \mu_\text{approx-equiv}}{\mu_\text{equiv}}\Big\|_p\Bigg],$$

where $\mu_\text{approx-equiv}$ represents the predictive mean of the approximately equivariant model, and $\mu_\text{equiv}$ the predictive mean of the same model with zeroed-out basis functions (i.e.\ the equivariant component of the approximately equivariant model). In this case, we consider the $L_1$ norm (i.e.\ $p=1$).

\begin{table}[h]
    \centering
     \caption{Equivariance deviation ($\Delta_{\text{equiv}}$) of the approximately equivariant models in the 1-D synthetic GP experiment.}
     \small
    \label{tab:gp-results-equiv-deviation}
    \begin{tabular}{rc}
        \toprule
        Model &  $\Delta_{\text{equiv}}$\\
        \midrule 
         TNP ($\widetilde{T}$) & $0.0896 \pm 0.0011$\\ 
        ConvCNP ($\widetilde{T}$) & $0.0823 \pm 0.0005$\\
        RelaxedConvCNP ($\widetilde{T}$) & $0.1460 \pm 0.0006$\\
        EquivCNP ($\widetilde{E}$) & $0.0825 \pm 0.0006$\\
        \bottomrule
    \end{tabular}
\end{table}

The equivariance deviations from \cref{tab:gp-results-equiv-deviation} are around $8-9\%$ (with the exception of the RelaxedConvCNP ($\widetilde{T}$) which shows a $15\%$ deviation). This indicates that the approximately equivariant models' predictions deviate only slightly from the equivariant predictions, thus being equivariant in the approximate sense. Moreover, as pictured in \cref{fig:gp_regression_plot}, the deviation from strict equivariance is not random, and allows the models to learn symmetry-breaking features in a data-driven manner (in this case, the lengthscale changepoint).

\paragraph{Analysis of number of fixed inputs} In this ablation, we analyse the effect of the number of fixed inputs $B$. As mentioned in the main text, the number of fixed inputs influences the number of degrees of freedom in which the decoder deviates from $G$-equivariance. Thus, $0$ fixed inputs leads to a $G$-equivariant model, while in the limit of infinitely many fixed inputs, the decoder can become fully non-equivariant.

In the main experiment we used $64$ fixed inputs for the ConvCNP $(\widetilde{T}$) and $4$ Fourier coefficients for TNP $(\widetilde{T}$). \Cref{tab:1d-gp-results-fixed-inputs} shows the results when varying the number of fixed inputs $B$ for the two models. We observe that the biggest gain in performance is achieved by going from $B=0$ to $B=1$ basis functions. In this dataset, the approximate equivariance is achieved by modifying one parameter (i.e.\ lengthscale) of the data generation process from one side of the changepoint location to the other. 
Indeed, for the TNP ($\widetilde{T}$) we observe that the performance plateaus for $B \geq 1$, suggesting that one Fourier coefficient is enough to capture the symmetry-breaking feature of the dataset. For the ConvCNP ($\widetilde{T}$) the performance increases up to $B \geq 4$, and for $B \geq 1$ the test log-likelihoods are within $3$ standard deviations for all variants. 
Thus, the empirical findings are in agreement with the data generation process, indicating that deviation in one or a couple of degrees of freedom suffices to capture the departure from strict equivariance. 
What is more, we see that using more fixed inputs than necessary does not hurt the performance of the model (when considering the standard deviations). This motivates in practice the use a `sufficiently high number' of fixed inputs, to make sure the model is given enough flexibility to correctly capture the departure from strict equivariance.

\begin{table}[h]
    \centering
    \caption{Average test log-likelihoods (\textcolor{OliveGreen}{$\*\uparrow$}) of the TNP ($\widetilde{T}$) and the ConvCNP ($\widetilde{T}$) models when varying the number of fixed inputs. All standard deviations are $0.004$.}
    \vspace{0.2cm}
    \label{tab:1d-gp-results-fixed-inputs}
    \begin{tabular}{ccccccc}
    \toprule
    & \multicolumn{6}{c}{No. fixed inputs} \\
    \cmidrule{2-7}
    Model & 0 & 1 & 2 & 4 & 8 & 16 \\
    \midrule
    TNP ($\widetilde{T}$) & $-0.488$ & $-0.409$ & $-0.410$ & $-0.406$ & $-0.408$ & $-0.403$ \\
    ConvCNP ($\widetilde{T}$) & $-0.499$ & $-0.451$ & $-0.439$ & $-0.430$ & $-0.433$ & $-0.432$ \\
    \bottomrule
    \end{tabular}
\end{table}


\subsection{Smoke Plumes}
\label{subapp:2d-smoke}
The smoke plume dataset consists of $128 \times 128$ 2-D smoke simulations for different initial conditions generated through PhiFlow\citep{Holl2020Learning}. Hot smoke is emitted from a circular region at the bottom, and the simulations output the resulting air flow in a closed box. We also introduce a fixed obstacle at the top of the box. 
To obtain a variety of initial conditions we sample the radius of the smoke source uniformly according to $r \sim \mcU[5, 30]$. Moreover, we randomly choose its position among three possible x-axis locations: $\{30, 64, 110\}$ (but we keep the $y$ position fixed at 5). Finally, we sample the buoyancy coefficient of the medium according to $B \sim \mcU[0.1, 0.5]$. The closed box, the fixed spherical obstacle, and the position of the smoke inflow (sampled out of three possible locations) break the symmetry in this dynamical system.

For each initial condition, we run the simulation for 35 time-steps with a time discretisation of $\Delta t = 0.5$, and only keep the last state as one datapoint. We show in \Cref{fig:smoke_additional} examples of such states.
In total, we generate samples for 25,000 initial condition, and we use 20,000 for training, 2,500 for validation, and the remaining for test.

\begin{figure}[t]
    \centering
    \begin{subfigure}[t]{0.45\textwidth}
        \includegraphics[width=\textwidth]{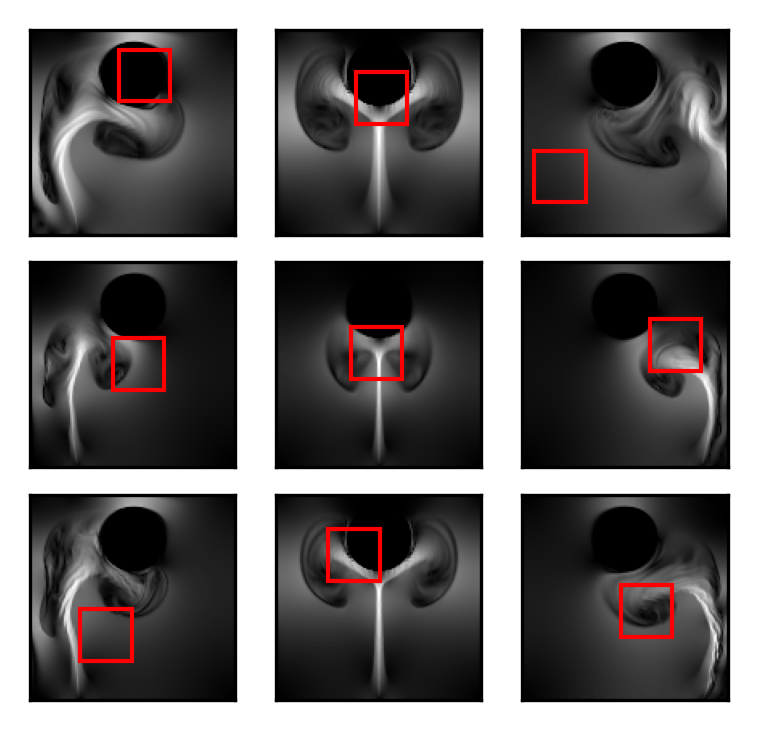}
    \end{subfigure}
    \hfill
    \begin{subfigure}[t]{0.45\textwidth}
        \includegraphics[width=\textwidth]{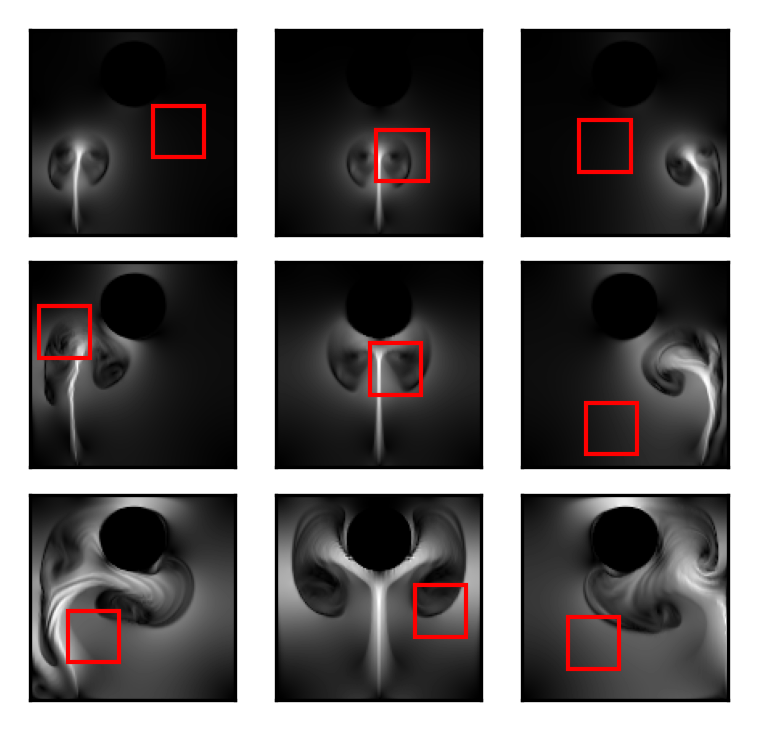}
    \end{subfigure}
    \caption{Examples of smoke simulations from the smoke plume dataset for six different combinations of smoke radius $r$ and buoyancy $B$. For each such combination, we show the resulting state for all of the three possible x-axis locations. The inputs to our models are randomly sampled $32 \times 32$ patches (indicated in red) from the $128 \times 128$ states.}
    \label{fig:smoke_additional}
\end{figure}

The inputs consist of $[32, 32]$ regions sub-sampled from the $[128, 128]$ grid. Each dataset consist of a maximum of $N = 1024$ datapoints, from which the number of context points are sampled according to $N_c \sim U\{10, 250\}$, with the remaining points set as the target points.

We use an embedding / token size of $D_z = 128$ for the PT-TNP-based models and $D_z = 16$ for the ConvCNP-based models. The decoder consists of an MLP with two hidden layers of dimension $D_z$. The decoder parameterises the mean and pre-softplus standard deviation of a Gaussian likelihood with heterogeneous noise. Model specific architectures are as follows:
\paragraph{PT-TNP} For the PT-TNP models we use the same architecture dimensions as the TNP described in \Cref{subapp:1d-regression}. We use the IST-style implementation of the PT-TNP \citep{ashman2024translation}, with initial pseudo-token values
sampled from a standard normal distribution. We use 128 pseudo-tokens.

\paragraph{PT-TNP ($T$)} The PT-TNP ($T$) models adopt the same architecture choices as the TNP ($T$) described in \Cref{subapp:1d-regression}. The initial pseudo-tokens and pseudo-input-locations are sampled from a standard normal. We use 128 pseudo-tokens. Pseudo-code for a forward pass through the PT-TNP ($T$) can be found in \Cref{subapp:pttetnp}.

\paragraph{PT-TNP ($\widetilde{T}$)} The architecture of the PT-TNP ($\widetilde{T}$) is similar to that of the PT-TNP ($T$), with the exception of the extra fixed inputs that are added to the pseudo-token representation. These are obtained by passing the pseudo-token locations through an MLP with two hidden layers of dimension $D_z$. We use $D_z$ fixed inputs and apply dropout to them with a probability of $0.5$. Pseudo-code for a forward pass through the PT-TNP ($\widetilde{T}$) can be found in \Cref{subapp:pttetnp}.

\paragraph{ConvCNP ($T$)} For the ConvCNP model, we use a U-Net \citep{ronneberger2015u} architecture for the CNN with 9 layers. We use $C_{in}=16$ input channels and $C_{out} = 16$ output channels, with the number of channels doubling / halving on the way down / up. Between each downwards layer we apply pooling with size two, and between each upwards layer we linearly up-sample to recover the size.  We use a kernel size of $k=9$ with a stride of one. We use the natural discretisation of the $128 \times 128$ grid.

\paragraph{ConvCNP ($\widetilde{T}$)} We use the same architecture as the ConvCNP ($T$). The fixed inputs are obtained by passing the discretised grid locations through an MLP with two hidden layers of dimension $C_{in}$. We use $C_{in}$ fixed inputs and apply dropout with probability $0.5$.

\paragraph{RelaxedConvCNP ($\widetilde{T}$)} We use an identical architecture to the ConvCNP ($T$), with regular convolutional operations replaced by relaxed convolutions. We use $L=1$ kernels such that the total parameter count remains similar. We obtain the additional fixed inputs by passing the effective discretised grid at each layer (after applying the same pooling / up-sampling operations as the U-Net) through an MLP with two hidden layers of dimension $C_{in}$.

\paragraph{EquivCNP ($E$)} For the EquivCNP ($E$) model, we use a steerable $E$-equivariant CNN architecture consisting of nine layers, each with $C=16$ input / output channels. We use a kernel size of $k=9$ and stride of one. We discretise the continuous rotational symmetries to integer multiples of $2\pi / 8$. We use the natural discretisation of the $128 \times 128$ grid.

\paragraph{EquivCNP ($\widetilde{E}$)} We use the same architecture as the EquivCNP ($E$) with the same fixed input architecture as the ConvCNP ($\widetilde{T}$).

\paragraph{Training Details and Compute}
For all models, we optimise the model parameters using AdamW \citep{loshchilov2017decoupled} with a learning rate of $5\times 10^{-4}$. For the ConvCNP $T$ and $\widetilde{T}$, EquivCNP $T$ and $\widetilde{T}$, and non-equivariant PT-TNP models we use a batch size of 16, while for the PT-TNP $T$ and $\widetilde{T}$ models we use a batch size of 8. Gradient value magnitudes are clipped at 0.5. We train for a maximum of 500 epochs, with each epoch consisting of 16,000 datasets for a batch size of 16, and 8,000 datasets for a batch size of 8 (10,000 iterations per epoch). We evaluate the performance of each model on 80,000 test datasets. We train and evaluate all models on a single 11 GB NVIDIA GeForce RTX 2080 Ti GPU.

\paragraph{Additional Results} We show in \Cref{fig:smoke_plot_additional} a comparison between the predictive means, as well as the absolute difference between them and the ground-truth (GT), for all the models in \Cref{tab:gp-results}. For PT-TNP and ConvCNP, the predictions of the equivariant models are more blurry, whereas the approximately equivariant models better capture the detail surrounding the obstacle or the flow boundary. For the EquivCNP we did not observe a significant difference between the equivariant and approximately equivariant model.


%
\begin{figure}[ht]\captionsetup[subfigure]{labelformat=empty}
    \centering
    \begin{subfigure}[b]{0.10\textwidth}
        \centering
        \caption{GT.}
        \includegraphics[width=\textwidth]{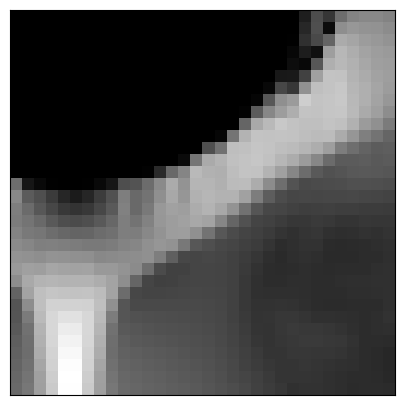}
    \end{subfigure}
    \hfill
    \begin{subfigure}[b]{0.3125\textwidth}
        \centering
        \caption{PT-TNP ($\varnothing$, $T$, and $\widetilde{T}$).}
        \includegraphics[width=0.32\textwidth]{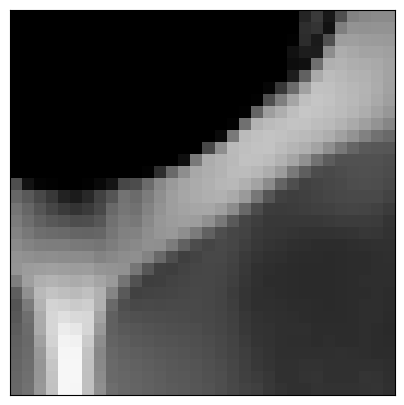}
        \includegraphics[width=0.32\textwidth]{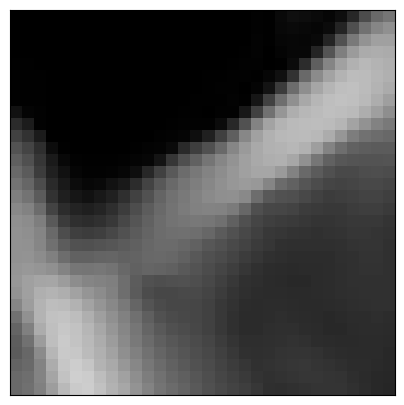}
        \includegraphics[width=0.32\textwidth]{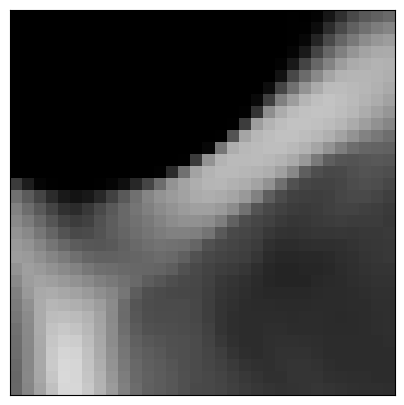}
    \end{subfigure}
    \hfill
    \begin{subfigure}[b]{0.3125\textwidth}
        \centering
        \caption{ConvCNP ($T$ and $\widetilde{T}$).}
        \includegraphics[width=0.32\textwidth]{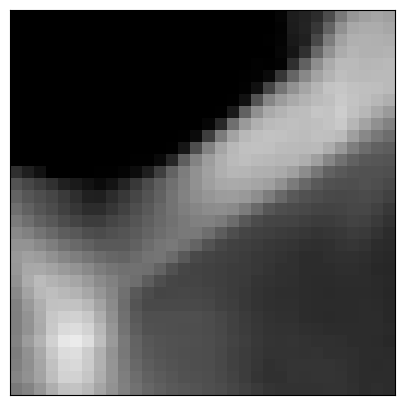}
        \includegraphics[width=0.32\textwidth]{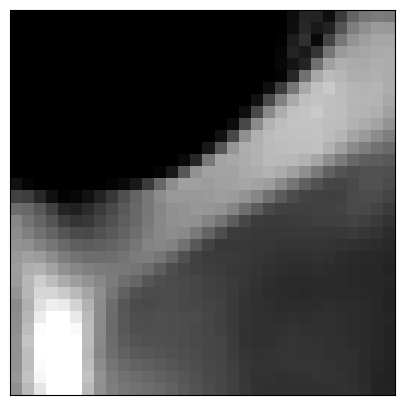}
        \includegraphics[width=0.32\textwidth]{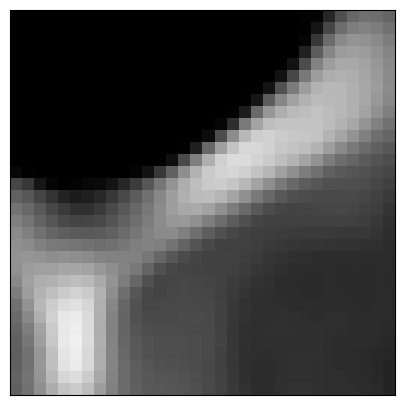}
    \end{subfigure}
    \hfill
    \begin{subfigure}[b]{0.208\textwidth}
        \centering
        \caption{EquivCNP ($E$ and $\widetilde{E}$).}
        \includegraphics[width=0.48\textwidth]{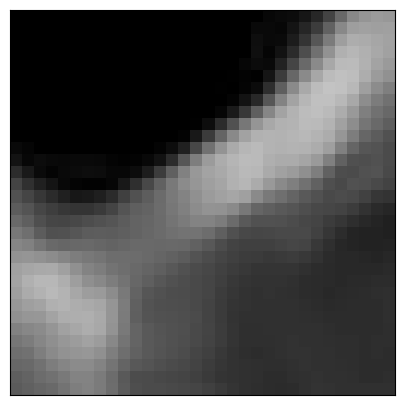}
        \includegraphics[width=0.48\textwidth]{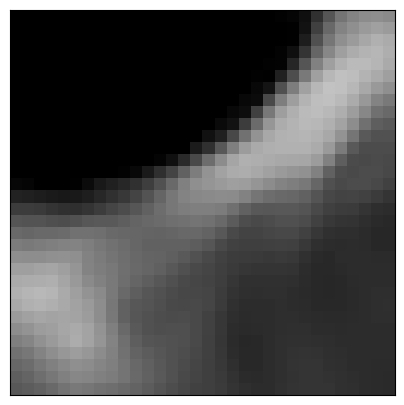}
    \end{subfigure}
    \hfill
    \begin{subfigure}[b]{0.030\textwidth}
        \centering
        \includegraphics[width=0.55\textwidth]{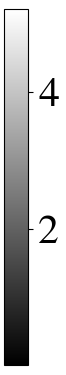}
    \end{subfigure}
    \hfill
    %
    \begin{subfigure}[b]{0.1\textwidth}
        \centering
        \caption{Context.}
        \includegraphics[width=\textwidth]{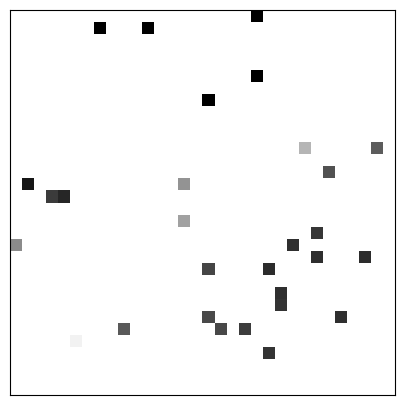}
    \end{subfigure}
    \hfill
    \begin{subfigure}[b]{0.3125\textwidth}
        \centering
        \includegraphics[width=0.32\textwidth]{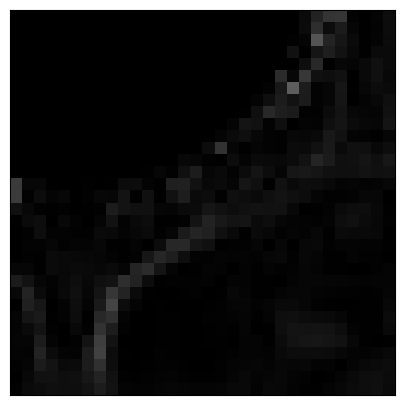}
        \includegraphics[width=0.32\textwidth]{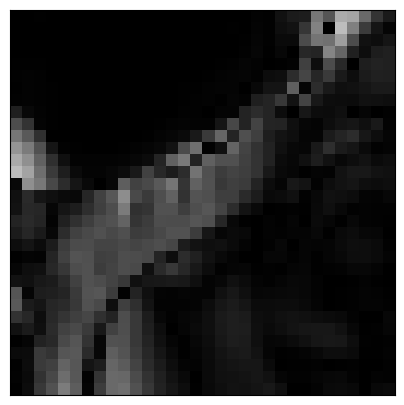}
        \includegraphics[width=0.32\textwidth]{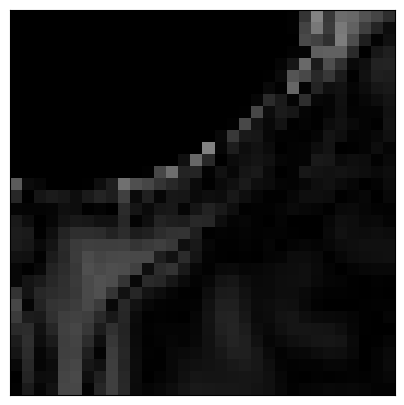}
    \end{subfigure}
    \hfill
    \begin{subfigure}[b]{0.3125\textwidth}
        \centering
        \includegraphics[width=0.32\textwidth]{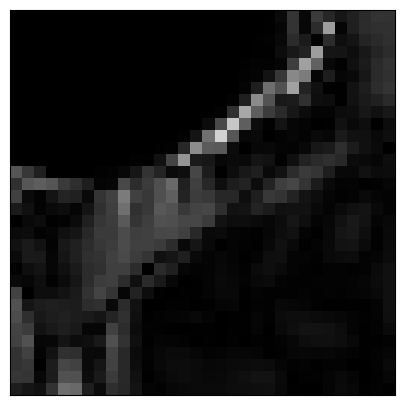}
        \includegraphics[width=0.32\textwidth]{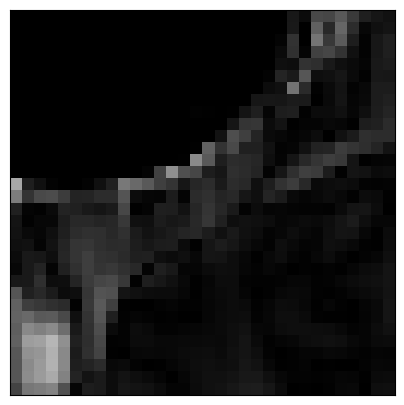}
        \includegraphics[width=0.32\textwidth]{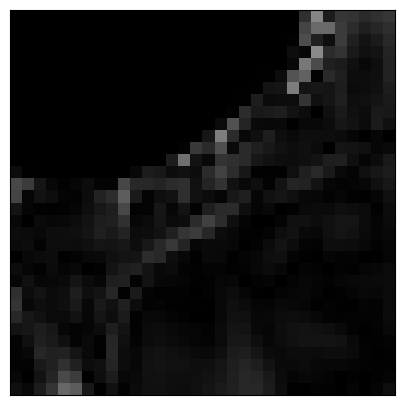}
    \end{subfigure}
    \hfill
    \begin{subfigure}[b]{0.208\textwidth}
        \centering
        \includegraphics[width=0.48\textwidth]{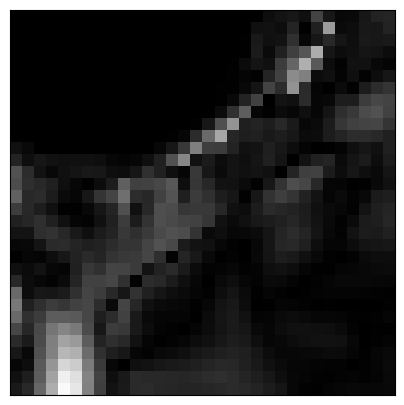}
        \includegraphics[width=0.48\textwidth]{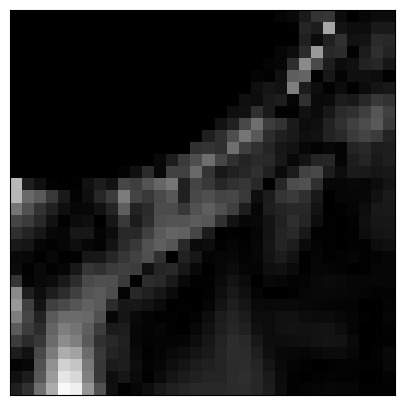}
    \end{subfigure}
    \hfill
    \begin{subfigure}[b]{0.030\textwidth}
        \centering
        \includegraphics[width=0.55\textwidth]{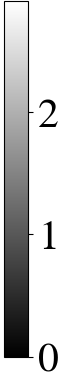}
    \end{subfigure}
    \caption{A comparison between the predictive distributions of the equivariant and approximately equivariant versions of the three classes of models: PT-TNP, ConvCNP, and EquivCNP. From left to right we show: the ground-truth (GT), the non-equivariant PT-TNP, PT-TNP ($T$), PT-TNP ($\widetilde{T}$), ConvCNP ($T$), ConvCNP ($\widetilde{T}$), RelaxedConvCNP ($\widetilde{T}$), EquivCNP ($E$), and EquivCNP ($\widetilde{E}$). The top row shows the mean of the predictions, while the bottom row shows the absolute difference between the predicted mean of each model and the ground-truth.}
    \label{fig:smoke_plot_additional}
\end{figure}

\subsection{Environmental Data}
\label{subapp:env_data}
The environmental dataset consists of surface air temperatures derived from the fifth generation of the European Centre for Medium-Range Weather Forecasts (ECMWF) atmospheric reanalyses (ERA5) \citep{cccs2020}. The data has a latitudinal and longitudinal resolution of $0.5^\circ$, and temporal resolution of an hour. We consider data collected in 2018 and 2019 from regions in Europe (latitude / longitude range of $[35^{\circ},\ 60^{\circ}]$ / $[10^{\circ},\ 45^{\circ}]$) and in the US (latitude / longitude range of $[-120^{\circ},\ -80^{\circ}]$ / $[30^{\circ},\ 50^{\circ}]$). In both the 2-D and 4-D experiment, we train on Europe's 2018 data and test on both Europe's and the US's 2019 data.

For the 2-D experiment, the inputs consist of latitude and longitude values. Individual datasets are obtained by sub-sampling the larger regions, with each dataset consisting of a $[32, 32]$ grid spanning $16^{\circ}$ across each axis. For the 4-D experiment, the inputs consist of latitude and longitude values, as well as time and surface elevation. Each dataset consists of a $[4, 16, 16]$ grid spanning $8^{\circ}$ across each axis and $4$ days. In both experiments, each dataset consists of a maximum of $N = 1024$ datapoints, from which the number of context points are sampled according to $N_c \sim \mcU\{\lfloor\frac{N}{100}\rfloor, \lfloor\frac{N}{3}\rfloor\}$, with the remaining set as target points.

For all models, we use a decoder consisting of an MLP with two hidden layers of dimension $D_z$. The decoder parameterises the mean and pre-softplus variance of a Gaussian likelihood with heterogeneous noise. Model specific architectures are as follows:

\paragraph{PT-TNP}
Same as \Cref{subapp:2d-smoke}.

\paragraph{PT-TNP ($T$)}
Same as \Cref{subapp:2d-smoke}.

\paragraph{PT-TNP ($\widetilde{T}$)}
Same as \Cref{subapp:2d-smoke} with 128 pseudo-tokens and fixed inputs zeroed outside 2018 and the latitude / longitude range of $[35^{\circ},\ 60^{\circ}]$ / $[10^{\circ},\ 45^{\circ}]$.

\paragraph{ConvCNP ($T$)}
Same as \Cref{subapp:2d-smoke}.

\paragraph{ConvCNP ($\widetilde{T}$)}
Same as \Cref{subapp:2d-smoke} with fixed inputs zeroed outside 2018 and the latitude / longitude range of $[35^{\circ},\ 60^{\circ}]$ / $[10^{\circ},\ 45^{\circ}]$.

\paragraph{RelaxedConvCNP ($\widetilde{T}$)}
Same as \Cref{subapp:2d-smoke} with fixed inputs zeroed outside 2018 and the latitude / longitude range of $[35^{\circ},\ 60^{\circ}]$ / $[10^{\circ},\ 45^{\circ}]$.

\paragraph{EquivCNP ($E$)}
Same as \Cref{subapp:2d-smoke}.

\paragraph{EquivCNP ($\widetilde{E}$)}
Same as \Cref{subapp:2d-smoke} with fixed inputs zeroed outside 2018 and the latitude / longitude range of $[35^{\circ},\ 60^{\circ}]$ / $[10^{\circ},\ 45^{\circ}]$.

\paragraph{Training Details and Compute}
For all models, we optimise the model parameters using AdamW \citep{loshchilov2017decoupled} with a learning rate of $5\times 10^{-4}$ and batch size of 16. Gradient value magnitudes are clipped at 0.5. We train for a maximum of 500 epochs, with each epoch consisting of 16,000 datasets (10,000 iterations per epoch). We evaluate the performance of each model on 16,000 test datasets. We train and evaluate all models on a single 11 GB NVIDIA GeForce RTX 2080 Ti GPU.

\paragraph{Additional Results}
To demonstrate the importance of dropping out the fixed inputs during training with finite probability, we compare the performance of two ConvCNP ($\widetilde{T}$) models on the 2-D experiment in \Cref{tab:dropout_comparison}: one with a dropout probability of $0.0$, and the other with $0.5$. Due to limited time, we were only able to train each model for 300 epochs (rather than for 500 epochs, hence the difference in results for the ConvCNP ($\widetilde{T}$) model with dropout probability of $p=0.5$ to those shown in \Cref{tab:2d-results}). Nonetheless, we observe that having a finite dropout probability is important for the model to be able to generalise OOD (i.e.\ the US).

\begin{table}[htb]
    \centering
    \caption{Average test log-likelihoods (\textcolor{OliveGreen}{$\*\uparrow$}) for the 2-D environmental regression experiment. $p$ denotes the probability of dropping out the fixed inputs during training.}
    \label{tab:dropout_comparison}
    \begin{tabular}{r cc}
    \toprule
         Model & Europe (\textcolor{OliveGreen}{$\*\uparrow$}) & US (\textcolor{OliveGreen}{$\*\uparrow$}) \\
         \midrule
         ConvCNP ($\widetilde{T}$, $p=0.0$) & $1.19 \pm 0.01$ & $-0.51 \pm 0.02$ \\
         ConvCNP ($\widetilde{T}$, $p=0.5$) & $1.16 \pm 0.01$ & $0.16 \pm 0.02$ \\
         \bottomrule
    \end{tabular}
\end{table}

\paragraph{Log-likelihood of context data} We perform the same assessment as in \cref{subapp:1d-regression} to check whether the model is able to accurately reconstruct the context data.

\begin{table}
    \centering
    \caption{Average test log-likelihoods (\textcolor{OliveGreen}{$\*\uparrow$}) for the 2-D environmental regression experiment when tested on context sets.}
    \small
    \begin{tabular}{rcc}
        \toprule
         & \multicolumn{2}{c}{2-D Regression} \\
         \cmidrule{2-3} 
         Model & Europe (\textcolor{OliveGreen}{$\*\uparrow$}) & US (\textcolor{OliveGreen}{$\*\uparrow$}) \\
         \midrule
         PT-TNP & $1.74 \pm 0.01$ & $-$ \\
         PT-TNP ($T$) & $1.66 \pm 0.01$ & $1.28 \pm 0.01$ \\
         PT-TNP ($\widetilde{T}$) & $1.76 \pm 0.01$ & $1.47 \pm 0.01$ \\
         \midrule
         ConvCNP ($T$) & $1.20 \pm 0.02$ & $0.34 \pm 0.02$ \\
         ConvCNP ($\widetilde{T}$) & $1.50 \pm 0.01$ & $0.97 \pm 0.01$ \\
         RelaxedConvCNP ($\widetilde{T}$) & $1.29 \pm 0.01$ & $0.86 \pm 0.01$ \\
         \midrule
         EquivCNP ($E$) & $2.03 \pm 0.01$ & $1.76 \pm 0.01$ \\
         EquivCNP ($\widetilde{E}$) & $2.05 \pm 0.01$ & $1.69 \pm 0.01$ \\
         \bottomrule
    \end{tabular}
    \label{tab:2d-results-context}
\end{table}

The results in \cref{tab:2d-results-context} show that the models have good performance when tested on the context sets, achieving better log-likelihoods than in \cref{tab:2d-results}, where the models are tested on target sets.

\paragraph{Analysis of equivariance deviation} We repeat the equivariance deviation analysis from \cref{subapp:1d-regression} on the 2-D environmental regression dataset. In this case we use the $L_2$ instead of the $L_1$ norm. Note that we only compute this on Europe, since the predictions on US are obtained by zeroing-out the basis function, leading to an equivariance deivation of $0$. The results are shown in \cref{tab:2d-env-results-equiv-deviation}.

\begin{table}[h]
    \centering
     \caption{Equivariance deviation ($\Delta_{\text{equiv}}$) of the approximately equivariant models in the 2-D environmental regression experiment.}
     \small
    \label{tab:2d-env-results-equiv-deviation}
    \begin{tabular}{rc}
        \toprule
        Model &  $\Delta_{\text{equiv}}$\\
        \midrule 
         PT-TNP ($\widetilde{T}$) & $0.0406 \pm 0.0005$\\ 
        ConvCNP ($\widetilde{T}$) & $0.00237 \pm 0.0004$\\
        RelaxedConvCNP ($\widetilde{T}$) & $0.0239 \pm 0.0004$\\
        EquivCNP ($\widetilde{E}$) & $0.0242 \pm 0.0005$\\
        \bottomrule
    \end{tabular}
\end{table}

All equivariance deivations are between $2-4\%$, indicating that the models only slightly deviate from the equivariant prediction. However, as shown in \cref{fig:2d_predictions}, this deviation allows them to capture the symmetry-breaking components of the weather dataset, with one important such component being the topography.

\paragraph{Analysis of number of fixed inputs} Finally, we investigate the influence of the number of fixed inputs $B$ on the performance of the model. We vary the number of fixed inputs used in the ConvCNP ($\widetilde{T}$) from the 2-D regression experiment. The fixed inputs are linearly transformed to $16$ inputs, which are then summed together with the input into the CNN.

The results are shown in \cref{tab:2d-env-results-fixed-inputs}, where we see that, similarly to the 1-D GP experiment, the largest gain in performance is obtained by going from $0$ to a single fixed input. Using more than $1$ fixed input provides diminishing gains, but, importantly, never hurts performance. The saturation in performance for $B \geq 1$ is expected in this dataset, given that the information that is primarily missing is the topography. However, in practice, given that the performance of the model does not decrease with increasing $B$ (i.e.\ the model learns to ignore unnecessary fixed inputs), one can choose a `sufficiently high' value for $B$.

\begin{table}[h]
    \centering
    \caption{Average test log-likelihoods (\textcolor{OliveGreen}{$\*\uparrow$}) of the ConvCNP ($\widetilde{T}$) with different number of additional fixed inputs for the 2-D environmental regression experiment. We show the results when the models are tested on Europe.}
    \label{tab:2d-env-results-fixed-inputs}
    \begin{tabular}{ccccccc}
    \toprule
    & \multicolumn{6}{c}{No. fixed inputs} \\
    \cmidrule{2-7}
    Region & 0 & 1 & 2 & 4 & 8 & 16 \\
    \midrule
    Europe & $1.11 \pm 0.01$ & $1.19 \pm 0.01$ & $1.19 \pm 0.01$ & $1.20 \pm 0.01$ & $1.20 \pm 0.01$ & $1.20 \pm 0.01$ \\
    \bottomrule
    \end{tabular}
\end{table}

\end{document}